\documentclass{article} 

\PassOptionsToPackage{dvipsnames,table}{xcolor}

\IfFileExists{iclr2026_conference.sty}{
  \usepackage{iclr2026_conference,times}
}{
  \usepackage{times}
}
\IfFileExists{math_commands.tex}{

\usepackage{amsmath,amsfonts,bm}

\def\eqref#1{equation~\ref{#1}}

\def\1{\bm{1}}

\def\eps{{\epsilon}}

\DeclareMathAlphabet{\mathsfit}{\encodingdefault}{\sfdefault}{m}{sl}
\SetMathAlphabet{\mathsfit}{bold}{\encodingdefault}{\sfdefault}{bx}{n}

}{}

\usepackage{booktabs}
\usepackage{tabularx}
\usepackage{pifont}
\usepackage{thmtools}
\newcommand{\cmark}{\textcolor{green!60!black}{\ding{51}}}
\newcommand{\xmark}{\textcolor{red!70!black}{\ding{55}}}
\usepackage[table]{xcolor}
  \definecolor{takeawayframe}{HTML}{2F5C8A}
  \definecolor{takeawayback}{HTML}{F2F6FB}

\usepackage{amsmath,amssymb,amsthm,mathtools,bm}
\usepackage{hyperref}
\usepackage{url}
\usepackage{titletoc}
\usepackage{enumitem}
\usepackage{xspace}
\usepackage{fontawesome5}
\makeatletter
\newcommand{\resetwrappar}{%
  \par
  \let\par\@@par
}
\makeatother
\usepackage{algorithm}
\usepackage{algpseudocode}

\usepackage{multirow}

\usepackage{wrapfig}

\usepackage[most]{tcolorbox}

\definecolor{targetblue}{RGB}{55,95,145}

\newtcolorbox{targetriskbox}{
    enhanced,
    breakable,
    colback=targetblue!4,
    colframe=targetblue!65,
    boxrule=0.6pt,
    arc=2pt,
    left=6pt,
    right=6pt,
    top=6pt,
    bottom=6pt,
    before skip=8pt,
    after skip=8pt,
    title={Target generative risk},
    fonttitle=\bfseries,
    coltitle=black
}

\definecolor{limitationred}{RGB}{250,65,15}
\newtcolorbox{limitationbox}{
    enhanced,
    breakable,
    colback=limitationred!4,
    colframe=limitationred!65,
    boxrule=0.6pt,
    arc=2pt,
    left=6pt,
    right=6pt,
    top=6pt,
    bottom=6pt,
    before skip=8pt,
    after skip=8pt,
    title={Limitations of single-adjustment learners},
    fonttitle=\bfseries,
    coltitle=black
}

\newcommand{\ind}{\mathbf 1}
\newcommand{\dd}{\,\mathrm d}
\def\eps{\varepsilon}
\newcommand{\cG}{\mathcal G}

\newcommand{\Pb}{\mathbb P_n}
\newcommand{\bxi}{\widehat \xi}
\newcommand{\bpi}{\widehat \pi}
\newcommand{\bmhat}{\widehat m}
\newcommand{\bR}{\widehat R}
\newcommand{\br}{\widehat r}
\newcommand{\barat}{\bar a_{t:t+\tau-1}}
\newcommand{\Ybarat}{Y_{t+\tau}[\barat]}

\newtheorem{assumption}{Assumption}
\newtheorem{proposition}{Proposition}
\newtheorem{remark}{Remark}

\newtheorem{lemma}{Lemma}
\newtheorem{corollary}{Corollary}

\usepackage{tikz}

\newcommand*\circledgreen[1]{%
\tikz[baseline=(char.base)]{
  \node[shape=circle, draw=ForestGreen!60, fill=ForestGreen!10, thick, inner sep=1pt] (char) {\scriptsize\textsf{#1}};
}}

\newcommand*\circledblue[1]{%
\tikz[baseline=(char.base)]{
  \node[shape=circle, draw=NavyBlue!60, fill=NavyBlue!10, thick, inner sep=1pt] (char) {\scriptsize\textsf{#1}};
}}
\usepackage{pifont}

\usepackage{etoolbox}

\title{\OrthoGen: A Generative Orthogonal Learner for Time-Varying Treatments}

\author{
\normalfont
\textbf{Tomàs Garriga}$^{1,2,3*}$, 
\textbf{Valentyn Melnychuk}$^{4,5}$,
\textbf{Konstantin Hess}$^{4,5}$, \\[1.5mm]
\textbf{Eduard Serrahima de Cambra}$^{1}$,
\textbf{Axel Brando}$^{2}$,
\textbf{Gerard Sanz}$^{1}$,
\textbf{Stefan Feuerriegel}$^{4,5}$
\\[2.5mm]
$^{1}$Novartis
\\
$^{2}$Barcelona Supercomputing Center
\\
$^{3}$Universitat Politècnica de Catalunya \\
$^{4}$LMU Munich
\\
$^{5}$Munich Center for Machine Learning
\\[1mm]
$^{*}$Corresponding author: 
\texttt{tomas.garriga\_dicuzzo@novartis.com}
}

\iclrfinalcopy 

\newcommand{\OrthoGen}{\textsc{OrthoGen}\xspace}

\begin{document}
\maketitle

\begin{abstract}
Estimating conditional distributional potential outcomes (CDPOs) \emph{over time} is important in medicine (e.g., to estimate patient-specific risks under different treatment sequences). However, this task is challenging because of time-varying confounding, yet existing adjustment strategies for this task are limited. In this paper, we aim to learn CDPOs under time-varying treatments using flexible generative models. Our contributions are two-fold. (1)~We introduce a tailored adjustment strategy for our setting, namely, \emph{generative recursive g-computation}. Our adjustment strategy recursively propagates full conditional outcome distributions rather than conditional means, modeling the variables of interest directly rather than full trajectories. Building on our adjustment strategy, we formulate simple generative learners for CDPO estimation. However, these learners can be sensitive to nuisance estimation errors, which motivates an orthogonal learner. (2)~We thus introduce \OrthoGen, a Neyman-orthogonal and doubly robust generative learner. Importantly, we show that \OrthoGen further achieves rate double robustness and quasi-oracle efficiency under suitable conditions. Our learners are flexible and can be instantiated with different generative backbones (e.g., normalizing flows and diffusion models). Across experiments with synthetic, semi-synthetic and real-world datasets, we find that \OrthoGen is highly effective. To the best of our knowledge, we are the first to propose a generative orthogonal learner for estimating CDPOs under time-varying treatments.

\end{abstract}

\section{Introduction}


Estimating the causal effects of time-varying treatments is important in medicine \citep{feuerriegel2024causal}. For example, in cancer care, chemotherapy regimens are updated across several treatment cycles \citep{press2016response}, while, in diabetes care, insulin doses are repeatedly adjusted in response to a patient's health condition \citep{meneghini2007usage}. Clinicians therefore need to understand how different treatment sequences affect the patient-level probability of future outcomes, such as tumor size, hypoglycemia, or toxicity. Here, \textbf{\textit{we aim to learn conditional distributional potential outcomes (CDPOs) in time-varying settings}}, which characterize the distribution of potential outcomes under a given treatment sequence, conditional on the observed patient history (see Fig. \ref{fig:distributions_example}).


A key challenge in this setting is \emph{time-varying confounding} \citep{robins2000marginal, frauen2025modelagnostic, hess2026igcnet}. Patient characteristics may change over time in response to earlier treatments and, in turn, affect both subsequent treatment decisions and future outcomes. As a result, adjusting only for the patient history observed at the prediction origin is generally insufficient; rather, \emph{adjustment strategies over time} 
are needed that account for the future treatment sequence. So far, the existing adjustment strategy for direct learning of CDPOs under time-varying treatments is inverse probability of treatment weighting (IPTW) \citep{mu2025counterfactual}. However, IPTW can suffer from unstable weights under limited overlap, especially at large horizons. Thus, the available adjustment strategies for direct generative CDPO estimation in time-varying settings remain limited, and we therefore develop alternative adjustment strategies. Our strategies avoid full simulation of covariate trajectories and model instead the outcomes of interest directly.





Existing methods have important gaps along three dimensions. \textbf{(1)}~\emph{Static vs. time-varying treatments.} Several generative methods learn CDPOs \citep{yoon2018ganite,vanderschueren2023noflite,ma2024diffpo,melnychuk2025gdr,luedtke2025doublegen}. However, these are restricted to static treatment and therefore do \textbf{not} address \emph{time-varying} treatment sequences. \textbf{(2)}~\emph{Mean outcome vs. CDPO.} Many methods for time-varying treatments learn conditional mean outcomes \citep[e.g.,][]{Bica2020Estimating,melnychuk2022causal,frauen2025modelagnostic,hess2024bayesian,hess2026igcnet}, but \textbf{not} the \emph{full} outcome distribution. \textbf{(3)}~\emph{Choice of adjustment strategy.} To the best of our knowledge, there is only one generative method for CDPOs under time-varying treatments: \citet{mu2025counterfactual} combine IPTW with an expert-guided diffusion model. However, IPTW can suffer from unstable weights and high variance \citep{frauen2025modelagnostic}. Thus, learners for CDPOs over time based on potentially more stable alternative adjustment strategies, such as g-computation, as well as learners offering favorable properties such as double robustness, are missing.



Here, \emph{we aim to learn CDPOs under time-varying treatments using flexible generative models}. To address this task, our contributions are two-fold.


\circledblue{1}\,We introduce a tailored adjustment strategy for our setting, which we call \emph{generative recursive g-computation}. Our approach extends standard iterative g-computation \citep{robins1986new} from conditional mean outcomes to \emph{full} conditional outcome distributions. The difference is that, instead of recursively propagating conditional expectations \citep{hess2026igcnet}, we recursively propagate full conditional outcome distributions through a sequence of generative models. Importantly, the recursive construction allows us to directly recover the target outcome distribution at the prediction origin, without simulating complete future covariate trajectories at inference time, which is particularly efficient for longitudinal settings. Building on our adjustment strategy, we develop simple generative learners for CDPO estimation, namely, plug-in and regression-adjusted learners. While straightforward, these learners can be sensitive to nuisance estimation errors, and we thus primarily use these learners to motivate an orthogonal learner ($\rightarrow$\,see \circledblue{2}).

\circledblue{2}\,We then develop \OrthoGen, a \textit{Neyman-orthogonal} and \textit{doubly robust} generative learner for CDPO estimation over time. For this, we combine our generative recursive g-computation with an additional propensity-based adjustment, which makes \OrthoGen first-order insensitive to nuisance estimation errors ($\rightarrow$ Theorem \ref{thm:longitudinal-orthogen-orthogonality}). \OrthoGen is flexible and can be instantiated with different generative backbones (e.g., normalizing flows and diffusion models). We further derive two favorable theoretical properties for \OrthoGen: (i)~\textit{quasi-oracle efficiency}, meaning that the target model can achieve the oracle learning rate even when the nuisance functions are estimated from observational data; and (ii)~\textit{rate double robustness}, meaning a slower convergence rate of one nuisance component can be compensated for by a faster convergence rate of the other, thus being robust to misspecification of either nuisance function ($\rightarrow$ Theorem \ref{thm:quasi-oracle}).

\begin{wrapfigure}{r}{0.65\textwidth}
    \centering
    \vspace{-0.3cm} \includegraphics[width=\linewidth]{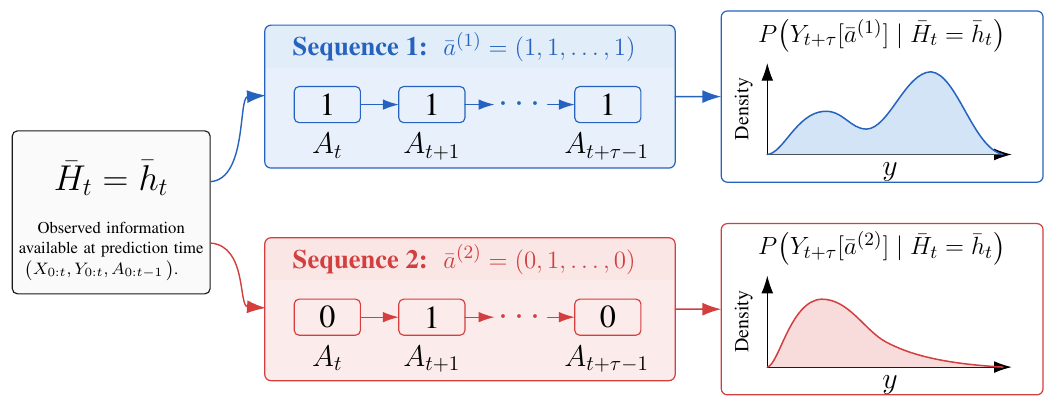}
    \vspace{-0.8cm} 
    \caption{\textbf{Task:} Estimating the CDPOs under different treatment sequences.}
    \vspace{-0.3cm} 

\label{fig:distributions_example}
\end{wrapfigure}


In summary, in this work:\footnote{Code is available at \url{https://github.com/tgarriga/OrthoGen}.}
\textbf{(1)}~We introduce generative recursive g-computation as a tailored adjustment strategy for CDPOs over time. \textbf{(2)}~We propose \OrthoGen, a doubly robust and Neyman-orthogonal generative learner for this task. We show theoretically that \OrthoGen achieves quasi-oracle efficiency and rate double robustness. \textbf{(3)}~We demonstrate the effectiveness of \OrthoGen using different generative backbones (i.e., normalizing flows and diffusion models) across various experiments. In particular, our experiments demonstrate the benefits of \OrthoGen over the IPTW-based learner.


\section{Related work\protect\footnote{We provide an extended related work in Appendix~\ref{app:extended-related-work}.}}

\textbf{(1)~Learning CDPOs in static vs. time-varying settings.} Several generative methods learn CDPOs in settings with a single static treatment, using different adjustment strategies and generative backbones \citep{yoon2018ganite,vanderschueren2023noflite,ma2024diffpo,luedtke2025doublegen}. This stream has recently culminated in a general class of learners for static settings called GDR-learners \citep{melnychuk2025gdr}. $\Rightarrow$ \emph{Our setting is different in that we consider a sequence of treatment decisions rather than a single treatment decision, which gives rise to \textbf{time-varying confounding} \citep{robins2000marginal, frauen2025modelagnostic} and therefore requires tailored adjustment strategies over time.}




\begin{wraptable}{r}{0.68\textwidth}
\vspace{-1.9\baselineskip}
\centering
\caption{Comparison of related approaches for direct learning of distributional potential outcomes
over time.}
\label{tab:related_work}

\scriptsize
\setlength{\tabcolsep}{1.5pt}
\renewcommand{\arraystretch}{1.05}

\begin{tabularx}{\linewidth}{
    >{\raggedright\arraybackslash}X
    >{\raggedright\arraybackslash}p{1.4cm}
    >{\centering\arraybackslash}p{1.85cm}
    >{\raggedright\arraybackslash}p{1.65cm}
    >{\centering\arraybackslash}p{1.75cm}
}
\toprule
\textbf{Method}
& \textbf{Adjustment}
& \textbf{Backbone}
& \textbf{Modeling target}
& \textbf{Neyman-orthogonality} \\
\midrule

\citet{wu2024counterfactual}
& IPTW
& CVAE / DM
& \xmark\ Marginal DPO
& \xmark \\

\citet{mu2025counterfactual}
& IPTW
& DM
& \cmark\ CDPO
& \xmark \\

\midrule

\textbf{PI} (\textit{ours})
& G.R. g-comp.
& Model agnostic
& \cmark\ CDPO
& \xmark \\

\textbf{RA} (\textit{ours})
& G.R. g-comp.
& Model agnostic
& \cmark\ CDPO
& \xmark \\

\textbf{\OrthoGen} (\textit{ours})
& DR
& Model agnostic
& \cmark\ CDPO
& \cmark \\

\bottomrule
\multicolumn{5}{@{}p{\dimexpr\linewidth-2\tabcolsep\relax}@{}}{
\scriptsize
Notes: G.R. g-comp. = generative recursive g-computation;
CVAE = conditional variational autoencoder; DM = diffusion model;
DPO = distributional potential outcome;
PI = plug-in;
RA = regression-adjusted;
DR = doubly robust.
}
\end{tabularx}
\vspace{-0.5\baselineskip}
\end{wraptable}

\textbf{(2)~Time-varying treatments with mean outcome vs. CDPO.}
A broad literature stream develops methods for causal estimation under time-varying treatments with different adjustment strategies but primarily targets the so-called conditional average potential outcomes (CAPOs), i.e., the conditional mean of the potential outcome under a given treatment sequence \citep[e.g.,][]{RMSN,Bica2020Estimating,melnychuk2022causal,frauen2025modelagnostic}. For CAPOs, there are some efforts to break the curse of dimensionality that is inherent to full simulation g-computation by identifying recursive procedures \citep{hess2026igcnet}, but these efforts are limited to CAPOs and are \emph{not} applicable to generative approaches.  $\Rightarrow$~\emph{The aforementioned methods target a different estimand: the conditional mean outcomes or treatment effects, rather than the \textbf{full} conditional distribution of potential outcomes as in our work.}



A different stream models the full longitudinal data-generating process (DGP) to enable interventional sampling by simulating complete trajectories \citep[e.g.,][]{gnet,pmlr-v252-xiong24a,pmlr-v259-deng25a,yeom2026estimating,wu2026doflow}. However, these works have a different modeling target: they aim to model and simulate the full DGP rather than directly learning the CDPO of the target outcome.

\textbf{(3)~Choice of adjustment strategies for CDPOs over time.} Only a few methods directly learn CDPOs under time-varying treatments with explicit confounding adjustment. \citet{wu2024counterfactual} use IPTW to learn the \emph{marginal} potential outcome distribution, which is a related but \emph{different} estimand. Closest to our work is \citet{mu2025counterfactual}, who combine IPTW with an expert-guided diffusion model to directly target CDPOs. However, IPTW can suffer from high variance and, most importantly, it is sensitive to nuisance misspecification. $\Rightarrow$~\emph{The only existing method relies on IPTW adjustment. In contrast, alternative and potentially more favorable adjustment strategies (e.g., g-computation) are missing for our task. As a result, generative learners with favorable learning properties (e.g., Neyman-orthogonality and double robustness) are also missing.}


\textbf{Research gap.} There are important gaps in the literature for estimating CDPOs through generative learners under time-varying treatments (see Table \ref{tab:related_work}). To the best of our knowledge, \OrthoGen is the first learner for this task that is Neyman-orthogonal and doubly robust.



\section{Problem setting}
\label{sec:problem-setting}

\textbf{Notation.}
Capital letters denote random variables and lowercase letters their realizations. We use $P(Z)$ to denote the distribution of a random variable $Z$, whereas $P(Z=z)$ denotes its density or probability mass function evaluated at $z$. Expectations are denoted by $\mathbb E[\cdot]$. For $n$ observations, we write $\Pb[f]=n^{-1}\sum_{i=1}^n f_i$ for the empirical average of a function $f$. When needed, $\mathbb P_0$ and $\mathbb E_0$ denote probability and expectation under the true observed-data distribution, and $\|f\|_{L^p(\mathbb P_0)}
= \left(\mathbb E_0\left[|f|^p\right]\right)^{1/p}$ denotes the corresponding $L^p$ norm. Estimated quantities are indicated by a hat, as in $\widehat f$. For time series, we write
$\bar U_t=U_{0:t}=(U_0,\ldots,U_t)$.

\textbf{Setup.}
We follow the standard setup for causal effect estimation with time-varying treatments \citep[e.g.,][]{Bica2020Estimating, melnychuk2022causal, frauen2025modelagnostic}. We observe $n$ i.i.d. longitudinal trajectories. For patient $i$, the observed trajectory contains outcomes $Y_t^{(i)}\in\mathbb R^{d_y}$, time-varying covariates $X_t^{(i)}\in\mathbb R^{d_x}$, and treatments $A_t^{(i)}\in\mathcal A$ over time steps $t\in\{0,\ldots,T\}$.
At time $t$, we write the observed pre-treatment history as $\bar H_t
=
\left(
\bar Y_t,\bar X_t,\bar A_{t-1}
\right)$.

\textbf{Estimand.} We fix a prediction origin $t$ and a prediction horizon $\tau\geq 1$. Within the intervention window, $\delta=0,\ldots,\tau-1$ denotes the relative time offset, and the corresponding absolute treatment time is $t+\delta$. We aim to learn the CDPO for $\tau$ time steps ahead given the interventional treatment sequence $\barat=(a_t,\ldots,a_{t+\tau-1})$. We make use of the potential outcomes framework for time-varying settings \citep{robins2000marginal, hernanrobins2025whatif}. Specifically, $Y_{t+\tau}[a_{t+\delta:t+\tau-1}]$ is the potential outcome at future time step $t+\tau$ under the remaining treatment sequence from relative offset $\delta$. Our aim is to learn the CDPO given by
\begin{equation}
\label{eq:target-law}
P\left( \Ybarat \mid \bar{H}_t=\bar{h}_t \right).
\end{equation}


\textbf{Identifiability.}
In order to ensure identifiability, we follow the standard assumptions for time-varying treatments \citep{robins1986new, hernanrobins2025whatif, frauen2025modelagnostic, hess2026igcnet}: (i)~\textit{Recursive consistency/composition:} for every $\delta=0,\ldots,\tau-1$ and every treatment sequence $a_{t+\delta:t+\tau-1}$ under consideration, if $A_{t+\delta}=a_{t+\delta}$, then
$Y_{t+\tau}[a_{t+\delta:t+\tau-1}]
=Y_{t+\tau}[a_{t+\delta+1:t+\tau-1}]$. In particular, if $A_{t:t+\tau-1}=a_{t:t+\tau-1}$, then $\Ybarat=Y_{t+\tau}$. (ii)~\textit{Positivity:} for every $\bar h_{t+\delta}$ in the support of $\bar H_{t+\delta}$, $P(A_{t+\delta}=a_{t+\delta}\mid\bar H_{t+\delta}=\bar h_{t+\delta})>0$, $\delta=0,\ldots,\tau-1$. (iii) \textit{Sequential exchangeability:} for every $\delta=0,\ldots,\tau-1$ and every treatment sequence $a_{t+\delta:t+\tau-1}$,
$A_{t+\delta}\perp Y_{t+\tau}[a_{t+\delta:t+\tau-1}]\mid\bar H_{t+\delta}$.

\textbf{The challenge of time-varying confounding.}
For a treatment sequence ($\tau>1$), potential outcomes estimation is challenging because confounders may arise \emph{after} the prediction origin $t$. A variable observed before a later treatment $A_{t+\delta}$, with $\delta>0$, may be affected by earlier treatments $A_t$, while also influencing both the assignment of $A_{t+\delta}$ and the terminal outcome $Y_{t+\tau}$ \citep{robins2000marginal, frauen2025modelagnostic, hess2026overlapweighted}. Then, adjusting only for the history $\bar H_t$ available at the prediction origin leaves later treatment decisions confounded. Sequential adjustment must instead account for the updated history $\bar H_{t+\delta}$ at each relative treatment step, even though these intermediate histories have not yet occurred when the counterfactual distribution is requested at time $t$.

Importantly, na{\i}ve history adjustment is biased: conditioning on $\bar H_t$ and selecting patients for which the observed future treatment sequence equals $\bar a$ generally does \textbf{not} recover the interventional distribution:
\begin{equation}
P\left( \Ybarat \mid \bar H_t=\bar h_t \right)
\neq
P\left( Y_{t+\tau} \mid \bar H_t=\bar h_t, A_{t:t+\tau-1}=a_{t:t+\tau-1} \right).
\label{eq:naive-history-adjustment}
\end{equation}
The left-hand side is our target, which does not coincide with the na{\"i}ve history adjustment on the right-hand side, which is thus \emph{biased}.


\begin{targetriskbox}
To estimate the CDPO under a treatment sequence $\bar a$ using a generative model, we seek the best projection of the true
counterfactual distribution $P(\Ybarat\mid\bar H_t=\bar h_t)$ in a predefined class of generative models $\cG$. We denote a candidate model by $g_{\bar a}\in\cG$ and define the score for an outcome $y$ conditional on the pre-interventional history $\bar h_t$ as
\begin{equation}
\ell_{g_{\bar a}}(y,\bar h_t)
=
\mathbb E_{Z\sim\varepsilon_z(\cdot\mid y,\bar h_t)}
\left[ \log g_{\bar a}(y,Z\mid\bar h_t) \right].
\label{eq:generative-score}
\end{equation}
This generic notation allows us to accommodate different generative backbones, such as likelihood-based normalizing flows and diffusion models, through their respective training objectives.\footnotemark


Accordingly, the optimal model within $\cG$ is the one that maximizes this oracle target risk
\begin{equation}
\mathcal L_{\bar a}(g_{\bar a})
=
\mathbb E
\left[
\ell_{g_{\bar a}}\left(\Ybarat,\bar H_t\right)
\right].
\label{eq:oracle-risk}
\end{equation}
That is, the model in $\cG$ that best approximates the true CDPO according to the chosen generative objective. 

\end{targetriskbox}\footnotetext{The above formulation assumes a score-maximization objective; it analogously applies to generative models formulated through loss minimization by replacing the corresponding maximization/minimization operators.}

This oracle risk provides a target that we use for all learners below. The challenge is that it depends on the unobserved counterfactual outcome $\Ybarat$ and therefore cannot be evaluated directly from observational data. 

\textbf{Outline.} \circledblue{1}\,Below, we first introduce \emph{generative recursive g-computation}, which is a tailored adjustment strategy for learning CDPOs under time-varying treatments (Sec.~\ref{sec:generative-recursive-gcomp}). \circledblue{2}\,Building on this adjustment strategy, we then introduce a plug-in (PI) learner and a regression-adjusted (RA) learner for CDPO estimation (Sec.~\ref{sec:single-learners}). \circledblue{3}\,Finally, we develop \OrthoGen and show that it is doubly robust and Neyman-orthogonal (Sec.~\ref{sec:orthogen}).

\section{Generative recursive g-computation}
\label{sec:generative-recursive-gcomp}

We first introduce \emph{generative recursive g-computation}, a tailored adjustment strategy for learning CDPOs over time. It builds on g-computation \citep{robins1986new}, which identifies interventional outcomes by integrating over the longitudinal data-generating process under a specified treatment sequence. We extend this principle from conditional mean outcomes to the \emph{full conditional outcome distribution} and derive a recursive generative formulation that propagates the required distributional information backward across treatment times. Rather than simulating complete future covariate trajectories, this recursion \textit{directly} targets the distribution relevant for the final outcome.


\textbf{Overview.} We develop \emph{generative recursive g-computation} in two steps. First, we define a backward recursion over conditional outcome distributions and show that it identifies the target CDPO at the prediction origin. Second, we show how this recursion can be fitted using generative models through backward pseudo-sample fitting. 




\textbf{\emph{Recursive} identification of the CDPO.} Our goal is to express the target CDPO at the prediction origin through a sequence of quantities that can be identified from the observed data distribution. We proceed backward from the final treatment decision: there, the distribution of the terminal outcome is directly observed conditional on the current history and treatment. We then recursively propagate this distribution backward across earlier treatment decisions by averaging over the subsequent histories that arise under the respective treatment (see Fig. \ref{fig:bg_gcomp}).

To formalize this recursion, we define $\{\xi_{t+\delta}^{\bar a}\}_{\delta=0}^{\tau-1}$, which we refer to as \emph{continuation densities}. Intuitively, $\xi_{t+\delta}^{\bar a}$ represents the distribution of the terminal outcome that remains to be propagated backward from relative treatment step $\delta$, incorporating the remaining treatment sequence $a_{t+\delta:t+\tau-1}$.


$\bullet$\,\emph{Recursion at the final treatment stage}. Let
\begin{align}
\xi_{t+\tau-1}^{\bar a}
(y\mid\bar h_{t+\tau-1})
=
P\!\left(
Y_{t+\tau}=y
\ \Big|\ 
\bar H_{t+\tau-1}=\bar h_{t+\tau-1},
A_{t+\tau-1}=a_{t+\tau-1}
\right).
\label{eq:terminal-stage-xi}
\end{align}
The terminal density can be estimated from the observed outcomes $Y_{t+\tau}$ of patients receiving $a_{t+\tau-1}$ at time $t+\tau-1$.

$\bullet$\,\emph{Recursive step from $t+\delta+1 \rightarrow t+\delta$}. Starting from the final treatment time, we move backward recursively. For $\delta=0,\ldots,\tau-2$, we define
\begin{align}
\xi_{t+\delta}^{\bar a}(y\mid\bar h_{t+\delta})
=
\mathbb E\left[
\xi_{t+\delta+1}^{\bar a}(y\mid\bar H_{t+\delta+1})
\ \Big|\ 
\bar H_{t+\delta}=\bar h_{t+\delta},
A_{t+\delta}=a_{t+\delta}
\right].
\label{eq:xi-recursion}
\end{align}
At each relative treatment step $\delta$, the successor density $\xi_{t+\delta+1}^{\bar a}$ already incorporates the future treatment values $a_{t+\delta+1},\ldots,a_{t+\tau-1}$. Hence, the recursion only requires conditioning on the current treatment $A_{t+\delta}=a_{t+\delta}$ and averages the successor distribution over the next history $\bar H_{t+\delta+1}$.


The above construction extends the recursive identification principle of longitudinal g-computation from conditional means \citep{hess2026igcnet} to full conditional outcome distributions. The following result shows that each continuation density identifies the distribution of the terminal potential outcome under the remaining treatment sequence and that $\xi_t^{\bar a}$ identifies the target CDPO in Eq.~\ref{eq:target-law}.

\begin{proposition}
\label{prop:recursive-gcomp-identification}
Under the causal assumptions from Sec.~\ref{sec:problem-setting}, for every $\delta=0,\ldots,\tau-1$, history $\bar h_{t+\delta}$, and outcome $y$, $\xi_{t+\delta}^{\bar a}(y\mid\bar h_{t+\delta}) = P\left( Y_{t+\tau}[a_{t+\delta:t+\tau-1}]=y \mid \bar H_{t+\delta}=\bar h_{t+\delta} \right)$. In particular, $\xi_t^{\bar a}(y\mid\bar h_t) = P\left( \Ybarat=y \mid \bar H_t=\bar h_t \right)$.
\end{proposition}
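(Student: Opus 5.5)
We prove the claim by backward induction on $\delta$, from $\delta=\tau-1$ down to $\delta=0$. The induction hypothesis at step $\delta+1$ is
\[
\xi_{t+\delta+1}^{\bar a}(y\mid\bar h_{t+\delta+1})=P\bigl(Y_{t+\tau}[a_{t+\delta+1:t+\tau-1}]=y\mid\bar H_{t+\delta+1}=\bar h_{t+\delta+1}\bigr)
\]
for all histories in the support. Throughout, we read densities as conditional laws: where convenient, we work with $P(\cdot\in B\mid\cdot)$ for measurable sets $B$ and pass to densities at the end, so that the argument covers both discrete and continuous $Y$.

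\textbf{Base case $\delta=\tau-1$.} By definition (Eq.~\ref{eq:terminal-stage-xi}), $\xi_{t+\tau-1}^{\bar a}$ is the law of $Y_{t+\tau}$ given $\bar H_{t+\tau-1}=\bar h$ and $A_{t+\tau-1}=a_{t+\tau-1}$. Positivity makes this conditioning event non-null. On the event $A_{t+\tau-1}=a_{t+\tau-1}$, consistency gives $Y_{t+\tau}=Y_{t+\tau}[a_{t+\tau-1}]$. The law therefore equals $P(Y_{t+\tau}[a_{t+\tau-1}]=y\mid\bar H_{t+\tau-1}=\bar h,A_{t+\tau-1}=a_{t+\tau-1})$. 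Sequential exchangeability at $\delta=\tau-1$ then lets us drop the conditioning on $A_{t+\tau-1}$, which yields the claim.

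\textbf{Inductive step.} Fix $\delta\le\tau-2$ and substitute the induction hypothesis into Eq.~\ref{eq:xi-recursion}. This gives
\[
\xi_{t+\delta}^{\bar a}(y\mid\bar h_{t+\delta})=\mathbb E\bigl[P(Y_{t+\tau}[a_{t+\delta+1:t+\tau-1}]=y\mid\bar H_{t+\delta+1})\mid\bar H_{t+\delta}=\bar h_{t+\delta},A_{t+\delta}=a_{t+\delta}\bigr].
\]
The event $\{\bar H_{t+\delta}=\bar h_{t+\delta},A_{t+\delta}=a_{t+\delta}\}$ is $\sigma(\bar H_{t+\delta+1})$-measurable, because $\bar H_{t+\delta+1}$ contains both $\bar H_{t+\delta}$ and $A_{t+\delta}$. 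The tower property therefore collapses the nested expectation to $P(Y_{t+\tau}[a_{t+\delta+1:t+\tau-1}]=y\mid\bar H_{t+\delta}=\bar h_{t+\delta},A_{t+\delta}=a_{t+\delta})$, and positivity again ensures this is well defined. Recursive consistency on $\{A_{t+\delta}=a_{t+\delta}\}$ replaces $Y_{t+\tau}[a_{t+\delta+1:t+\tau-1}]$ by $Y_{t+\tau}[a_{t+\delta:t+\tau-1}]$. Sequential exchangeability at step $\delta$ then removes the conditioning on $A_{t+\delta}$. The case $\delta=0$ gives the stated identity for $\xi_t^{\bar a}$, which is the target CDPO in Eq.~\ref{eq:target-law}.

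\textbf{Main obstacle.} The delicate point is the tower-property step. It needs the nested conditional laws to be versions that are well defined on the relevant supports. We handle this with regular conditional distributions: the identity holds $P$-almost everywhere in $\bar h_{t+\delta}$, and positivity guarantees that conditioning on $A_{t+\delta}=a_{t+\delta}$ does not move us off the support of $\bar H_{t+\delta+1}$ on which the hypothesis was established. A secondary point is that exchangeability must apply to the conditional law of the potential outcome, not just its mean. The paper's assumption (iii) is a full independence statement, so this holds.
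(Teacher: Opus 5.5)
Your proposal is correct and matches the paper's proof: backward induction from $\delta=\tau-1$. The base case uses consistency and sequential exchangeability, and the inductive step uses the recursion, the induction hypothesis, the tower property (since $\bar H_{t+\delta+1}$ contains $(\bar H_{t+\delta},A_{t+\delta})$), recursive consistency and exchangeability. The only difference is that the paper applies recursive consistency inside the inner conditional law given $\bar H_{t+\delta+1}$ before collapsing via the tower property, whereas you collapse first and then apply consistency on $\{A_{t+\delta}=a_{t+\delta}\}$; the two orderings are equally valid.
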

\vspace{-0.35cm}
\proof{ See Appendix ~\ref{app:generative-recursive-gcomp}.}

\textbf{Fitting the continuation densities.} 
The recursion in Eq. \ref{eq:xi-recursion} defines the continuation densities, but does not yet provide a way to estimate them with generative models. At the final treatment time, this is straightforward because the corresponding outcomes $Y_{t+\tau}$ are observed. At earlier treatment times, however, outcomes following the remaining treatment sequence are not observed and therefore cannot be used directly as training targets. We address this challenge recursively: after fitting $\widehat\xi_{t+\delta+1}^{\bar a}$, we draw pseudo-outcomes $\widetilde Y_{t+\delta+1}\sim\widehat\xi_{t+\delta+1}^{\bar a}(\cdot\mid\bar H_{t+\delta+1})$ and fit $\widehat\xi_{t+\delta}^{\bar a}(\cdot\mid\bar H_{t+\delta})$ to these pseudo-outcomes among observations with $A_{t+\delta}=a_{t+\delta}$, for $\delta=\tau-2,\ldots,0$. This procedure yields the sequence of estimated continuation densities and, in particular, $\widehat\xi_t^{\bar a}$ as an estimate of the target CDPO. We provide pseudocode in Appendix \ref{app:generative-recursive-gcomp-pseudocode}.


\begin{figure}[t]
    \centering
    \includegraphics[width=\linewidth]{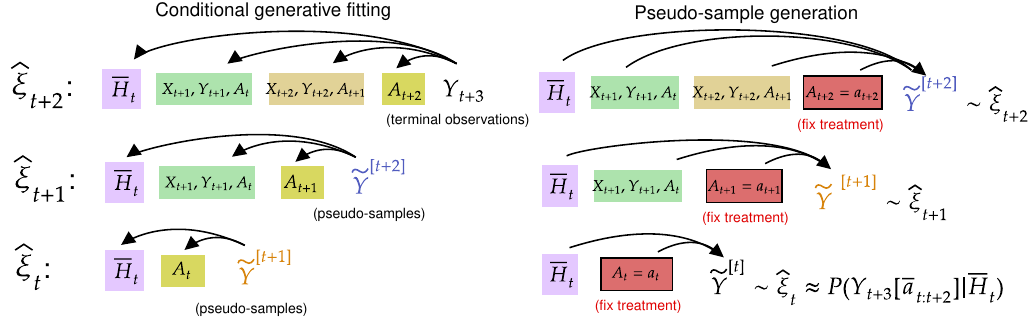}
    \vspace{-0.6cm}
    \caption{\textbf{Illustration of generative recursive g-computation} for $\tau = 3$.
\emph{Left:} Continuation densities are fitted through backward recursion. The terminal density is fitted from observed outcomes among patients receiving the target treatment. At each earlier treatment step $\delta$, pseudo-outcomes are sampled from the fitted successor density and used to fit the preceding density. \emph{Right:} Generating the pseudo-samples that will be used to fit the continuation nuisances at earlier steps.}
    \label{fig:bg_gcomp}
\end{figure}

\section{Na{\"i}ve generative learners based on recursive g-computation} 
\label{sec:single-learners}

We build directly on \emph{generative recursive g-computation} and introduce two generative learners that use its estimated continuation densities in different ways: The \textbf{plug-in (PI) learner} directly uses the fitted continuation density at the prediction origin as an estimate of the CDPO, while the \textbf{regression-adjusted (RA) learner} projects this estimate onto a separate target generative model through the corresponding generative risk.



$\bullet$\,\textbf{Plug-in (PI) learner.}
Proposition~\ref{prop:recursive-gcomp-identification} shows that the population continuation density at time $t$ equals the target CDPO in Eq.~\ref{eq:target-law}. Therefore, the fitted continuation density obtained through generative recursive g-computation at the prediction origin can be used as an estimate of the CDPO. The plug-in (PI) learner is
\begin{equation}
\widehat P_{\mathrm{PI}}^{\bar a}(y\mid \bar h_t)
=
\widehat\xi_t^{\bar a}(y\mid \bar h_t).
\end{equation}

$\bullet$\,\textbf{Regression-adjusted (RA) learner.} The regression-adjusted (RA) learner uses the estimated continuation density to construct a generative learning objective over the model class $\cG$. It follows a two-stage procedure: in Stage~1, generative recursive g-computation is used to estimate the continuation densities; in Stage~2, these estimates define a generative learning objective for fitting a model in $\cG$. This allows the second-stage model to incorporate, for example, additional regularization, architectural constraints, or interpretability requirements.

To construct this objective, for any candidate model $g_{\bar a}\in\cG$ and relative treatment step $\delta$, we define the \emph{continuation score}
\begin{equation}
m_{t+\delta}^{g_{\bar a},\bar a}(\bar H_{t+\delta}) = \int_{\mathcal Y} \ell_{g_{\bar a}}(y,\bar H_t)\, \xi_{t+\delta}^{\bar a}(y\mid\bar H_{t+\delta})\, \dd y, \qquad \delta=0,\ldots,\tau-1,
\label{eq:ra-score-response}
\end{equation}
which is the expected generative score of $g_{\bar a}$ under the continuation density $\xi_{t+\delta}^{\bar a}$. Further, let $\widehat m_{t+\delta}^{g_{\bar a},\bar a}$ denote the corresponding quantity obtained by replacing $\xi_{t+\delta}^{\bar a}$ with $\widehat\xi_{t+\delta}^{\bar a}$.

At the prediction origin, Proposition~\ref{prop:recursive-gcomp-identification} implies that $\xi_t^{\bar a}$ equals the target CDPO. Hence, the oracle target risk in Eq.~\ref{eq:oracle-risk} can equivalently be written as $\mathcal L_{\bar a}(g_{\bar a}) = \mathbb E\left[ m_t^{g_{\bar a},\bar a}(\bar H_t) \right]$. The previous identity suggests replacing the unknown continuation density with the corresponding estimate obtained via generative recursive g-computation. The RA learner is defined by the risk
\begin{equation}
\widehat{\mathcal L}_{\mathrm{RA}}^{\bar a}(g_{\bar a}) = \Pb\left[ \widehat m_t^{g_{\bar a},\bar a}(\bar H_t) \right].
\label{eq:ra-risk}
\end{equation}

Intuitively, the RA learner optimizes the estimated risk over $g_{\bar a}\in\cG$. Thus, the key difference is that the PI learner directly returns $\widehat\xi_t^{\bar a}$, whereas the RA learner uses $\widehat\xi_t^{\bar a}$ to define the learning objective for a second-stage generative model in $\cG$.

Thus, our PI and RA learners provide an alternative to the IPTW-based learner, which has been proposed for CDPO estimation over time \citep{mu2025counterfactual} and which we use as another baseline (see Appendix~\ref{app:iptw-learner}).

An important limitation of the two learners above and IPTW is that they rely on a single nuisance component, which makes the target risk sensitive to nuisance-estimation error at first order \citep{vansteelandt2025orthogonal}. To eliminate the first-order bias, we develop a Neyman-orthogonal method called \OrthoGen in the following section.


\section{\OrthoGen}
\label{sec:orthogen}



We introduce \OrthoGen for CDPO estimation under time-varying treatments, which, as we show later, enjoys several favorable theoretical properties (e.g., Neyman-orthogonality, double robustness, quasi-oracle efficiency). First, we define the propensity score
$\pi_{t+\delta}(a_{t+\delta}\mid\bar H_{t+\delta})=P(A_{t+\delta}=a_{t+\delta}\mid\bar H_{t+\delta})$, for $\delta=0,\ldots,\tau-1$. We then define the one-step inverse-propensity ratio as
$r_{t+\delta}^{\bar a}
= \ind\{A_{t+\delta}=a_{t+\delta}\}/
\pi_{t+\delta}(a_{t+\delta}\mid\bar H_{t+\delta})$
and the corresponding cumulative ratio as
$R_{t:t+\delta}^{\bar a}
= \prod_{\delta'=0}^{\delta} r_{t+\delta'}^{\bar a}$,
with the empty-product convention $R_{t:t-1}^{\bar a}=1$.


\begin{tcolorbox}[
  colback=takeawayback,
  colframe=takeawayframe,
  boxrule=0.6pt,
  arc=1pt,
  left=5pt,right=5pt,top=4pt,bottom=4pt,
  boxsep=0pt,
  before skip=5pt,
  after skip=5pt
]

\OrthoGen is defined by the risk:
\begin{align}
\begin{aligned}
\widehat{\mathcal L}_{\mathrm{OrthoGen}}^{\bar a}(g_{\bar a})
=
\Pb\Bigg[
\bR_{t:t+\tau-1}^{\bar a}
\ell_{g_{\bar a}}(Y_{t+\tau},\bar H_t) +
\sum_{\delta=0}^{\tau-1} \bR_{t:t+\delta-1}^{\bar a}
\left(1-\br_{t+\delta}^{\bar a}\right)
\bmhat_{t+\delta}^{g_{\bar a},\bar a}(\bar H_{t+\delta})
\Bigg].
\end{aligned}
\label{eq:orthogen-augmentation}
\end{align}
\end{tcolorbox}

\textbf{Estimation.} 
\OrthoGen proceeds in two stages: First, we estimate the nuisance functions $\hat{\eta}=
\left(
\{\hat{\pi}_{t+\delta}\}_{\delta=0}^{\tau-1},
\{\hat{\xi}_{t+\delta}^{\bar a}\}_{\delta=0}^{\tau-1}
\right)$. Second, we fit the target generative model using the objective in Eq.~\ref{eq:orthogen-augmentation}. The estimated propensity scores yield the one-step and cumulative inverse propensity ratios $\widehat r_{t+\delta}^{\bar a}$ and $\widehat R_{t:t+\delta}^{\bar a}$ defined above, while the estimated continuation densities determine the continuation scores $\widehat m_{t+\delta}^{g_{\bar a},\bar a}$ through Eq.~\ref{eq:ra-score-response}. \OrthoGen is model agnostic and we instantiate it with two generative models: normalizing flows and diffusion models.

\subsection{Theoretical properties}

\OrthoGen has several favorable asymptotic properties. First, it is Neyman-orthogonal, that is,
first-order insensitive to the errors in the nuisance functions. The following theorem derives this property for \OrthoGen:

\begin{restatable}[Neyman-orthogonality]{theorem}{orthogentheorem}
\label{thm:longitudinal-orthogen-orthogonality}
Under the causal assumptions in Sec.~\ref{sec:problem-setting}, the objective is Neyman-orthogonal at $(g^\star_{\bar a},\eta)$, namely
\begin{equation}
D_\eta D_{g_{\bar a}}
\mathcal L_{\mathrm{OrthoGen}}^{\bar a}
(g^\star_{\bar a};\eta)
\big[
g_{\bar a}-g^\star_{\bar a},
\tilde\eta-\eta
\big]
=0,
\end{equation}
for every \(g_{\bar a}\in\mathcal G\) and every admissible nuisance perturbation \(\tilde\eta\), where \(D\mathcal{L}(\cdot)[\cdot]\) denotes the pathwise derivative (see Appendix \ref{app:neyman-orthogonality} for definitions).
\end{restatable}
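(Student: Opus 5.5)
The strategy is to exploit the fact that the \OrthoGen risk is \emph{linear} in the target model through the score $\ell_{g_{\bar a}}$. We write the population risk
\begin{equation*}
\mathcal L_{\mathrm{OrthoGen}}^{\bar a}(g_{\bar a};\tilde\eta)=\mathbb E\Big[\tilde R_{t:t+\tau-1}^{\bar a}\,\ell_{g_{\bar a}}(Y_{t+\tau},\bar H_t)+\sum_{\delta=0}^{\tau-1}\tilde R_{t:t+\delta-1}^{\bar a}\big(1-\tilde r_{t+\delta}^{\bar a}\big)\,\tilde m_{t+\delta}^{g_{\bar a},\bar a}(\bar H_{t+\delta})\Big].
\end{equation*}
The map $g_{\bar a}\mapsto \ell_{g_{\bar a}}$ enters every term linearly: directly in the first term, and via the integral \eqref{eq:ra-score-response} in the others. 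Hence $D_{g_{\bar a}}\mathcal L(g^\star_{\bar a};\tilde\eta)[h]$ has exactly the same form with $\ell_{g_{\bar a}}$ replaced by the directional derivative $\dot\ell_h:=D_g\ell_{g}(y,\bar h_t)\big|_{g^\star}[h]$, where $h=g_{\bar a}-g^\star_{\bar a}$. For fixed $h$, this is therefore the standard doubly robust longitudinal (AIPW/sequential-regression) functional applied to the function $f(y,\bar h_t)=\dot\ell_h(y,\bar h_t)$, which is $\bar H_t$-measurable in its second argument. It thus suffices to show that, for an arbitrary fixed such $f$, the functional
\begin{equation*}
\Psi_f(\tilde\eta)=\mathbb E\Big[\tilde R_{t:t+\tau-1}f(Y_{t+\tau},\bar H_t)+\sum_\delta \tilde R_{t:t+\delta-1}(1-\tilde r_{t+\delta})\textstyle\int f(y,\bar H_t)\tilde\xi_{t+\delta}(y\mid\bar H_{t+\delta})\dd y\Big]
\end{equation*}
has zero Gateaux derivative at $\eta$ in every admissible direction.

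The first step is to rewrite $\Psi_f$ telescopically. Set $\tilde\mu_{t+\delta}:=\int f(y,\bar H_t)\tilde\xi_{t+\delta}(y\mid\bar H_{t+\delta})\dd y$ and adopt the convention $\tilde\mu_{t+\tau}:=f(Y_{t+\tau},\bar H_t)$. Regrouping terms gives
\begin{equation*}
\Psi_f(\tilde\eta)=\mathbb E\big[\tilde\mu_t\big]+\sum_{\delta=0}^{\tau-1}\mathbb E\big[\tilde R_{t:t+\delta}\,(\tilde\mu_{t+\delta+1}-\tilde\mu_{t+\delta})\big].
\end{equation*}
This uses $\tilde R_{t:t+\delta-1}(1-\tilde r_{t+\delta})=\tilde R_{t:t+\delta-1}-\tilde R_{t:t+\delta}$ and the fact that the last term collapses. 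Because $f$ depends only on $y$ and the fixed history $\bar H_t\subseteq\bar H_{t+\delta}$, the integral against $\xi_{t+\delta}$ commutes with the recursion \eqref{eq:xi-recursion}. Consequently, at the truth, $\mu_{t+\delta}=\mathbb E[\mu_{t+\delta+1}\mid\bar H_{t+\delta},A_{t+\delta}=a_{t+\delta}]$, with the terminal case following from \eqref{eq:terminal-stage-xi}.

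Next, we differentiate along $\eta+\varepsilon(\tilde\eta-\eta)$ at $\varepsilon=0$, treating the propensity directions and the continuation-density directions separately. For a propensity direction at step $\delta$, the derivative only hits the factors $R_{t:t+\delta'}$ with $\delta'\ge\delta$. Each such resulting term is multiplied by $\mu_{t+\delta'+1}-\mu_{t+\delta'}$, so we condition on $(\bar H_{t+\delta'},A_{t+\delta'})$; on $\{A_{t+\delta'}=a_{t+\delta'}\}$ (the only event where the indicator in $R$ is nonzero), the conditional mean of this difference vanishes by the recursion, so the term is zero. For a continuation-density direction $\dot\mu_{t+\delta}$ with $\delta\ge 0$, the derivative equals $\mathbb E[(R_{t:t+\delta-1}-R_{t:t+\delta})\dot\mu_{t+\delta}]$, which for $\delta=0$ also absorbs the term $\mathbb E[\dot\mu_t]$. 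Since $\dot\mu_{t+\delta}$ is $\bar H_{t+\delta}$-measurable, conditioning on $\bar H_{t+\delta}$ and using $\mathbb E[r_{t+\delta}\mid\bar H_{t+\delta}]=1$ (true propensity, positivity) yields zero. Summing over directions, and using linearity in $f=\dot\ell_h$ once more, completes the argument. Proposition~\ref{prop:recursive-gcomp-identification} is not strictly needed for orthogonality, but the causal assumptions guarantee that the ratios are well defined.

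I expect the main obstacle to be bookkeeping rather than ideas. Concretely, one must (i) justify exchanging $D_g$ with expectations and with the $\dd y$ integral, and (ii) handle the $\varepsilon_z$-averaged score in \eqref{eq:generative-score}, so that $\dot\ell_h$ is a well-defined integrable function of $(y,\bar h_t)$. For this I would state, as part of the ``admissible perturbation'' definition in Appendix~\ref{app:neyman-orthogonality}, dominated-convergence conditions together with propensities bounded away from zero. Given these conditions, the cross derivative reduces exactly to the classical orthogonality of the longitudinal AIPW functional shown above.
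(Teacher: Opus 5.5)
Your proposal is correct and follows essentially the same route as the paper's proof. You differentiate in $g_{\bar a}$ to replace $\ell$ by its directional derivative, telescope via $R_{t:t+\delta-1}(1-r_{t+\delta})=R_{t:t+\delta-1}-R_{t:t+\delta}$, kill the continuation-density direction using $\mathbb E[r_{t+\delta}\mid\bar H_{t+\delta}]=1$, and kill the propensity direction using the conditionally mean-zero residuals $\mu_{t+\delta+1}-\mu_{t+\delta}$. The only differences are cosmetic: for instance, you condition on $(\bar H_{t+\delta'},A_{t+\delta'})$ directly, whereas the paper conditions on $\bar H_{t+\delta'}$ and applies the inverse-propensity identity.
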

\vspace{-0.2cm}
\textit{Proof.} See Appendix~\ref{app:neyman-orthogonality}.

Second, \OrthoGen is quasi-oracle efficient and doubly robust. To formalize these properties, we define the fitted second-stage target generator as $\widehat g_{\bar a}\in\mathcal G$ and its conditional population target-learning error as
$\mathrm{Opt}^{\bar a}_{\mathcal G}
:=
\sup_{g_{\bar a}\in\mathcal G}
\mathcal L_{\mathrm{OrthoGen}}^{\bar a}(g_{\bar a};\widehat\eta)
-
\mathcal L_{\mathrm{OrthoGen}}^{\bar a}(\widehat g_{\bar a};\widehat\eta)$.
Thus, $\mathrm{Opt}^{\bar a}_{\mathcal G}$ measures how well the second-stage target generator is learned while holding the estimated nuisance functions fixed. 

\begin{restatable}[Quasi-oracle efficiency and double robustness]{theorem}{quasioracletheorem}
\label{thm:quasi-oracle}
Under mild regularity conditions for the target model, the squared distance between the fitted target generator $\widehat g_{\bar a}$ and the oracle population maximizer $g^\star_{\bar a}$ can be upper-bounded in the following way:  
\begin{equation}
\|\widehat g_{\bar a}-g^\star_{\bar a}\|_{\mathcal G}^2
\lesssim
\underbrace{
\mathrm{Opt}^{\bar a}_{\mathcal G}
}_{\textup{(I) target-model learning error}}
+
\underbrace{
\left[
\sum_{\delta=0}^{\tau-1}
\|\widehat\pi_{t+\delta}-\pi_{t+\delta}\|_{L^4(\mathbb P_0)}
\,
\|\widehat\xi_{t+\delta}^{\bar a}-\xi_{t+\delta}^{\bar a}\|_{4,2}
\right]^2
}_{\textup{(II) higher-order nuisance error}},
\label{eq:quasi_oracle_main}
\end{equation}
where $\|\cdot\|_{\mathcal G}$ denotes a model-specific norm, (I) is a standard target-model learning error term, and (II) is a higher-order nuisance error term. This bound implies that \OrthoGen is (a) \textbf{doubly robust} and (b) \textbf{quasi-oracle efficient}.
\end{restatable}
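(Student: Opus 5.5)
The plan follows the standard orthogonal statistical learning template (Foster--Syrgkanis style), specialised to the longitudinal doubly robust risk. Write $\mathcal L(g;\eta)$ for the population version of Eq.~\ref{eq:orthogen-augmentation}. Three ingredients are needed.

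First, at the true nuisances, $\mathcal L(g;\eta)=\mathcal L_{\bar a}(g)$ for every $g$. This follows by telescoping together with Proposition~\ref{prop:recursive-gcomp-identification} and the tower property. The step-$\delta$ augmentation term, with weight $R_{t:t+\delta-1}^{\bar a}(1-r_{t+\delta}^{\bar a})$, cancels the IPTW term at step $\delta$ against the regression at step $\delta$. In particular, $g^\star_{\bar a}$ also maximizes $\mathcal L(\cdot;\eta)$.

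Second, the "mild regularity conditions" are taken as a local strong concavity (quadratic growth) condition in the model norm:
\[
\mathcal L(g^\star_{\bar a};\eta)-\mathcal L(g;\eta)-D_g\mathcal L(g^\star_{\bar a};\eta)[g^\star_{\bar a}-g]\gtrsim\|g-g^\star_{\bar a}\|_{\mathcal G}^2 .
\]
The same condition is assumed for $\mathcal L(\cdot;\widehat\eta)$ along the segment between $\widehat g_{\bar a}$ and $g^\star_{\bar a}$, together with a Lipschitz bound
\[
|D_g\ell_{g}(y,\bar h_t)[h]|\lesssim \|h\|_{\mathcal G}\,\kappa(y,\bar h_t),
\]
where $\kappa$ is square-integrable uniformly. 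The first-order condition $D_g\mathcal L(g^\star_{\bar a};\eta)[g-g^\star_{\bar a}]\le 0$ holds since $g^\star_{\bar a}$ is a maximizer over $\mathcal G$ (assume $\mathcal G$ is convex).

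Third, the main chain. Strong concavity applied to $\mathcal L(\cdot;\widehat\eta)$ gives
\[
\|\widehat g_{\bar a}-g^\star_{\bar a}\|^2_{\mathcal G}\lesssim \mathrm{Opt}^{\bar a}_{\mathcal G}+D_g\mathcal L(g^\star_{\bar a};\widehat\eta)[\widehat g_{\bar a}-g^\star_{\bar a}].
\]
The derivative term is decomposed into three parts: the value at $\eta$, which is $\le0$ by the first-order condition; the first-order nuisance term, which vanishes by Theorem~\ref{thm:longitudinal-orthogen-orthogonality}; and a second-order remainder $\mathrm{Rem}$. If $|\mathrm{Rem}|\le B\,\|\widehat g_{\bar a}-g^\star_{\bar a}\|_{\mathcal G}$, then AM--GM ($B\|\cdot\|\le \tfrac{c}{2}\|\cdot\|^2+\tfrac{1}{2c}B^2$) absorbs the quadratic piece into the left-hand side. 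This leaves $\mathrm{Opt}^{\bar a}_{\mathcal G}+B^2$, so it remains to show $B\lesssim\sum_\delta\|\widehat\pi_{t+\delta}-\pi_{t+\delta}\|_{L^4}\|\widehat\xi^{\bar a}_{t+\delta}-\xi^{\bar a}_{t+\delta}\|_{4,2}$.

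The main obstacle is this remainder calculation; rather than a Taylor expansion, I would use an exact algebraic identity. With $\dot\ell=D_g\ell_{g^\star}[h]$, compute the exact difference
\[
D_g\mathcal L(g^\star;\widehat\eta)[h]-D_g\mathcal L(g^\star;\eta)[h].
\]
Repeated telescoping and the tower property (conditioning on $\bar H_{t+\delta}$ and using $\mathbb E[r_{t+\delta}\mid\bar H_{t+\delta}]=1$) should yield the standard sequential-DR remainder
\[
\sum_{\delta}\mathbb E\Big[\widehat R_{t:t+\delta-1}\,\frac{\pi_{t+\delta}-\widehat\pi_{t+\delta}}{\widehat\pi_{t+\delta}}\,\big(\widehat m_{t+\delta}-m_{t+\delta}\big)\Big]
\]
plus cross terms. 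In those cross terms, each earlier product $\widehat R_{t:t+\delta-1}$ versus $R_{t:t+\delta-1}$ must be handled. I would expect them to reorganize into products of the form (propensity error at some step) $\times$ (continuation error at a later step). If the clean diagonal form is not obtained directly, those terms would be bounded via the $L^4$ norms using boundedness of the products $\widehat R$. Each term is bounded by strong positivity ($\widehat\pi\ge\epsilon$, so $\widehat R$ is bounded), then Hölder with exponents $(4,4,2)$. Fubini gives
\[
|\widehat m_{t+\delta}-m_{t+\delta}|\le\int|\dot\ell(y,\bar H_t)|\,|\widehat\xi-\xi|\,dy ,
\]
and Cauchy--Schwarz in $y$ then yields $\|h\|_{\mathcal G}\,\|\widehat\xi-\xi\|_{4,2}$, with the mixed norm $\|\cdot\|_{4,2}$ ($L^2$ in $y$, $L^4$ in the history) defined to match. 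Finally, (a) double robustness follows since (II) vanishes if either $\widehat\pi$ or $\widehat\xi$ is consistent at each step. (b) Quasi-oracle efficiency holds because, when the product rates are $o(n^{-1/4})$ each, (II) is dominated by the oracle rate of (I).
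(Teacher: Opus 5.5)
Your skeleton matches the paper's proof. The shared steps are:
\begin{itemize}
\item quadratic growth of $\mathcal L^{\bar a}_{\mathrm{OrthoGen}}(\cdot;\widehat\eta)$ between $g^\star_{\bar a}$ and $\widehat g_{\bar a}$, which gives $\|\widehat g_{\bar a}-g^\star_{\bar a}\|_{\mathcal G}^2\lesssim\mathrm{Opt}^{\bar a}_{\mathcal G}+D_g\mathcal L^{\bar a}_{\mathrm{OrthoGen}}(g^\star_{\bar a};\widehat\eta)[\widehat g_{\bar a}-g^\star_{\bar a}]$;
\item directional optimality at the true nuisance;
\item an exact (not Taylor) gradient remainder;
\item Cauchy--Schwarz in $y$, H\"older, estimated positivity, and Young's inequality.
\end{itemize}

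The gap is exactly the step you call the main obstacle. You assert the exact remainder ``should'' be the diagonal sum ``plus cross terms''. Your fallback is to bound cross terms that pair a propensity error at one stage with a continuation error at a later stage via $L^4$ norms. That does not prove the stated bound, for two reasons:
\begin{itemize}
\item It produces off-diagonal products such as $\|\widehat\pi_t-\pi_t\|_{L^4}\|\widehat\xi^{\bar a}_{t+1}-\xi^{\bar a}_{t+1}\|_{4,2}$, which term (II) does not dominate. Take $\widehat\pi_{t+1}$ and $\widehat\xi^{\bar a}_t$ correct but $\widehat\pi_t$ and $\widehat\xi^{\bar a}_{t+1}$ wrong: (II) is zero, your bound is not.
\item Controlling differences $\widehat R^{\bar a}_{t:t+\delta}-R^{\bar a}_{t:t+\delta}$ would need a lower bound on the \emph{true} propensities. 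The theorem only uses $\widehat\pi_{t+\delta}\ge\varepsilon$, and true positivity is only $\pi>0$.
\end{itemize}
The identity you reach for, $\mathbb E[r_{t+\delta}\mid\bar H_{t+\delta}]=1$ for the true ratio, is what makes $(\pi,\widehat\xi)$ unbiased. Comparing with that point is what generates the $\widehat R-R$ terms.

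The missing idea is Lemma~\ref{lem:exact-longitudinal-dr-remainder}: compare instead with the hybrid nuisance $(\widehat\pi,\xi)$.
\begin{itemize}
\item At $(\widehat\pi,\xi)$ the integrand telescopes to $m_t+\sum_\delta\widehat R^{\bar a}_{t:t+\delta}(m_{t+\delta+1}-m_{t+\delta})$, with $m_{t+\tau}=\ell_{g_{\bar a}}(Y_{t+\tau},\bar H_t)$.
\item We have $\mathbb E_0[\widehat r^{\bar a}_{t+\delta}Z\mid\bar H_{t+\delta}]=(\pi_{t+\delta}/\widehat\pi_{t+\delta})\,\mathbb E_0[Z\mid\bar H_{t+\delta},A_{t+\delta}=a_{t+\delta}]$, and the true scores satisfy the recursion. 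So every summand has mean zero, and $\mathcal L^{\bar a}_{\mathrm{OrthoGen}}(g;(\widehat\pi,\xi))=\mathcal L_{\bar a}(g)$ for \emph{every} $\widehat\pi$.
\item The objectives at $(\widehat\pi,\widehat\xi)$ and $(\widehat\pi,\xi)$ then differ only in the augmentation terms.
\item Conditioning on $\bar H_{t+\delta}$ and using $\mathbb E_0[1-\widehat r^{\bar a}_{t+\delta}\mid\bar H_{t+\delta}]=(\widehat\pi_{t+\delta}-\pi_{t+\delta})/\widehat\pi_{t+\delta}$ yields exactly the diagonal sum. Its weights are $\widehat R^{\bar a}_{t:t+\delta-1}$; there are no cross terms and no true weights.
\end{itemize}
Note the sign is $\widehat\pi-\pi$, not your $\pi-\widehat\pi$, which is harmless for the bound.

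Differentiating this identity in $g$ gives the gradient difference directly. So Theorem~\ref{thm:longitudinal-orthogen-orthogonality} is not needed in your chain.

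Your hunch that the cross terms reorganize is in fact right. Going through $(\pi,\widehat\xi)$ and re-indexing the telescoped sum collapses them to the same diagonal form. As written, though, your proposal leaves this as an expectation, with a fallback that does not deliver (II).
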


\vspace{-0.2cm}\textit{Proof.} See Appendix \ref{sec:double_robustness_quasi_oracle}.

Theorem~\ref{thm:quasi-oracle} decomposes the target-model error into a second-stage learning error and a higher-order contribution from nuisance estimation. Crucially, the latter depends on products of the stagewise propensity and continuation density errors rather than on either nuisance error alone. This product structure yields the following rate-robustness property.

\begin{remark}[Rate double robustness]
Let $\rho_n$ be the learning rate of the target model, and let $\varepsilon_{\pi,t+\delta,n}$ and $\varepsilon_{\xi,t+\delta,n}$ be the convergence rates of the propensity and continuation density nuisances at relative treatment step $\delta$, respectively. Because nuisance estimation enters through $\varepsilon_{\pi,t+\delta,n}\varepsilon_{\xi,t+\delta,n}$, faster convergence of one nuisance component can offset slower convergence of the other. In particular, if $\sum_{\delta=0}^{\tau-1}\varepsilon_{\pi,t+\delta,n}\varepsilon_{\xi,t+\delta,n}=o(\rho_n)$, then nuisance estimation is asymptotically negligible and \OrthoGen attains the same first-order rate as an oracle learner with known nuisances. More generally, if the product term is $O(\rho_n)$, nuisance estimation does not worsen the learning rate for the target model; for example, two $n^{-1/4}$ nuisance rates yield an $n^{-1/2}$ product term.
\end{remark}


\section{Evaluation}

In this section, we evaluate our learners through several numerical experiments across synthetic, semi-synthetic, and real-world datasets to assess their effectiveness.

\resetwrappar
\begin{wraptable}{r}{0.65\textwidth}
\vspace{-0.8cm}
\centering
\caption{\textbf{Synthetic data.}}
\label{tab:wasserstein_tau_synthetic}
\scriptsize
\setlength{\tabcolsep}{2pt}
\renewcommand{\arraystretch}{0.9}

\resizebox{\linewidth}{!}{%
\begin{tabular}{llccccc}
\toprule
& \textbf{Method} 
& $\boldsymbol{\tau=1}$ 
& $\boldsymbol{\tau=2}$ 
& $\boldsymbol{\tau=3}$ 
& $\boldsymbol{\tau=4}$ 
& $\boldsymbol{\tau=5}$ \\
\midrule
\multirow{4}{*}{NFs}
& PI           & $\mathbf{0.308 \pm 0.004}$ & $0.235 \pm 0.007$ & $0.350 \pm 0.015$ & $0.249 \pm 0.010$ & $\mathbf{0.344 \pm 0.015}$ \\
& RA           & $0.320 \pm 0.008$ & $0.232 \pm 0.008$ & $0.347 \pm 0.015$ & $\mathbf{0.247 \pm 0.008}$ & $0.348 \pm 0.018$ \\
& IPTW        & $0.325 \pm 0.009$ & $0.240 \pm 0.002$ & $0.373 \pm 0.015$ & $0.305 \pm 0.019$ & $0.518 \pm 0.062$ \\
& \OrthoGen    & $0.312 \pm 0.005$ & $\mathbf{0.229 \pm 0.009}$ & $\mathbf{0.342 \pm 0.011}$ & $0.267 \pm 0.015$ & $0.401 \pm 0.051$ \\
\midrule
\multirow{4}{*}{DMs}
& PI           & $\mathbf{0.306 \pm 0.013}$ & $0.312 \pm 0.025$ & $0.506 \pm 0.038$ & $0.482 \pm 0.015$ & $0.612 \pm 0.014$ \\
& RA           & $0.411 \pm 0.030$ & $0.404 \pm 0.018$ & $0.610 \pm 0.032$ & $0.537 \pm 0.025$ & $0.651 \pm 0.019$ \\
& IPTW        & $0.341 \pm 0.026$ & $0.284 \pm 0.020$ & $0.447 \pm 0.053$ & $0.386 \pm 0.048$ & $0.522 \pm 0.055$ \\
& \OrthoGen    & $0.329 \pm 0.014$ & $\mathbf{0.244 \pm 0.012}$ & $\mathbf{0.378 \pm 0.017}$ & $\mathbf{0.293 \pm 0.026}$ & $\mathbf{0.383 \pm 0.024}$ \\
\bottomrule
\end{tabular}%
}
\par\vspace{1pt}
\begin{minipage}{\linewidth}
\raggedright\tiny
Reported: squared 2-Wasserstein costs across treatment horizons $\tau$.\par
Values are mean $\pm$ standard deviation over 5 replicates; lower is better.
\vspace{-.1cm}
\end{minipage}
\end{wraptable}

\textbf{Instantiations.}  We instantiate the sequence encoder using a transformer \citep{vaswani2017attention}, which processes the observed pre-interventional history to produce a low-dimensional context representation fed to the nuisance and target models. For the generative models, we employ two complementary state-of-the-art approaches: \textbf{(i)}~\textit{normalizing flows} (\textbf{NFs}) \citep{rezende2015variational} and \textbf{(ii)}~\textit{diffusion models} (\textbf{DMs}) \citep{ho2020denoising}. See Appendix \ref{app:generative-model-instantiations} for more details. In both cases, we use the same generative backbone for both the continuation density nuisances and the target generative model. 
Our transformer implementation follows an encoder-only design similar to \citet{frauen2025modelagnostic} and \citet{hess2025stabilized}. Both generative model families receive the learned context representation and generate samples conditionally on the learned history representation. Details are in Appendix \ref{app:implementation-details}.

\textbf{Datasets.}
Following common practice in causal inference \citep{Bica2020Estimating, melnychuk2022causal}, we use synthetic and semi-synthetic datasets for evaluation as they provide ground truths for counterfactuals, while we use a real-world dataset to assess the applicability of our learners and discuss potential insights. We thus evaluate on three datasets; see Appendix~\ref{app:dataset-details} for full details. \circledgreen{1} A \textbf{fully synthetic benchmark} generated from a longitudinal data-generating process with treatment--confounder feedback, a scalar time-varying covariate, and a three-dimensional continuous target outcome, where the components are mixture distributions based on Gaussian and asymmetric-Laplace components and are correlated with each other. \circledgreen{2} A \textbf{semi-synthetic benchmark} built from real patient trajectories extracted from MIMIC-III \citep{johnson2016mimic}. \circledgreen{3} The \textbf{real-world MIMIC-III} benchmark \citep{johnson2016mimic} based on previous cohort construction and covariates, but where we retain the factual treatments and a one-dimensional observed outcome from the ICU records. All three datasets use 3,000 training, 1,000 validation, and 1,000 test trajectories.

\textbf{Training.} Our implementation follows a two-stage procedure. First, the propensity scores are estimated using a binary classifier predicting treatment assignment at each decision point from the transformer representation, while the continuation densities are learned through generative recursive g-computation. Hence, the terminal model is fitted to observed factual outcomes, and earlier continuation models are subsequently trained on pseudo-outcomes sampled from frozen successor models using Monte Carlo sampling to estimate expectations. Then, the target generator of \OrthoGen is fitted using the objective in Eq.~\ref{eq:orthogen-augmentation}.
See further details on architectures and hyperparameter selection in Appendix \ref{app:transformer-history-encoder} and \ref{app:generative-model-implementation} and in the code. We provide runtime information in Appendix~\ref{app:runtime}.

\begin{wraptable}{l}{0.60\textwidth}
\vspace{-1.8em}
\centering
\caption{\textbf{Semi-synthetic data.}}
\label{tab:wasserstein_tau_semisynthetic}

\scriptsize
\setlength{\tabcolsep}{2pt}
\renewcommand{\arraystretch}{0.9}

\resizebox{\linewidth}{!}{%
\begin{tabular}{llccccc}
\toprule
& \textbf{Method} 
& $\boldsymbol{\tau=1}$ 
& $\boldsymbol{\tau=2}$ 
& $\boldsymbol{\tau=3}$ 
& $\boldsymbol{\tau=4}$ 
& $\boldsymbol{\tau=5}$ \\
\midrule
\multirow{4}{*}{NFs}
& PI           & $1.13 \pm 0.39$ & $1.46 \pm 0.70$ & $1.59 \pm 0.21$ & $1.78 \pm 0.67$ & $2.55 \pm 0.58$ \\
& RA           & $1.22 \pm 0.33$ & $1.22 \pm 0.55$ & $1.47 \pm 0.23$ & $1.59 \pm 0.44$ & $2.22 \pm 0.52$ \\
& IPTW        & $0.78 \pm 0.11$ & $0.67 \pm 0.20$ & $1.45 \pm 0.57$ & $0.92 \pm 0.22$ & $2.06 \pm 0.87$ \\
& \OrthoGen    & $\mathbf{0.49 \pm 0.09}$ & $\mathbf{0.47 \pm 0.13}$ & $\mathbf{0.91 \pm 0.13}$ & $\mathbf{0.73 \pm 0.22}$ & $\mathbf{1.23 \pm 0.45}$ \\
\midrule
\multirow{4}{*}{DMs}
& PI           & $0.56 \pm 0.06$ & $0.59 \pm 0.13$ & $0.81 \pm 0.32$ & $0.93 \pm 0.36$ & $0.93 \pm 0.36$ \\
& RA           & $0.63 \pm 0.08$ & $0.74 \pm 0.20$ & $0.99 \pm 0.38$ & $1.09 \pm 0.39$ & $1.09 \pm 0.41$ \\
& IPTW        & $0.47 \pm 0.10$ & $0.62 \pm 0.27$ & $1.43 \pm 0.68$ & $1.77 \pm 0.79$ & $2.49 \pm 0.50$ \\
& \OrthoGen    & $\mathbf{0.45 \pm 0.05}$ & $\mathbf{0.50 \pm 0.10}$ & $\mathbf{0.74 \pm 0.21}$ & $\mathbf{0.88 \pm 0.30}$ & $\mathbf{0.77 \pm 0.18}$ \\
\bottomrule
\end{tabular}%
}
\par\vspace{1pt}
\begin{minipage}{\linewidth}
\raggedright\tiny
Reported: squared 2-Wasserstein costs $\times 10^{-2}$ across treatment horizons $\tau$.\par
Values are mean $\pm$ standard deviation over 5 replicates; lower is better.
\vspace{-0.9em}

\end{minipage}
\end{wraptable}

\textbf{Baselines.} To the best of our knowledge, the \textbf{IPTW} from \citet{mu2025counterfactual} is the \textbf{only} applicable baseline for direct learning of CDPOs. Other methods solve \textit{different} estimands (e.g., CAPOs but \emph{not} CDPOs, so that the full distributional information is lost). We thus additionally benchmark \OrthoGen against two na{\"i}ve learners: \textbf{PI} and \textbf{RA} from Sec.~\ref{sec:single-learners}. As a result, we can isolate the benefit of the doubly robust and Neyman-orthogonal adjustment approach in \OrthoGen. Importantly, all baselines are instantiated with the \emph{same} transformer architecture and generative backbones to ensure a \textit{fair} comparison. 


\textbf{Metrics.} For the \circledgreen{1} fully synthetic and \circledgreen{2} semi-synthetic datasets, we can obtain ground-truth interventional outcome samples from the simulation: for each test sample, we generate samples from the fitted model and matched oracle samples under the target regime. We compare the resulting predictive distributions using the squared 2-Wasserstein cost, which measures discrepancy between empirical outcome distributions. \circledgreen{3} For the real-world MIMIC-III dataset, no oracle counterfactual distribution is available. We thus assess factual predictive accuracy using the continuous ranked probability score (CRPS), which is a proper scoring rule for one-dimensional predictive distributions that requires only the observed factual outcome. Predictions are made under the target treatment sequence, and CRPS is computed only for held-out patients whose observed treatment sequence exactly matches the target. See Appendix~\ref{app:dataset-details} for details.

\begin{wraptable}{r}
{0.48\textwidth}
\vspace{-1.8em}
\centering
\caption{\textbf{Real-world data.}}
\label{tab:crps_tau_real}

\scriptsize
\setlength{\tabcolsep}{2pt}
\renewcommand{\arraystretch}{0.9}

\resizebox{\linewidth}{!}{%
\begin{tabular}{llccc}
\toprule
& \textbf{Method} 
& $\boldsymbol{\tau=1}$ 
& $\boldsymbol{\tau=3}$ 
& $\boldsymbol{\tau=5}$ \\
\midrule
\multirow{4}{*}{NFs}
& PI           & $0.338 \pm 0.005$ & $0.417 \pm 0.006$ & $0.459 \pm 0.014$ \\
& RA           & $0.329 \pm 0.004$ & $0.405 \pm 0.007$ & $0.453 \pm 0.015$ \\
& IPTW        & $0.319 \pm 0.006$ & $0.410 \pm 0.004$ & $0.457 \pm 0.012$ \\
& \OrthoGen    & $\mathbf{0.307 \pm 0.007}$ & $\mathbf{0.387 \pm 0.007}$ & $\mathbf{0.442 \pm 0.013}$ \\
\midrule
\multirow{4}{*}{DMs}
& PI           & $\mathbf{0.314 \pm 0.008}$ & $0.428 \pm 0.005$ & $0.500 \pm 0.007$ \\
& RA           & $0.330 \pm 0.007$ & $0.446 \pm 0.005$ & $0.506 \pm 0.005$ \\
& IPTW        & $0.347 \pm 0.020$ & $\mathbf{0.426 \pm 0.019}$ & $\mathbf{0.472 \pm 0.011}$ \\
& \OrthoGen    & $0.330 \pm 0.017$ & $0.432 \pm 0.018$ & $0.475 \pm 0.020$ \\
\bottomrule
\end{tabular}%
}
\par\vspace{1pt}
\begin{minipage}{\linewidth}
\raggedright\tiny
Reported are factual matching CRPS values across treatment horizons $\tau$.\par
Values are mean $\pm$ standard deviation over 5 replicates; lower is better.
\vspace{-0.9em}
\end{minipage}
\end{wraptable}

\textbf{Results.} Overall, our results demonstrate the theoretical benefits of \OrthoGen across different datasets and generative backbones. \circledgreen{1} On the \underline{\textit{fully synthetic}} benchmark (Table~\ref{tab:wasserstein_tau_synthetic}), \OrthoGen performs particularly strongly with DMs and remains highly competitive with NFs. Importantly, for $\tau>1$, \OrthoGen consistently outperforms the PI learner, with its advantage becoming increasingly pronounced at longer treatment horizons. This pattern is consistent with the robustness benefits predicted by our theoretical analysis: as the treatment horizon grows and nuisance estimation becomes more challenging, the doubly robust and Neyman-orthogonal adjustment of \OrthoGen yields increasingly substantial gains over the alternative learners.
\circledgreen{2} On the \underline{\textit{semi-synthetic}} benchmark (Table~\ref{tab:wasserstein_tau_semisynthetic}), \OrthoGen consistently achieves the lowest Wasserstein cost across \textbf{all} treatment horizons and for \textbf{both} NFs and DMs, and the gains again become larger for longer horizons, which confirms the theoretical properties of \OrthoGen due to Neyman-orthogonality. \circledgreen{3} Finally, on the \underline{\textit{real-world}} benchmark (Table~\ref{tab:crps_tau_real}), \OrthoGen achieves the lowest factual CRPS across all horizons with NFs. Again, \OrthoGen is effective across different backbones. $\Rightarrow$ \textit{Taken together, these results demonstrate the benefits of the doubly robust and Neyman-orthogonal construction of \OrthoGen, which yields substantial improvements over the alternative learners, particularly at longer treatment horizons. }

\textbf{Conclusion.} We introduced \OrthoGen, the first doubly robust and Neyman-orthogonal generative learner for CDPO estimation under time-varying treatments, built on our generative recursive g-computation strategy. We showed that \OrthoGen achieves rate double robustness and quasi-oracle efficiency, and performs effectively across synthetic, semi-synthetic, and real-world benchmarks with different generative backbones.

\section*{Acknowledgments}
We gratefully acknowledge Novartis for sponsoring Tomàs Garriga’s industrial PhD. We also thank the Government of Catalonia’s Industrial PhDs Plan for funding part of this research. We acknowledge Horizon Europe Programme under the AI4DEBUNK Project (\url{https://www.ai4debunk.eu}), grant agreement num. 101135757.  This work has also been supported by the German Federal Ministry of Education and Research (Grant: 01IS24082).



\bibliography{iclr2026_conference}
\bibliographystyle{iclr2026_conference}

\appendix
\newpage

\startcontents[appendices]

\section*{Appendix contents}
\printcontents[appendices]{}{1}{
    \setcounter{tocdepth}{2}
}

\vspace{1em}
\newpage

\section{Extended Related Work}
\label{app:extended-related-work}
\textbf{Learning CDPOs in static settings.}
Most causal machine-learning methods for static treatments have traditionally targeted conditional average potential outcomes (CAPOs), or contrasts such as conditional average treatment effects (CATEs), using methods such as S-, T-, and X-learners \citep{kunzel2019metalearners} or representation-learning approaches \citep{shalit2017estimating}. More recent work moves beyond these mean-based targets to learn potential-outcome distributions and thereby characterize treatment effects beyond the mean. This literature considers both marginal potential-outcome distributions and covariate-conditional potential-outcome distributions (CDPOs). At the marginal level, \citet{kennedy2023semiparametric} develop semiparametric one-step, doubly robust estimators for projections of counterfactual densities. Building on this framework, \citet{melnychuk2023normalizing} introduce Interventional Normalizing Flows, which instantiate the corresponding bias correction using normalizing flows. For CDPOs, GANITE uses generative adversarial networks with regression adjustment \citep{yoon2018ganite}, while NOFLITE \citep{vanderschueren2023noflite} and PO-Flow \citep{wu2026flowbasedgenerativemodelingpotential} are flow-based plug-in learners. DiffPO instead combines conditional diffusion models with inverse probability of treatment weighting (IPTW) and is Neyman-orthogonal only under a realizability condition \citep{ma2024diffpo}. A neighboring line uses plug-in learning to model the entire causal data-generating process with different generative models \citep{NIPS2017_94b5bde6,NEURIPS2020_0987b8b3,parafita2022estimand,sanchez2022diffusion}. Relatedly, \citet{garriga2026cepae} develop an autoencoder-based approach that uses structural causal models for counterfactual inference in time series under interventions on observed events. Most recently, \citet{melnychuk2025gdr} and \citet{luedtke2025doublegen} introduced Neyman-orthogonal and doubly robust learners for CDPOs in the static setting. 

\textbf{Learning CAPOs in time-varying settings.} Classical approaches for CAPO estimation under time-varying treatments include marginal structural models estimated through IPTW \citep{robins2000marginal} and the parametric g-formula, or g-computation \citep{taubman2009intervening}. More recent methods adapt these strategies to complex longitudinal data with deep learning. \citet{RMSN} combines recurrent models with propensity weighting. \citet{gnet} implement simulation-based g-computation with recurrent networks, while G-Transformer \citep{pmlr-v252-xiong24a} uses Transformer-based sequence models. IGC-Net instead performs iterative regression-based g-computation with a Transformer architecture \citep{hess2026igcnet}. Counterfactual Recurrent Networks \citep{Bica2020Estimating} and Causal Transformer \citep{melnychuk2022causal} learn balanced representations of patient histories. Moving beyond architecture-specific estimators, \citet{frauen2025modelagnostic} introduce model-agnostic meta-learners based on history adjustment, regression adjustment, propensity weighting, doubly robust correction, and inverse-variance weighting, while \citet{hess2026overlapweighted} extend the previous work by introducing an orthogonal overlap-weighted estimator that addresses the diminishing overlap associated with long treatment sequences.

\section{Generative Model Instantiations}
\label{app:generative-model-instantiations}

We consider two conditional generative-model families as instantiations of
the framework: normalizing flows and diffusion models. Following the
score-based formulation of \citet{melnychuk2025gdr}, we describe both the
probabilistic model and the population distributional criterion induced by
its score. Throughout this section, $C$ denotes the conditioning variables:
$C=\bar H_{t+\delta}$ for a continuation model at relative step $\delta$, and $C=\bar H_t$ for the
target generator. We write
$P_{\bar a}(\cdot\mid\bar h_t)
:=P(\Ybarat\in\cdot\mid\bar H_t=\bar h_t)$ for the target CDPO.

\subsection{Conditional normalizing flows}

\textbf{Probabilistic model.}
A conditional normalizing flow maps a base variable $U\sim p_0$ through an
invertible differentiable transformation $f_\theta(\cdot;c)$
\citep{papamakarios2021normalizing}. For
$Y=f_\theta(U;c)$, the change-of-variables formula yields
\begin{equation}
p_\theta(y\mid c)
=
p_0\!\left(f_\theta^{-1}(y;c)\right)
\left|
\det\!\left(
\frac{\partial f_\theta^{-1}(y;c)}{\partial y}
\right)
\right|.
\label{eq:nf-conditional-density}
\end{equation}
Thus, both sampling and density evaluation are available without integrating
over $U$. In the notation of Eq.~\ref{eq:generative-score}, the auxiliary
space is degenerate,
$g_{\bar a}(y,z\mid c)=p_\theta(y\mid c)$, and
$\ell_{g_{\bar a}}(y,c)=\log p_\theta(y\mid c)$.

\textbf{Distributional interpretation.}
Let $\mathcal P_{\mathrm{NF}}$ be the chosen class of conditional flow
densities. If the target CDPO has density $p_{\bar a}$ and finite conditional
entropy
$\mathbb{H}(P_{\bar a})
=-\mathbb E[\log p_{\bar a}(\Ybarat\mid\bar H_t)]$, then the population
flow criterion satisfies
\begin{align}
\mathcal L_{\bar a}^{\mathrm{NF}}(p_\theta)
&:=
\mathbb E\!\left[
\log p_\theta(\Ybarat\mid\bar H_t)
\right]
\nonumber\\
&=
-\mathbb{H}(P_{\bar a})
-
\mathbb E_{\bar H_t}\!\left[
\mathrm{KL}\!\left(
P_{\bar a}(\cdot\mid\bar H_t)
\,\Vert\,
p_\theta(\cdot\mid\bar H_t)
\right)
\right].
\label{eq:nf-risk-kl-decomposition}
\end{align}
The entropy term does not depend on $p_\theta$. Consequently,
\begin{align}
\arg\max_{p_\theta\in\mathcal P_{\mathrm{NF}}}
\mathcal L_{\bar a}^{\mathrm{NF}}(p_\theta)
=
\arg\min_{p_\theta\in\mathcal P_{\mathrm{NF}}}
\mathbb E_{\bar H_t}\!\left[
\mathrm{KL}\!\left(
P_{\bar a}(\cdot\mid\bar H_t)
\,\Vert\,
p_\theta(\cdot\mid\bar H_t)
\right)
\right].
\label{eq:nf-forward-kl-projection}
\end{align}
Conditional maximum likelihood therefore returns the forward-KL projection
of the CDPO onto the flow class.

\subsection{Conditional diffusion models}

\textbf{Probabilistic model.}
Diffusion models specify a fixed forward noising process and learn its
conditional reversal \citep{yang2023diffusion}. Let $j=1,\ldots,S$ index
diffusion steps, separately from longitudinal time, and set $Z_0=Y$. For a
schedule $\alpha_j\in(0,1)$ and
$\bar\alpha_j=\prod_{r=1}^j\alpha_r$, the forward chain is
\begin{align}
q(z_{1:S}\mid y)
&=
\prod_{j=1}^{S}q_j(z_j\mid z_{j-1}),
\nonumber\\
q_j(z_j\mid z_{j-1})
&=
\mathcal N\!\left(
z_j;
\sqrt{\alpha_j}z_{j-1},
(1-\alpha_j)I
\right).
\label{eq:dm-forward-chain}
\end{align}
Its $j$th marginal can be sampled directly as
\begin{equation}
Z_j
=
\sqrt{\bar\alpha_j}\,Y
+
\sqrt{1-\bar\alpha_j}\,\varepsilon,
\qquad
\varepsilon\sim\mathcal N(0,I).
\label{eq:dm-forward-noising}
\end{equation}
For a schedule ending close to pure noise, $Z_S$ is approximately standard
Gaussian. The learned reverse model has joint density
\begin{equation}
p_\theta(y,z_{1:S}\mid c)
=
p_S(z_S)
\prod_{j=1}^{S}
p_\theta(z_{j-1}\mid z_j,c),
\qquad z_0=y,
\label{eq:dm-reverse-joint}
\end{equation}
where $p_S$ is usually standard Gaussian. Samples are generated from $p_S$
and passed through the reverse transitions from $S$ down to $1$.

\textbf{Variational score and distributional interpretation.}
The marginal likelihood of $Y$ requires integration over the noising path.
To obtain a tractable score, set
\begin{equation}
g_{\bar a,\theta}(y,z_{1:S}\mid c)
=
\frac{p_\theta(y,z_{1:S}\mid c)}{q(z_{1:S}\mid y)},
\qquad
\eps_z(\dd z_{1:S}\mid y,c)
=q(z_{1:S}\mid y)\dd z_{1:S}.
\label{eq:dm-generative-representation}
\end{equation}
Equation~\ref{eq:generative-score} then becomes the conditional evidence
lower bound (ELBO):
\begin{align}
\ell_{g_{\bar a,\theta}}(y,c)
&=
\mathbb E_{Z_{1:S}\sim q(\cdot\mid y)}\!\left[
\log
\frac{p_\theta(y,Z_{1:S}\mid c)}
{q(Z_{1:S}\mid y)}
\right]
\nonumber\\
&=
\log p_\theta(y\mid c)
-
\mathrm{KL}\!\left(
q(Z_{1:S}\mid y)
\,\Vert\,
p_\theta(Z_{1:S}\mid y,c)
\right)
\leq
\log p_\theta(y\mid c).
\label{eq:dm-elbo}
\end{align}
Define the average posterior (or inference gap) by
\begin{align}
\mathrm{IG}_{\bar a}(\theta)
:=
\mathbb E_{\bar H_t}
\mathbb E_{Y\sim P_{\bar a}(\cdot\mid\bar H_t)}\!\left[
\mathrm{KL}\!\left(
q(Z_{1:S}\mid Y)
\,\Vert\,
p_\theta(Z_{1:S}\mid Y,\bar H_t)
\right)
\right].
\label{eq:dm-inference-gap}
\end{align}
Averaging Eq.~\ref{eq:dm-elbo} under the target CDPO gives
\begin{align}
\mathcal L_{\bar a}^{\mathrm{DM}}(\theta)
&:=
\mathbb E\!\left[
\ell_{g_{\bar a,\theta}}(\Ybarat,\bar H_t)
\right]
\nonumber\\
&=
-\mathbb{H}(P_{\bar a})
-
\mathbb E_{\bar H_t}\!\left[
\mathrm{KL}\!\left(
P_{\bar a}(\cdot\mid\bar H_t)
\,\Vert\,
p_\theta(\cdot\mid\bar H_t)
\right)
\right]
-\mathrm{IG}_{\bar a}(\theta).
\label{eq:dm-risk-kl-decomposition}
\end{align}
Hence the population diffusion target obeys
\begin{align}
\arg\max_{\theta\in\Theta_{\mathrm{DM}}}
\mathcal L_{\bar a}^{\mathrm{DM}}(\theta)
=
\arg\min_{\theta\in\Theta_{\mathrm{DM}}}
\Bigg\{
&\mathbb E_{\bar H_t}\!\left[
\mathrm{KL}\!\left(
P_{\bar a}(\cdot\mid\bar H_t)
\,\Vert\,
p_\theta(\cdot\mid\bar H_t)
\right)
\right]
+\mathrm{IG}_{\bar a}(\theta)
\Bigg\}.
\label{eq:dm-distance-projection}
\end{align}
Unlike the flow criterion, the diffusion ELBO contains the additional
posterior gap. It vanishes when the fixed forward law agrees with the model
posterior; more generally, it records the price of using $q$ as a
variational approximation.

For Gaussian reverse transitions with fixed variances, the terms involving
their means can be written as weighted noise-prediction errors. A common
practical objective samples $J$ uniformly from $\{1,\ldots,S\}$ and uses
\begin{equation}
\mathcal J_{\mathrm{DM}}(\theta)
=
\mathbb E_{Y,C,J,\varepsilon}
\left[
\left\|
\varepsilon-
\varepsilon_\theta\!\left(
\sqrt{\bar\alpha_J}Y+
\sqrt{1-\bar\alpha_J}\varepsilon,
J,C
\right)
\right\|_2^2
\right].
\label{eq:dm-denoising-loss}
\end{equation}
The exactly weighted objective, together with its endpoint terms, is the
negative ELBO in Eq.~\ref{eq:dm-elbo}; the uniform-weight version in
Eq.~\ref{eq:dm-denoising-loss} is the usual simplified surrogate. In either
case, the per-observation negative denoising loss can be used as the
model-specific score in Eq.~\ref{eq:generative-score}, with the sampled
diffusion step and Gaussian noise serving as the auxiliary variables.

For generative recursive g-computation, these model-specific criteria are applied at each relative step $\delta$ with
$C=\bar H_{t+\delta}$ and with either the observed terminal outcome or the generated
pseudo-outcome as the response. For the target generator, $C=\bar H_t$;
the RA, IPTW, and \OrthoGen risks replace the inaccessible expectation under
$P_{\bar a}$ by the observational representations developed in the main
text.

\section{Additional Learner Details and Risk Representations}
\label{app:additional-learner-details}

\subsection{IPTW learner}
\label{app:iptw-learner}

\textbf{IPTW learner.}
An alternative representation of the oracle risk is obtained through inverse probability of treatment weighting (IPTW). We define the time-specific propensity score
$\pi_{t+\delta}(a_{t+\delta}\mid\bar H_{t+\delta})=P(A_{t+\delta}=a_{t+\delta}\mid\bar H_{t+\delta})$, for $\delta=0,\ldots,\tau-1$, and the one-step and cumulative inverse-propensity ratios, respectively, as:
\begin{equation}
r_{t+\delta}^{\bar a}
=
\frac{\ind\{A_{t+\delta}=a_{t+\delta}\}}
{\pi_{t+\delta}(a_{t+\delta}\mid\bar H_{t+\delta})},
\qquad
R_{t:t+\delta}^{\bar a}
=
\prod_{\delta'=0}^{\delta}r_{t+\delta'}^{\bar a},
\qquad \delta=0,\ldots,\tau-1,
\qquad
R_{t:t-1}^{\bar a}=1.
\label{eq:iptw-cumulative-treatments}
\end{equation}
Under the identification assumptions of Sec.~\ref{sec:problem-setting}, the oracle target risk in Eq.~\ref{eq:oracle-risk} can be written as
$
\mathcal L_{\bar a}(g_{\bar a}) =
\mathbb E
\left[
R_{t:t+\tau-1}^{\bar a}
\ell_{g_{\bar a}}(Y_{t+\tau},\bar H_t)
\right]$.
Let $\bpi$, $\br$, and $\bR$ denote the corresponding estimated quantities. The empirical IPTW risk is
\begin{align}
\widehat{\mathcal L}_{\mathrm{IPTW}}^{\bar a}(g_{\bar a})
=
\Pb\left[
\bR_{t:t+\tau-1}^{\bar a}
\ell_{g_{\bar a}}(Y_{t+\tau},\bar H_t)
\right].
\label{eq:ipw-risk}
\end{align}
Unlike the RA learner, the IPTW learner does not rely on the continuation nuisances and instead requires consistent estimation of the treatment mechanism.

\subsection{Oracle-risk representation of the RA learner}
\label{app:ra-risk-representation}

\begin{lemma}[Population equivalence of the RA risk]
\label{lem:ra-risk-equivalence}
Suppose that the standard causal assumptions stated in
Sec.~\ref{sec:problem-setting} hold and that the relevant quantities are
integrable. Let the continuation densities
$\{\xi_{t+\delta}^{\bar a}\}_{\delta=0}^{\tau-1}$ be defined as in
Sec.~\ref{sec:generative-recursive-gcomp}, and let $m_{t+\delta}^{g_{\bar a},\bar a}$ be the corresponding
population score responses defined in Eq.~\ref{eq:ra-score-response}.
Then, for every $g_{\bar a}\in\cG$,
\begin{align}
\mathcal L_{\bar a}(g_{\bar a})
&=
\mathbb E\left[
m_t^{g_{\bar a},\bar a}(\bar H_t)
\right].
\label{eq:ra-population-risk-equivalence}
\end{align}
Hence, the population risk underlying the RA learner is
identical to the oracle target generative risk in
Eq.~\ref{eq:oracle-risk}.
\end{lemma}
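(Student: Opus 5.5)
The plan is to reduce the statement to Proposition~\ref{prop:recursive-gcomp-identification} via the tower property, conditioning on the history at the prediction origin. I would start from the definition of the oracle risk in Eq.~\ref{eq:oracle-risk} and write
\begin{equation*}
\mathcal L_{\bar a}(g_{\bar a})
=
\mathbb E\left[
\mathbb E\left[
\ell_{g_{\bar a}}\left(\Ybarat,\bar H_t\right)
\,\middle|\,
\bar H_t
\right]
\right].
\end{equation*}
This step requires integrability of $\ell_{g_{\bar a}}(\Ybarat,\bar H_t)$, which is assumed in the statement.

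Next, I would evaluate the inner conditional expectation. Since $\bar H_t$ is fixed by the conditioning, the score $\ell_{g_{\bar a}}(\cdot,\bar h_t)$ is a deterministic function of $y$. The conditional expectation is then the integral of this function against the conditional law of $\Ybarat$ given $\bar H_t=\bar h_t$:
\begin{equation*}
\mathbb E\left[
\ell_{g_{\bar a}}\left(\Ybarat,\bar H_t\right)
\,\middle|\,
\bar H_t=\bar h_t
\right]
=
\int_{\mathcal Y}
\ell_{g_{\bar a}}(y,\bar h_t)\,
P\left(\Ybarat=y\mid\bar H_t=\bar h_t\right)\dd y .
\end{equation*}
This uses the regular conditional distribution of $(\Ybarat,\bar H_t)$, i.e., a disintegration of the joint law. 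It also uses that $\ell_{g_{\bar a}}$ is jointly measurable, which holds because it is defined as an expectation over the auxiliary variable $Z$ in Eq.~\ref{eq:generative-score}.

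Then I would invoke Proposition~\ref{prop:recursive-gcomp-identification} at $\delta=0$. It gives $P(\Ybarat=y\mid\bar H_t=\bar h_t)=\xi_t^{\bar a}(y\mid\bar h_t)$ for every $y$ and every $\bar h_t$ in the support. Substituting this into the integral yields exactly $m_t^{g_{\bar a},\bar a}(\bar h_t)$ as defined in Eq.~\ref{eq:ra-score-response}. Taking the outer expectation over $\bar H_t$ then gives Eq.~\ref{eq:ra-population-risk-equivalence}.

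The only real subtlety is a measure-theoretic one. The identity $P(\Ybarat=y\mid\bar H_t)=\xi_t^{\bar a}$ holds only $P(\bar H_t)$-almost surely and for densities defined up to null sets. I would argue that this suffices, because the outer expectation ignores null sets. I would also note that Fubini's theorem applies, since the integrand is integrable by assumption. All the causal content (consistency, positivity, sequential exchangeability) is already packaged into Proposition~\ref{prop:recursive-gcomp-identification}, so no further use of the assumptions is needed.
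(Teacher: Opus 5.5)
Your proposal is correct and matches the paper's proof step for step: the tower property on $\bar H_t$, writing the inner conditional expectation as an integral against the conditional law of $\Ybarat$, substituting $\xi_t^{\bar a}$ via Proposition~\ref{prop:recursive-gcomp-identification}, and recognizing $m_t^{g_{\bar a},\bar a}$ from Eq.~\ref{eq:ra-score-response}. Your added remarks on almost-sure equality of conditional densities and on integrability make explicit details that the paper leaves implicit.
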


\begin{proof}
Starting from Eq.~\ref{eq:oracle-risk},
\begin{align}
\mathcal L_{\bar a}(g_{\bar a})
&=
\mathbb E\!\left[
\ell_{g_{\bar a}}(\Ybarat,\bar H_t)
\right] \\
&=
\mathbb E\!\left[
\mathbb E\!\left[
\ell_{g_{\bar a}}(\Ybarat,\bar H_t)
\mid \bar H_t
\right]
\right] \\
&=
\mathbb E\!\left[
\int_{\mathcal Y}
\ell_{g_{\bar a}}(y,\bar H_t)
P(\Ybarat=y\mid\bar H_t)\,dy
\right] \\
&=
\mathbb E\!\left[
\int_{\mathcal Y}
\ell_{g_{\bar a}}(y,\bar H_t)
\xi_t^{\bar a}(y\mid\bar H_t)\,dy
\right] \\
&=
\mathbb E\!\left[
m_t^{g_{\bar a},\bar a}(\bar H_t)
\right].
\end{align}
The second equality is the tower property, the third expresses the
conditional expectation through the conditional law of \(\Ybarat\), the
fourth invokes Proposition~\ref{prop:recursive-gcomp-identification}, and
the last is Eq.~\ref{eq:ra-score-response}.
\end{proof}

\subsection{Oracle-risk representation of the IPTW learner}
\label{app:iptw-risk-representation}

\begin{lemma}[Population equivalence of the IPTW risk]
\label{lem:iptw-risk-equivalence}
Suppose that the standard causal assumptions stated in
Sec.~\ref{sec:problem-setting} hold and that the relevant quantities are
integrable. Let the cumulative inverse-propensity ratios
$R_{t:t+\delta}^{\bar a}$ be defined as in Sec.~\ref{sec:orthogen}.
Then, for every $g_{\bar a}\in\cG$,
\begin{align}
\mathcal L_{\bar a}(g_{\bar a})
&=
\mathbb E
\left[
R_{t:t+\tau-1}^{\bar a}
\ell_{g_{\bar a}}(Y_{t+\tau},\bar H_t)
\right].
\label{eq:iptw-population-risk-equivalence}
\end{align}
Hence, the population risk underlying the IPTW learner is identical to
the oracle target generative risk in Eq.~\ref{eq:oracle-risk}.
\end{lemma}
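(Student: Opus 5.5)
Following the proof of Lemma~\ref{lem:ra-risk-equivalence}, we reduce the IPTW representation to the continuation densities from Proposition~\ref{prop:recursive-gcomp-identification}. The key is a backward peeling argument: remove the inverse-propensity ratios one at a time, starting from the final treatment step, while keeping $\ell_{g_{\bar a}}(\cdot,\bar H_t)$ fixed. This is legitimate because $\bar H_t\subseteq\bar H_{t+\delta}$ for every $\delta\ge 0$, so the score function acts as a fixed, $\bar H_{t+\delta}$-measurable function of $y$ at every stage. By Lemma~\ref{lem:ra-risk-equivalence}, it suffices to show that $\mathbb E[R_{t:t+\tau-1}^{\bar a}\ell_{g_{\bar a}}(Y_{t+\tau},\bar H_t)]=\mathbb E[m_t^{g_{\bar a},\bar a}(\bar H_t)]$.

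\emph{Step 1 (claim to be proved by backward induction).} For each $\delta=\tau-1,\ldots,0$ we show
\begin{equation*}
\mathbb E\left[R_{t:t+\tau-1}^{\bar a}\ell_{g_{\bar a}}(Y_{t+\tau},\bar H_t)\right]
=\mathbb E\left[R_{t:t+\delta-1}^{\bar a}\, m_{t+\delta}^{g_{\bar a},\bar a}(\bar H_{t+\delta})\right].
\end{equation*}

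\emph{Base case $\delta=\tau-1$.} Write $R_{t:t+\tau-1}^{\bar a}=R_{t:t+\tau-2}^{\bar a}\,r_{t+\tau-1}^{\bar a}$, where the first factor is $\bar H_{t+\tau-1}$-measurable. Conditioning on $\bar H_{t+\tau-1}$ and using the tower property gives
\begin{equation*}
\mathbb E\left[r_{t+\tau-1}^{\bar a}\,\ell_{g_{\bar a}}(Y_{t+\tau},\bar H_t)\mid\bar H_{t+\tau-1}\right]
=\frac{\pi_{t+\tau-1}}{\pi_{t+\tau-1}}\,\mathbb E\left[\ell_{g_{\bar a}}(Y_{t+\tau},\bar H_t)\mid \bar H_{t+\tau-1},A_{t+\tau-1}=a_{t+\tau-1}\right],
\end{equation*}
which equals $m_{t+\tau-1}^{g_{\bar a},\bar a}(\bar H_{t+\tau-1})$ by Eq.~\ref{eq:terminal-stage-xi}. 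Positivity ensures the ratio is well defined.

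\emph{Inductive step.} Suppose the claim holds at step $\delta+1$. Factor $R_{t:t+\delta}^{\bar a}=R_{t:t+\delta-1}^{\bar a}r_{t+\delta}^{\bar a}$ and condition on $\bar H_{t+\delta}$. The propensity again cancels, leaving
\begin{equation*}
\mathbb E\left[m_{t+\delta+1}^{g_{\bar a},\bar a}(\bar H_{t+\delta+1})\mid\bar H_{t+\delta},A_{t+\delta}=a_{t+\delta}\right].
\end{equation*}
Apply Fubini to exchange the $y$-integral in Eq.~\ref{eq:ra-score-response} with this conditional expectation, which is allowed by the integrability assumption. Then use the recursion Eq.~\ref{eq:xi-recursion} and the fact that $\ell_{g_{\bar a}}(y,\bar H_t)$ is $\bar H_{t+\delta}$-measurable, so it can be pulled outside the inner expectation. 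This yields exactly $m_{t+\delta}^{g_{\bar a},\bar a}(\bar H_{t+\delta})$.

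At $\delta=0$, $R_{t:t-1}^{\bar a}=1$, so the claim gives $\mathbb E[m_t^{g_{\bar a},\bar a}(\bar H_t)]$. This equals $\mathcal L_{\bar a}(g_{\bar a})$ by Lemma~\ref{lem:ra-risk-equivalence}.

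The causal assumptions enter only through Proposition~\ref{prop:recursive-gcomp-identification}, which is used inside Lemma~\ref{lem:ra-risk-equivalence}, and through positivity in the ratio cancellation. The main obstacle is the inductive step: it requires careful measurability bookkeeping so that the score, which depends on $\bar H_t$ rather than $\bar H_{t+\delta}$, can be held fixed, plus the integrability needed for Fubini.
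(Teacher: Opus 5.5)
Your proposal is correct. Its peeling half is exactly the paper's argument: you factor $R_{t:t+\delta}^{\bar a}=R_{t:t+\delta-1}^{\bar a}r_{t+\delta}^{\bar a}$, use that the first factor is $\bar H_{t+\delta}$-measurable, and cancel the propensity by conditioning on $\bar H_{t+\delta}$.

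The genuine difference is how the peeled nested regressions get their causal meaning.

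\emph{The paper's route.} It works directly on the score. It defines $Q_{t+\tau}:=\ell_{g_{\bar a}}(Y_{t+\tau},\bar H_t)$ and $Q_{t+\delta}:=\mathbb E[Q_{t+\delta+1}\mid\bar H_{t+\delta},A_{t+\delta}=a_{t+\delta}]$, and peels down to $\mathbb E[Q_t(\bar H_t)]$. A separate backward induction then shows $Q_{t+\delta}(\bar H_{t+\delta})=\mathbb E[\ell_{g_{\bar a}}(Y_{t+\tau}[a_{t+\delta:t+\tau-1}],\bar H_t)\mid\bar H_{t+\delta}]$. That induction applies recursive consistency and sequential exchangeability to the transformed variable $\ell_{g_{\bar a}}(Y_{t+\tau}[a_{t+\delta:t+\tau-1}],\bar H_t)$. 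It never uses $\xi^{\bar a}$ or Proposition~\ref{prop:recursive-gcomp-identification}.

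\emph{Your route.} You identify the nested regressions with the continuation scores, $Q_{t+\delta}=m_{t+\delta}^{g_{\bar a},\bar a}$, via Fubini and the $\xi$-recursion, which is a purely observational identity. You then delegate the causal step to Lemma~\ref{lem:ra-risk-equivalence}, and hence to Proposition~\ref{prop:recursive-gcomp-identification}.

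\emph{What each buys.} Your route is more modular. It makes explicit that the IPTW and RA population risks coincide stage by stage through the same $m_{t+\delta}^{g_{\bar a},\bar a}$, which is the recursion the paper later relies on in the orthogonality and doubly robust remainder proofs. The paper's route is self-contained and works only with conditional expectations of the score. It needs no conditional densities and no pointwise-in-$y$ identification. It also avoids the extra integrability of $\int_{\mathcal Y}|\ell_{g_{\bar a}}(y,\bar H_t)|\,\xi_{t+\delta+1}^{\bar a}(y\mid\bar H_{t+\delta+1})\,\mathrm d y$ that your Fubini exchange requires.
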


\begin{proof}
The stated integrability condition ensures that all conditional
expectations below are well defined. For every relevant integrable
random variable \(Z\), positivity and the definition of the propensity
score give, almost surely,
\begin{align}
\mathbb E\!\left[
r_{t+\delta}^{\bar a}Z
\mid \bar H_{t+\delta}
\right]
&=
\frac{1}{\pi_{t+\delta}(a_{t+\delta}\mid\bar H_{t+\delta})}
\mathbb E\!\left[
\ind\{A_{t+\delta}=a_{t+\delta}\}Z
\mid \bar H_{t+\delta}
\right] \\
&=
\frac{P(A_{t+\delta}=a_{t+\delta}\mid\bar H_{t+\delta})}
{\pi_{t+\delta}(a_{t+\delta}\mid\bar H_{t+\delta})}
\mathbb E\!\left[
Z\mid\bar H_{t+\delta},A_{t+\delta}=a_{t+\delta}
\right] \\
&=
\mathbb E\!\left[
Z\mid\bar H_{t+\delta},A_{t+\delta}=a_{t+\delta}
\right].
\label{eq:proof-basic-conditional-weighting}
\end{align}
Thus Eq.~\ref{eq:proof-basic-conditional-weighting} is an
inverse-propensity weighting identity; positivity justifies every
division by \(\pi_{t+\delta}(a_{t+\delta}\mid\bar H_{t+\delta})\).

Define
\begin{align}
Q_{t+\tau}(\bar H_{t+\tau})
&:=
\ell_{g_{\bar a}}(Y_{t+\tau},\bar H_t), \\
Q_{t+\delta}(\bar H_{t+\delta})
&:=
\mathbb E\!\left[
Q_{t+\delta+1}(\bar H_{t+\delta+1})
\mid \bar H_{t+\delta},A_{t+\delta}=a_{t+\delta}
\right],
\qquad \delta=\tau-1,\ldots,0.
\label{eq:proof-iptw-Q-recursion}
\end{align}
Because \(\bar H_{t+\delta}\) is the pre-interventional history, it contains
all treatments through time \(t+\delta-1\); hence
\(R_{t:t+\delta-1}^{\bar a}\) is
\(\bar H_{t+\delta}\)-measurable. Consequently, for each
\(\delta=0,\ldots,\tau-1\),
\begin{align}
&\mathbb E\!\left[
R_{t:t+\delta}^{\bar a}Q_{t+\delta+1}(\bar H_{t+\delta+1})
\right] \\
&\quad=
\mathbb E\!\left[
R_{t:t+\delta-1}^{\bar a}
\mathbb E\!\left[
r_{t+\delta}^{\bar a}Q_{t+\delta+1}(\bar H_{t+\delta+1})
\mid\bar H_{t+\delta}
\right]
\right] \\
&\quad=
\mathbb E\!\left[
R_{t:t+\delta-1}^{\bar a}
\mathbb E\!\left[
Q_{t+\delta+1}(\bar H_{t+\delta+1})
\mid\bar H_{t+\delta},A_{t+\delta}=a_{t+\delta}
\right]
\right] \\
&\quad=
\mathbb E\!\left[
R_{t:t+\delta-1}^{\bar a}Q_{t+\delta}(\bar H_{t+\delta})
\right].
\label{eq:proof-peel-iptw-factor}
\end{align}
The first equality uses the tower property and the measurability of the
previous-stage cumulative weight, the second uses
Eq.~\ref{eq:proof-basic-conditional-weighting}, and the third uses the
definition of \(Q_{t+\delta}\). Peeling
Eq.~\ref{eq:proof-peel-iptw-factor} backward from
\(\delta=\tau-1\) through \(\delta=0\), and using the empty-product
convention \(R_{t:t-1}^{\bar a}=1\), yields
\begin{align}
\mathbb E\!\left[
R_{t:t+\tau-1}^{\bar a}\ell_{g_{\bar a}}(Y_{t+\tau},\bar H_t)
\right]
&=
\mathbb E\!\left[
Q_t(\bar H_t)
\right].
\label{eq:proof-iptw-to-Qt}
\end{align}

It remains to give the causal interpretation of \(Q_t\). We show by
backward induction that, for \(\delta=0,\ldots,\tau-1\),
\begin{align}
Q_{t+\delta}(\bar H_{t+\delta})
&=
\mathbb E\!\left[
\ell_{g_{\bar a}}\!\left(
Y_{t+\tau}[a_{t+\delta:t+\tau-1}],\bar H_t
\right)
\mid\bar H_{t+\delta}
\right].
\label{eq:proof-Qt-causal-interpretation}
\end{align}
At \(\delta=\tau-1\), the recursion in
Eq.~\ref{eq:proof-iptw-Q-recursion} gives
\begin{align}
Q_{t+\tau-1}(\bar H_{t+\tau-1})
&=
\mathbb E\!\left[
\ell_{g_{\bar a}}(Y_{t+\tau},\bar H_t)
\mid\bar H_{t+\tau-1},A_{t+\tau-1}=a_{t+\tau-1}
\right] \\
&=
\mathbb E\!\left[
\ell_{g_{\bar a}}(Y_{t+\tau}[a_{t+\tau-1}],\bar H_t)
\mid\bar H_{t+\tau-1},A_{t+\tau-1}=a_{t+\tau-1}
\right] \\
&=
\mathbb E\!\left[
\ell_{g_{\bar a}}(Y_{t+\tau}[a_{t+\tau-1}],\bar H_t)
\mid\bar H_{t+\tau-1}
\right].
\label{eq:proof-iptw-causal-base}
\end{align}
The second equality is terminal recursive consistency on
\(A_{t+\tau-1}=a_{t+\tau-1}\), and the third is sequential
exchangeability at time \(t+\tau-1\).

Now suppose Eq.~\ref{eq:proof-Qt-causal-interpretation} holds at
\(t+\delta+1\), for some \(\delta<\tau-1\). Then
\begin{align}
Q_{t+\delta}(\bar H_{t+\delta})
&=
\mathbb E\!\left[
Q_{t+\delta+1}(\bar H_{t+\delta+1})
\mid\bar H_{t+\delta},A_{t+\delta}=a_{t+\delta}
\right] \\
&=
\mathbb E\!\left[
\mathbb E\!\left[
\ell_{g_{\bar a}}\!\left(
Y_{t+\tau}[a_{t+\delta+1:t+\tau-1}],\bar H_t
\right)
\mid\bar H_{t+\delta+1}
\right]
\mathrel{\Big|}
\bar H_{t+\delta},A_{t+\delta}=a_{t+\delta}
\right] \\
&=
\mathbb E\!\left[
\mathbb E\!\left[
\ell_{g_{\bar a}}\!\left(
Y_{t+\tau}[a_{t+\delta:t+\tau-1}],\bar H_t
\right)
\mid\bar H_{t+\delta+1}
\right]
\mathrel{\Big|}
\bar H_{t+\delta},A_{t+\delta}=a_{t+\delta}
\right] \\
&=
\mathbb E\!\left[
\ell_{g_{\bar a}}\!\left(
Y_{t+\tau}[a_{t+\delta:t+\tau-1}],\bar H_t
\right)
\mid\bar H_{t+\delta},A_{t+\delta}=a_{t+\delta}
\right] \\
&=
\mathbb E\!\left[
\ell_{g_{\bar a}}\!\left(
Y_{t+\tau}[a_{t+\delta:t+\tau-1}],\bar H_t
\right)
\mid\bar H_{t+\delta}
\right].
\label{eq:proof-iptw-causal-step}
\end{align}
The first equality is the definition of \(Q_{t+\delta}\), and the second is the
induction hypothesis. For the third equality, recursive
consistency/composition gives
\(Y_{t+\tau}[a_{t+\delta:t+\tau-1}]
=Y_{t+\tau}[a_{t+\delta+1:t+\tau-1}]\) on
\(A_{t+\delta}=a_{t+\delta}\).
The fourth equality is the tower property, since \(\bar H_{t+\delta+1}\)
contains \((\bar H_{t+\delta},A_{t+\delta})\), and the fifth is sequential
exchangeability at time \(t+\delta\). In particular, \(\bar H_t\) is
contained in every \(\bar H_{t+\delta}\), so it is measurable in all these
conditioning steps and
sequential exchangeability for the potential outcome also applies to
the displayed score transformation.

Taking \(\delta=0\) in Eq.~\ref{eq:proof-Qt-causal-interpretation} and using
the tower property therefore gives
\begin{align}
\mathbb E\!\left[Q_t(\bar H_t)\right]
&=
\mathbb E\!\left[
\ell_{g_{\bar a}}\!\left(
Y_{t+\tau}[a_{t:t+\tau-1}],\bar H_t
\right)
\right] \\
&=
\mathbb E\!\left[
\ell_{g_{\bar a}}(\Ybarat,\bar H_t)
\right]
=
\mathcal L_{\bar a}(g_{\bar a}),
\end{align}
where \(a_{t:t+\tau-1}=\barat\). Combining this equality
with Eq.~\ref{eq:proof-iptw-to-Qt} proves the claim.
\end{proof}

\section{Generative Recursive G-Computation}
\label{app:generative-recursive-gcomp-details}

\subsection{Pseudocode for generative recursive g-computation}
\label{app:generative-recursive-gcomp-pseudocode}

Algorithm~\ref{alg:generative-recursive-gcomp} presents the pseudocode for generative recursive g-computation.

\begin{algorithm}[H]
\caption{Generative recursive g-computation}
\label{alg:generative-recursive-gcomp}
\begin{algorithmic}[1]

\Require Longitudinal training sample $\mathcal D$, prediction origin $t$,
horizon $\tau$, fixed intervention $\barat$,
parameter spaces $\{\Phi_{t+\delta}\}_{\delta=0}^{\tau-1}$ for the
nuisance generators $\xi_{\phi,t+\delta}^{\bar a}$,
model-specific generative score $\ell$, and Monte Carlo size $M$.

\Ensure Fitted continuation laws
$\{\bxi_{t+\delta}^{\bar a}\}_{\delta=0}^{\tau-1}$.

\State $\mathcal I_{t+\tau-1}\gets\{i:A_{t+\tau-1}^{(i)}=a_{t+\tau-1}\}$

\State Fit the final-stage nuisance generator:
\Statex \hspace{1em}$\displaystyle
\widehat\phi_{t+\tau-1}
\in
\arg\max_{\phi\in\Phi_{t+\tau-1}}
\frac{1}{|\mathcal I_{t+\tau-1}|}
\sum_{i\in\mathcal I_{t+\tau-1}}
\ell_{\xi_{\phi,t+\tau-1}^{\bar a}}
\left(
Y_{t+\tau}^{(i)},
\bar H_{t+\tau-1}^{(i)}
\right).
$

\State
$\bxi_{t+\tau-1}^{\bar a}
\gets
\xi_{\widehat\phi_{t+\tau-1},t+\tau-1}^{\bar a}$

\For{$\delta=\tau-2,\tau-3,\ldots,0$}

    \State
  $\mathcal I_{t+\delta}\gets\{i:A_{t+\delta}^{(i)}=a_{t+\delta}\}$

  \State Hold $\bxi_{t+\delta+1}^{\bar a}$ fixed as the teacher distribution.

  \For{$i\in\mathcal I_{t+\delta}$}
        \For{$m=1,\ldots,M$}
            \State Draw
      $\widetilde Y_{i,t+\delta+1}^{(m)}
            \sim
      \bxi_{t+\delta+1}^{\bar a}
      (\cdot\mid\bar H_{t+\delta+1}^{(i)})$
        \EndFor
    \EndFor

  \State Fit the time-$t+\delta$ nuisance generator to the pseudo-outcomes:
    \Statex \hspace{1em}$\displaystyle
  \widehat\phi_{t+\delta}
    \in
  \arg\max_{\phi\in\Phi_{t+\delta}}
  \frac{1}{|\mathcal I_{t+\delta}|M}
  \sum_{i\in\mathcal I_{t+\delta}}
    \sum_{m=1}^{M}
  \ell_{\xi_{\phi,t+\delta}^{\bar a}}
    \left(
  \widetilde Y_{i,t+\delta+1}^{(m)},
  \bar H_{t+\delta}^{(i)}
    \right).
    $

    \State
  $\bxi_{t+\delta}^{\bar a}
    \gets
  \xi_{\widehat\phi_{t+\delta},t+\delta}^{\bar a}$

\EndFor

\State \Return $\{\bxi_{t+\delta}^{\bar a}\}_{\delta=0}^{\tau-1}$

\end{algorithmic}
\end{algorithm}

\subsection{Proof of Proposition \ref{prop:recursive-gcomp-identification}}
\label{app:generative-recursive-gcomp}

\begin{proof}
We first verify the desired representation at the terminal stage
$t+\tau-1$ and then use the recursion in Eq.~\ref{eq:xi-recursion} to
move successively from stage $t+\delta+1$ to stage $t+\delta$, for
$\delta=\tau-2,\ldots,0$. Specifically, we establish that
\begin{align}
\xi_{t+\delta}^{\bar a}(y\mid\bar h_{t+\delta})
=P\!\left(
Y_{t+\tau}[a_{t+\delta:t+\tau-1}]=y
\mid
\bar H_{t+\delta}=\bar h_{t+\delta}
\right)
\label{eq:proof-stagewise-continuation}
\end{align}
for every $\delta=0,\ldots,\tau-1$ and every relevant
$\bar h_{t+\delta}$. Throughout,
positivity guarantees that the conditional distributions given
$A_{t+\delta}=a_{t+\delta}$ appearing below are identified on the relevant
support.

At the terminal stage, Eq.~\ref{eq:terminal-stage-xi}, recursive
consistency, and sequential exchangeability give
\begin{align}
\xi_{t+\tau-1}^{\bar a}(y\mid\bar h_{t+\tau-1})
&=
P\!\left(
Y_{t+\tau}=y
\mid
\bar H_{t+\tau-1}=\bar h_{t+\tau-1},
A_{t+\tau-1}=a_{t+\tau-1}
\right) \\
&=
P\!\left(
Y_{t+\tau}[a_{t+\tau-1}]=y
\mid
\bar H_{t+\tau-1}=\bar h_{t+\tau-1},
A_{t+\tau-1}=a_{t+\tau-1}
\right) \\
&=
P\!\left(
Y_{t+\tau}[a_{t+\tau-1}]=y
\mid
\bar H_{t+\tau-1}=\bar h_{t+\tau-1}
\right).
\end{align}
The first equality is the terminal condition. The next equality uses the
terminal case of recursive consistency: on the event
$A_{t+\tau-1}=a_{t+\tau-1}$,
$Y_{t+\tau}[a_{t+\tau-1}]=Y_{t+\tau}$. The final equality removes
$A_{t+\tau-1}=a_{t+\tau-1}$ by sequential exchangeability.

Now consider any $\delta<\tau-1$ for which
Eq.~\ref{eq:proof-stagewise-continuation} has already been established at
stage $t+\delta+1$. For a relevant history $\bar h_{t+\delta}$, the
recursive definition and the representation at stage $t+\delta+1$ give
\begin{align}
\xi_{t+\delta}^{\bar a}(y\mid\bar h_{t+\delta})
&=
\mathbb E\!\left[
\xi_{t+\delta+1}^{\bar a}(y\mid\bar H_{t+\delta+1})
\mathrel{\Big|}
\bar H_{t+\delta}=\bar h_{t+\delta},
A_{t+\delta}=a_{t+\delta}
\right] \\
&=
\mathbb E\!\left[
P\!\left(
Y_{t+\tau}[a_{t+\delta+1:t+\tau-1}]=y
\mid
\bar H_{t+\delta+1}
\right)
\mathrel{\Big|}
\bar H_{t+\delta}=\bar h_{t+\delta},
A_{t+\delta}=a_{t+\delta}
\right] \\
&=
\mathbb E\!\left[
P\!\left(
Y_{t+\tau}[a_{t+\delta:t+\tau-1}]=y
\mid
\bar H_{t+\delta+1}
\right)
\mathrel{\Big|}
\bar H_{t+\delta}=\bar h_{t+\delta},
A_{t+\delta}=a_{t+\delta}
\right] \\
&=
P\!\left(
Y_{t+\tau}[a_{t+\delta:t+\tau-1}]=y
\mid
\bar H_{t+\delta}=\bar h_{t+\delta},
A_{t+\delta}=a_{t+\delta}
\right) \\
&=
P\!\left(
Y_{t+\tau}[a_{t+\delta:t+\tau-1}]=y
\mid
\bar H_{t+\delta}=\bar h_{t+\delta}
\right).
\label{eq:proof-recursive-step}
\end{align}
The first equality is the recursion in Eq.~\ref{eq:xi-recursion}, and the
second uses the representation already obtained at stage $t+\delta+1$.
Under the conditioning event $A_{t+\delta}=a_{t+\delta}$, recursive
consistency gives
$Y_{t+\tau}[a_{t+\delta:t+\tau-1}]
=Y_{t+\tau}[a_{t+\delta+1:t+\tau-1}]$, which yields the third equality.
Because $\bar H_{t+\delta+1}$ contains both $\bar H_{t+\delta}$ and
$A_{t+\delta}$, the next
equality follows from the law of iterated expectations, applied to the
conditional density at $y$. The final equality again follows from
sequential exchangeability. Starting from the terminal-stage equality
and applying this recursive step for $\delta=\tau-2,\ldots,0$ establishes
Eq.~\ref{eq:proof-stagewise-continuation} at every stage. At $\delta=0$,
the remaining sequence is the full treatment sequence
$a_{t:t+\tau-1}=\barat$, and therefore
\begin{align}
\xi_t^{\bar a}(y\mid\bar h_t)
&=
P\!\left(
Y_{t+\tau}[a_{t:t+\tau-1}]=y
\mid
\bar H_t=\bar h_t
\right) \\
&=
P\!\left(
\Ybarat=y
\mid
\bar H_t=\bar h_t
\right),
\end{align}
which is the CDPO in Eq.~\ref{eq:target-law}.
\end{proof}

\section{Theoretical Properties of \OrthoGen}
\label{app:orthogen-theoretical-properties}

\subsection{Neyman-orthogonality}
\label{app:neyman-orthogonality}
Let \(\eta=(\pi,\xi)\) collect the true propensity-score and
continuation-density nuisances. Let
\(\widetilde\eta=(\widetilde\pi,\widetilde\xi)\) be any candidate pair of
conditional treatment laws and continuation densities for which the path
below is admissible.

For every \(g_{\bar a}\in\cG\), let
\(s\mapsto g_{\bar a,s}\in\cG\) be an admissible differentiable
target-model path with \(g_{\bar a,0}=g_{\bar a}^\star\) and tangent direction
\(g_{\bar a}-g_{\bar a}^\star\) at \(s=0\).

For any fixed \(0<\epsilon<1\), consider the nuisance path
\(\eta_u=(\pi_u,\xi_u)\), \(u\in[0,\epsilon)\), where, for every
\(\delta=0,\ldots,\tau-1\),
\begin{equation}
\begin{aligned}
\pi_{t+\delta,u}(\cdot\mid\bar h_{t+\delta})
&:=
(1-u)\pi_{t+\delta}(\cdot\mid\bar h_{t+\delta})
+
u\widetilde\pi_{t+\delta}(\cdot\mid\bar h_{t+\delta}),
\\
\xi_{t+\delta,u}^{\bar a}(y\mid\bar h_{t+\delta})
&:=
(1-u)\xi_{t+\delta}^{\bar a}(y\mid\bar h_{t+\delta})
+
u\widetilde\xi_{t+\delta}^{\bar a}(y\mid\bar h_{t+\delta}).
\end{aligned}
\label{eq:orth-mixture-path}
\end{equation}
Positivity gives
\begin{equation}
\pi_{t+\delta,u}(a_{t+\delta}\mid\bar H_{t+\delta})
\geq
(1-u)\pi_{t+\delta}(a_{t+\delta}\mid\bar H_{t+\delta})>0,
\qquad 0\leq u<\epsilon,
\end{equation}
so the path is locally positive at \(u=0\). Define
\begin{equation}
r_{u,t+\delta}^{\bar a}
:=
\frac{\ind\{A_{t+\delta}=a_{t+\delta}\}}
{\pi_{t+\delta,u}(a_{t+\delta}\mid\bar H_{t+\delta})},
\qquad
R_{u,t:t+\delta}^{\bar a}
:=
\prod_{\delta'=0}^{\delta}r_{u,t+\delta'}^{\bar a},
\qquad \delta=0,\ldots,\tau-1,
\qquad
R_{u,t:t-1}^{\bar a}:=1.
\label{eq:orth-path-weights}
\end{equation}

For \(g_{\bar a}\in\cG\), define
\begin{equation}
m_{t+\delta,u}^{g_{\bar a},\bar a}(\bar h_{t+\delta})
:=
\int_{\mathcal Y}
\ell_{g_{\bar a}}(y,\bar h_t)
\xi_{t+\delta,u}^{\bar a}(y\mid\bar h_{t+\delta})\dd y,
\qquad
\delta=0,\ldots,\tau-1,
\label{eq:orth-response}
\end{equation}
and
\begin{equation}
m_{t+\tau,u}^{g_{\bar a},\bar a}(\bar H_{t+\tau})
:=
\ell_{g_{\bar a}}(Y_{t+\tau},\bar H_t).
\label{eq:orth-terminal-response}
\end{equation}
Write \(m_{t+\delta}^{g_{\bar a},\bar a}
:=m_{t+\delta,0}^{g_{\bar a},\bar a}\) at the true nuisance.

Along this path, the population \OrthoGen objective is
\begin{equation}
\begin{aligned}
\mathcal L_{\mathrm{OrthoGen}}^{\bar a}(g_{\bar a};\eta_u)
:=
\mathbb E_0\Bigg[
&R_{u,t:t+\tau-1}^{\bar a}
\ell_{g_{\bar a}}(Y_{t+\tau},\bar H_t)
+ \sum_{\delta=0}^{\tau-1}
R_{u,t:t+\delta-1}^{\bar a}
\left(1-r_{u,t+\delta}^{\bar a}\right)
m_{t+\delta,u}^{g_{\bar a},\bar a}(\bar H_{t+\delta})
\Bigg].
\end{aligned}
\label{eq:orth-population-objective}
\end{equation}

Assume that the paths are locally admissible and that the score path has a
measurable right derivative and an integrable local envelope sufficient for
Fubini's theorem, conditional dominated convergence, and the differentiations
below. Endpoint derivatives are right directional derivatives, or
G\^ateaux derivatives for paths extending to a two-sided neighborhood.

Define the pathwise score derivative in the direction
\(g_{\bar a}-g_{\bar a}^\star\) by
\begin{equation}
\dot\ell_{g_{\bar a}}(y,\bar h_t)
:=
D_{g_{\bar a}}
\ell_{g_{\bar a}^\star}(y,\bar h_t)
\left[
g_{\bar a}-g_{\bar a}^\star
\right]
:=
\left.
\frac{d}{ds}
\ell_{g_{\bar a,s}}(y,\bar h_t)
\right|_{s=0}.
\label{eq:orth-score-derivative}
\end{equation}

For \(\delta=0,\ldots,\tau-1\), let
\begin{equation}
\begin{aligned}
\dot m_{t+\delta,u}^{g_{\bar a},\bar a}(\bar h_{t+\delta})
&:=
\left.
\frac{d}{ds}
m_{t+\delta,u}^{g_{\bar a,s},\bar a}(\bar h_{t+\delta})
\right|_{s=0}
\\
&=
\int_{\mathcal Y}
\dot\ell_{g_{\bar a}}(y,\bar h_t)
\xi_{t+\delta,u}^{\bar a}(y\mid\bar h_{t+\delta})\dd y,
\end{aligned}
\label{eq:orth-response-derivative}
\end{equation}
and define
\begin{equation}
\dot m_{t+\tau,u}^{g_{\bar a},\bar a}(\bar H_{t+\tau})
:=
\dot\ell_{g_{\bar a}}(Y_{t+\tau},\bar H_t).
\label{eq:orth-terminal-derivative}
\end{equation}
At the true nuisance, write
\begin{equation}
\dot m_{t+\delta}^{g_{\bar a},\bar a}
:=
\left.\dot m_{t+\delta,u}^{g_{\bar a},\bar a}\right|_{u=0},
\qquad \delta=0,\ldots,\tau.
\end{equation}

At the true continuation nuisance, Eqs.~\ref{eq:terminal-stage-xi}
and \ref{eq:xi-recursion} imply, for every \(s\),
\begin{equation}
m_{t+\delta,0}^{g_{\bar a,s},\bar a}(\bar H_{t+\delta})
=
\mathbb E_0\!\left[
m_{t+\delta+1,0}^{g_{\bar a,s},\bar a}(\bar H_{t+\delta+1})
\mid \bar H_{t+\delta},A_{t+\delta}=a_{t+\delta}
\right],
\qquad
\delta=0,\ldots,\tau-1.
\end{equation}
Differentiating at \(s=0\) gives
\begin{equation}
\dot m_{t+\delta}^{g_{\bar a},\bar a}(\bar H_{t+\delta})
=
\mathbb E_0\!\left[
\dot m_{t+\delta+1}^{g_{\bar a},\bar a}(\bar H_{t+\delta+1})
\mid
\bar H_{t+\delta},A_{t+\delta}=a_{t+\delta}
\right],
\qquad
\delta=0,\ldots,\tau-1.
\label{eq:orth-derivative-recursion}
\end{equation}

Define the mixed derivative by
\begin{equation}
\begin{aligned}
&D_\eta D_{g_{\bar a}}
\mathcal L_{\mathrm{OrthoGen}}^{\bar a}
(g_{\bar a}^\star;\eta)
\left[
g_{\bar a}-g_{\bar a}^\star,
\widetilde\eta-\eta
\right]
:=
\left.
\frac{d}{du}
\left\{
\left.
\frac{d}{ds}
\mathcal L_{\mathrm{OrthoGen}}^{\bar a}
(g_{\bar a,s};\eta_u)
\right|_{s=0}
\right\}
\right|_{u=0}.
\end{aligned}
\label{eq:orth-mixed-derivative}
\end{equation}

\orthogentheorem*

\begin{proof}
Differentiating Eq.~\ref{eq:orth-population-objective} with respect to the
target-model path gives
\begin{equation}
\begin{aligned}
&D_{g_{\bar a}}
\mathcal L_{\mathrm{OrthoGen}}^{\bar a}
(g_{\bar a}^\star;\eta_u)
\left[
g_{\bar a}-g_{\bar a}^\star
\right] =
\mathbb E_0\Bigg[
R_{u,t:t+\tau-1}^{\bar a}
\dot\ell_{g_{\bar a}}(Y_{t+\tau},\bar H_t)
+
\sum_{\delta=0}^{\tau-1}
R_{u,t:t+\delta-1}^{\bar a}
\left(
1-r_{u,t+\delta}^{\bar a}
\right)
\dot m_{t+\delta,u}^{g_{\bar a},\bar a}(\bar H_{t+\delta})
\Bigg].
\end{aligned}
\label{eq:orth-target-gradient}
\end{equation}

Since
\begin{equation}
R_{u,t:t+\delta-1}^{\bar a}-R_{u,t:t+\delta}^{\bar a}
=
R_{u,t:t+\delta-1}^{\bar a}\left(1-r_{u,t+\delta}^{\bar a}\right),
\label{eq:orth-weight-difference}
\end{equation}
Eq.~\ref{eq:orth-target-gradient} becomes
\begin{equation}
\begin{aligned}
&D_{g_{\bar a}}
\mathcal L_{\mathrm{OrthoGen}}^{\bar a}
(g_{\bar a}^\star;\eta_u)
\left[
g_{\bar a}-g_{\bar a}^\star
\right]
\\
&=
\mathbb E_0\Bigg[
\dot m_{t,u}^{g_{\bar a},\bar a}(\bar H_t)
+
\sum_{\delta=0}^{\tau-1}
R_{u,t:t+\delta}^{\bar a}
\left\{
\dot m_{t+\delta+1,u}^{g_{\bar a},\bar a}(\bar H_{t+\delta+1})
-
\dot m_{t+\delta,u}^{g_{\bar a},\bar a}(\bar H_{t+\delta})
\right\}
\Bigg].
\end{aligned}
\label{eq:orth-target-gradient-telescope}
\end{equation}

Set
\begin{equation}
\dot\Delta_{t+\delta}^{g_{\bar a}}
:=
\dot m_{t+\delta+1}^{g_{\bar a},\bar a}(\bar H_{t+\delta+1})
-
\dot m_{t+\delta}^{g_{\bar a},\bar a}(\bar H_{t+\delta}).
\label{eq:orth-residual}
\end{equation}
By Eq.~\ref{eq:orth-derivative-recursion},
\begin{equation}
\mathbb E_0\!\left[
\dot\Delta_{t+\delta}^{g_{\bar a}}
\mid
\bar H_{t+\delta},A_{t+\delta}=a_{t+\delta}
\right]
=
0,
\qquad
\delta=0,\ldots,\tau-1.
\label{eq:orth-residual-mean-zero}
\end{equation}

\textbf{Continuation-density derivative.}
Define
\begin{equation}
\begin{aligned}
\dot m_{\xi,t+\delta}^{g_{\bar a},\bar a}(\bar H_{t+\delta})
&:=
\left.
\frac{d}{du}
\dot m_{t+\delta,u}^{g_{\bar a},\bar a}(\bar H_{t+\delta})
\right|_{u=0}
=
\int_{\mathcal Y}
\dot\ell_{g_{\bar a}}(y,\bar H_t)\,
\left\{
\widetilde\xi_{t+\delta}^{\bar a}(y\mid\bar H_{t+\delta})
-
\xi_{t+\delta}^{\bar a}(y\mid\bar H_{t+\delta})
\right\}\dd y,
\end{aligned}
\label{eq:orth-xi-response-perturbation}
\end{equation}
Holding \(\pi\) fixed in Eq.~\ref{eq:orth-target-gradient} gives
\begin{equation}
\begin{aligned}
&D_\xi D_{g_{\bar a}}
\mathcal L_{\mathrm{OrthoGen}}^{\bar a}
(g_{\bar a}^\star;\eta)
\left[
g_{\bar a}-g_{\bar a}^\star,
\widetilde\xi-\xi
\right]
=
\sum_{\delta=0}^{\tau-1}
\mathbb E_0\!\left[
R_{t:t+\delta-1}^{\bar a}
\left(
1-r_{t+\delta}^{\bar a}
\right)
\dot m_{\xi,t+\delta}^{g_{\bar a},\bar a}(\bar H_{t+\delta})
\right].
\end{aligned}
\label{eq:orth-xi-cross-derivative}
\end{equation}

Equation~\ref{eq:proof-basic-conditional-weighting}, with \(Z=1\), gives
\(\mathbb E_0[r_{t+\delta}^{\bar a}\mid\bar H_{t+\delta}]=1\).
Because
\(R_{t:t+\delta-1}^{\bar a}
\dot m_{\xi,t+\delta}^{g_{\bar a},\bar a}(\bar H_{t+\delta})\) is
\(\bar H_{t+\delta}\)-measurable, every summand in
Eq.~\ref{eq:orth-xi-cross-derivative} is zero. Therefore,
\begin{equation}
D_\xi D_{g_{\bar a}}
\mathcal L_{\mathrm{OrthoGen}}^{\bar a}
(g_{\bar a}^\star;\eta)
\left[
g_{\bar a}-g_{\bar a}^\star,
\widetilde\xi-\xi
\right]
=
0.
\label{eq:orth-xi-zero}
\end{equation}

\textbf{Propensity derivative.}
For every \(\delta'=0,\ldots,\tau-1\),
\begin{equation}
\left.
\frac{d}{du}
r_{u,t+\delta'}^{\bar a}
\right|_{u=0}
=
-r_{t+\delta'}^{\bar a}
\frac{
\widetilde\pi_{t+\delta'}(a_{t+\delta'}\mid\bar H_{t+\delta'})
-
\pi_{t+\delta'}(a_{t+\delta'}\mid\bar H_{t+\delta'})
}{
\pi_{t+\delta'}(a_{t+\delta'}\mid\bar H_{t+\delta'})
}.
\end{equation}
The product rule gives
\begin{equation}
\left.
\frac{d}{du}R_{u,t:t+\delta}^{\bar a}
\right|_{u=0}
=
-R_{t:t+\delta}^{\bar a}
\sum_{\delta'=0}^{\delta}
\frac{
\widetilde\pi_{t+\delta'}(a_{t+\delta'}\mid\bar H_{t+\delta'})
-
\pi_{t+\delta'}(a_{t+\delta'}\mid\bar H_{t+\delta'})
}{
\pi_{t+\delta'}(a_{t+\delta'}\mid\bar H_{t+\delta'})
}.
\label{eq:orth-cumulative-weight-derivative}
\end{equation}

Holding \(\xi\) fixed in Eq.~\ref{eq:orth-target-gradient-telescope} gives
\begin{equation}
\begin{aligned}
&D_\pi D_{g_{\bar a}}
\mathcal L_{\mathrm{OrthoGen}}^{\bar a}
(g_{\bar a}^\star;\eta)
\left[
g_{\bar a}-g_{\bar a}^\star,
\widetilde\pi-\pi
\right]
=
-
\sum_{\delta=0}^{\tau-1}
\sum_{\delta'=0}^{\delta}
\mathbb E_0\!\left[
R_{t:t+\delta}^{\bar a}
\frac{
\widetilde\pi_{t+\delta'}(a_{t+\delta'}\mid\bar H_{t+\delta'})
-
\pi_{t+\delta'}(a_{t+\delta'}\mid\bar H_{t+\delta'})
}{
\pi_{t+\delta'}(a_{t+\delta'}\mid\bar H_{t+\delta'})
}
\dot\Delta_{t+\delta}^{g_{\bar a}}
\right].
\end{aligned}
\label{eq:orth-pi-cross-derivative}
\end{equation}

For \(\delta'\leq\delta\), the prefactor
\begin{equation}
R_{t:t+\delta-1}^{\bar a}
\frac{
\widetilde\pi_{t+\delta'}(a_{t+\delta'}\mid\bar H_{t+\delta'})
-
\pi_{t+\delta'}(a_{t+\delta'}\mid\bar H_{t+\delta'})
}{
\pi_{t+\delta'}(a_{t+\delta'}\mid\bar H_{t+\delta'})
}
\end{equation}
is \(\bar H_{t+\delta}\)-measurable. The IPW identity in
Eq.~\ref{eq:proof-basic-conditional-weighting} and
Eq.~\ref{eq:orth-residual-mean-zero} give
\begin{equation}
\begin{aligned}
\mathbb E_0\!\left[
r_{t+\delta}^{\bar a}
\dot\Delta_{t+\delta}^{g_{\bar a}}
\mid
\bar H_{t+\delta}
\right]
&=
\mathbb E_0\!\left[
\dot\Delta_{t+\delta}^{g_{\bar a}}
\mid
\bar H_{t+\delta},A_{t+\delta}=a_{t+\delta}
\right] = 0.
\end{aligned}
\label{eq:orth-weighted-residual-zero}
\end{equation}
Conditioning on \(\bar H_{t+\delta}\) therefore makes every summand in
Eq.~\ref{eq:orth-pi-cross-derivative} zero, and
\begin{equation}
D_\pi D_{g_{\bar a}}
\mathcal L_{\mathrm{OrthoGen}}^{\bar a}
(g_{\bar a}^\star;\eta)
\left[
g_{\bar a}-g_{\bar a}^\star,
\widetilde\pi-\pi
\right]
=
0.
\label{eq:orth-pi-zero}
\end{equation}

Linearity in the nuisance direction and
Eqs.~\ref{eq:orth-xi-zero}--\ref{eq:orth-pi-zero} yield
\begin{equation}
D_\eta D_{g_{\bar a}}
\mathcal L_{\mathrm{OrthoGen}}^{\bar a}
(g_{\bar a}^\star;\eta)
\left[
g_{\bar a}-g_{\bar a}^\star,
\widetilde\eta-\eta
\right]
=
0.
\end{equation}
\end{proof}

\subsection{Double robustness and quasi-oracle efficiency}
\label{sec:double_robustness_quasi_oracle}

Let
$\widehat\eta=(\widehat\pi,\widehat\xi)$ be fitted on an auxiliary sample or
complementary cross-fitting folds. We condition on the nuisance-training
sample throughout. The corresponding weights $\widehat r_{t+\delta}^{\bar a}$ and
$\widehat R_{t:t+\delta}^{\bar a}$ are obtained by replacing $\pi$ with
$\widehat\pi$ in Eq.~\ref{eq:orth-path-weights}.

For fixed $g_{\bar a}\in\cG$, let
$m_{t+\delta}^{g_{\bar a},\bar a}$ and
$\widehat m_{t+\delta}^{g_{\bar a},\bar a}$ denote the score responses in
Eq.~\ref{eq:ra-score-response} evaluated at $\xi_{t+\delta}^{\bar a}$ and
$\widehat\xi_{t+\delta}^{\bar a}$, respectively, and set
$m_{t+\tau}^{g_{\bar a},\bar a}(\bar H_{t+\tau})
:=\ell_{g_{\bar a}}(Y_{t+\tau},\bar H_t)$ as in
Eq.~\ref{eq:orth-terminal-response}.
The continuation recursion implies
\begin{align}
\mathcal L_{\bar a}(g_{\bar a})
&=
\mathbb E_0[m_t^{g_{\bar a},\bar a}(\bar H_t)],
\nonumber\\
m_{t+\delta}^{g_{\bar a},\bar a}(\bar H_{t+\delta})
&=
\mathbb E_0\!\left[
m_{t+\delta+1}^{g_{\bar a},\bar a}(\bar H_{t+\delta+1})
\,\middle|\,
\bar H_{t+\delta},A_{t+\delta}=a_{t+\delta}
\right],
\qquad \delta=0,\ldots,\tau-1.
\label{eq:qo-score-response-recursion}
\end{align}
Let $\mathcal L_{\mathrm{OrthoGen}}^{\bar a}
(g_{\bar a};\widehat\eta)$ denote the population objective in
Eq.~\ref{eq:orth-population-objective} evaluated at $\widehat\eta$.

\begin{lemma}[Exact longitudinal doubly robust remainder]
\label{lem:exact-longitudinal-dr-remainder}
Suppose the causal assumptions in Sec.~\ref{sec:problem-setting} hold,
$\widehat\pi_{t+\delta}(a_{t+\delta}\mid\bar H_{t+\delta})>0$ almost surely for $\delta=0,\ldots,\tau-1$, and the
displayed quantities are integrable. Then, for every fixed
$g_{\bar a}\in\cG$,
\begin{align}
&\mathcal L_{\mathrm{OrthoGen}}^{\bar a}
(g_{\bar a};\widehat\eta)
-
\mathcal L_{\bar a}(g_{\bar a})
\nonumber\\
&\quad=
\sum_{\delta=0}^{\tau-1}
\mathbb E_0\!\left[
\widehat R_{t:t+\delta-1}^{\bar a}
\frac{
\widehat\pi_{t+\delta}(a_{t+\delta}\mid\bar H_{t+\delta})-\pi_{t+\delta}(a_{t+\delta}\mid\bar H_{t+\delta})
}{
\widehat\pi_{t+\delta}(a_{t+\delta}\mid\bar H_{t+\delta})
}
\left\{
\widehat m_{t+\delta}^{g_{\bar a},\bar a}(\bar H_{t+\delta})
-
m_{t+\delta}^{g_{\bar a},\bar a}(\bar H_{t+\delta})
\right\}
\right].
\label{eq:exact-longitudinal-dr-remainder}
\end{align}
\end{lemma}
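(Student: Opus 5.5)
The plan is to rewrite the \OrthoGen objective at $\widehat\eta$ as a telescoping sum and then peel off one treatment stage at a time, using the IPW identity in Eq.~\ref{eq:proof-basic-conditional-weighting} and the score-response recursion in Eq.~\ref{eq:qo-score-response-recursion}.

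First, use the weight-difference identity in Eq.~\ref{eq:orth-weight-difference} with $\widehat R$ in place of $R$. With $\widehat m_{t+\tau}^{g_{\bar a},\bar a}:=\ell_{g_{\bar a}}(Y_{t+\tau},\bar H_t)$, summation by parts gives
\begin{equation*}
\mathcal L_{\mathrm{OrthoGen}}^{\bar a}(g_{\bar a};\widehat\eta)
=\mathbb E_0\Big[\widehat m_t^{g_{\bar a},\bar a}(\bar H_t)+\sum_{\delta=0}^{\tau-1}\widehat R_{t:t+\delta}^{\bar a}\big\{\widehat m_{t+\delta+1}^{g_{\bar a},\bar a}(\bar H_{t+\delta+1})-\widehat m_{t+\delta}^{g_{\bar a},\bar a}(\bar H_{t+\delta})\big\}\Big].
\end{equation*}
Next, subtract $\mathcal L_{\bar a}(g_{\bar a})=\mathbb E_0[m_t^{g_{\bar a},\bar a}(\bar H_t)]$, which holds by Lemma~\ref{lem:ra-risk-equivalence}. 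Then introduce the errors $e_{t+\delta}:=\widehat m_{t+\delta}^{g_{\bar a},\bar a}-m_{t+\delta}^{g_{\bar a},\bar a}$, with $e_{t+\tau}\equiv 0$. Each bracket splits as
\begin{equation*}
\{m_{t+\delta+1}-m_{t+\delta}\}+\{e_{t+\delta+1}-e_{t+\delta}\}.
\end{equation*}

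The true-residual part vanishes. Since $\widehat R_{t:t+\delta-1}^{\bar a}$ is $\bar H_{t+\delta}$-measurable, we have
\begin{equation*}
\mathbb E_0[\widehat r_{t+\delta}^{\bar a}Z\mid\bar H_{t+\delta}]=\frac{\pi_{t+\delta}}{\widehat\pi_{t+\delta}}\,\mathbb E_0[Z\mid\bar H_{t+\delta},A_{t+\delta}=a_{t+\delta}],
\end{equation*}
which is the IPW identity with the ratio $\pi/\widehat\pi$ now retained. By Eq.~\ref{eq:qo-score-response-recursion}, the conditional mean of $m_{t+\delta+1}-m_{t+\delta}$ given $A_{t+\delta}=a_{t+\delta}$ is zero. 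So that part drops out after the tower property. We are left with
\begin{equation*}
\mathbb E_0\Big[e_t+\sum_{\delta}\widehat R_{t:t+\delta}\,(e_{t+\delta+1}-e_{t+\delta})\Big].
\end{equation*}
Re-telescoping gives $\sum_\delta(\widehat R_{t:t+\delta-1}-\widehat R_{t:t+\delta})\,e_{t+\delta}$, using $e_{t+\tau}=0$ and $\widehat R_{t:t-1}=1$. Conditioning each term on $\bar H_{t+\delta}$ and using $\mathbb E_0[\widehat r_{t+\delta}\mid\bar H_{t+\delta}]=\pi_{t+\delta}/\widehat\pi_{t+\delta}$ turns $(1-\widehat r_{t+\delta})$ into $(\widehat\pi_{t+\delta}-\pi_{t+\delta})/\widehat\pi_{t+\delta}$. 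This is exactly Eq.~\ref{eq:exact-longitudinal-dr-remainder}.

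I expect the main obstacle to be bookkeeping rather than any idea: the two telescopings must be done carefully at the boundaries. Three points need checking. First, $e_{t+\tau}=0$, because the terminal response is the observed score under both nuisances. Second, $\widehat m_{t+\delta}$ is $\bar H_{t+\delta}$-measurable, which holds because $\widehat\eta$ is fixed by cross-fitting. Third, integrability is needed to justify the tower steps, and this is where the assumption $\widehat\pi>0$ enters. The key point is that no recursion is assumed for $\widehat m$: only the true $m$ satisfies Eq.~\ref{eq:qo-score-response-recursion}, which is why the error appears solely as $e_{t+\delta}$ multiplied by the propensity discrepancy.
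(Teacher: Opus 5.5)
Your proof is correct and uses the same ingredients as the paper's: the summation-by-parts identity, the IPW identity with the ratio $\pi_{t+\delta}/\widehat\pi_{t+\delta}$ retained, the recursion satisfied only by the true score responses $m_{t+\delta}^{g_{\bar a},\bar a}$, and a final conditioning of $(1-\widehat r_{t+\delta}^{\bar a})$ on $\bar H_{t+\delta}$. The only difference is organizational. The paper first shows that the objective at the intermediate nuisance $(\widehat\pi,\xi)$ already equals $\mathcal L_{\bar a}(g_{\bar a})$, and then subtracts the untelescoped objectives at $\widehat\eta$ and $(\widehat\pi,\xi)$ directly; this avoids your telescope-then-re-telescope step on the errors $e_{t+\delta}$.
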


\begin{proof}
For any $\{m_{t+\delta}\}_{\delta=0}^{\tau}$,
\begin{align}
&\widehat R_{t:t+\tau-1}^{\bar a}m_{t+\tau}
+
\sum_{\delta=0}^{\tau-1}
\widehat R_{t:t+\delta-1}^{\bar a}
(1-\widehat r_{t+\delta}^{\bar a})m_{t+\delta}
=
m_t
+
\sum_{\delta=0}^{\tau-1}
\widehat R_{t:t+\delta}^{\bar a}(m_{t+\delta+1}-m_{t+\delta}).
\end{align}
Applying this identity to the sequence
$\{m_{t+\delta}^{g_{\bar a},\bar a}\}_{\delta=0}^{\tau}$, the recursion in
Eq.~\ref{eq:qo-score-response-recursion} and the conditional
inverse-probability identity give
\begin{align}
&\mathbb E_0\!\left[
\widehat r_{t+\delta}^{\bar a}
\left\{
m_{t+\delta+1}^{g_{\bar a},\bar a}(\bar H_{t+\delta+1})
-
m_{t+\delta}^{g_{\bar a},\bar a}(\bar H_{t+\delta})
\right\}
\,\middle|\,
\bar H_{t+\delta}
\right]
\\
&\quad=
\frac{\pi_{t+\delta}(a_{t+\delta}\mid\bar H_{t+\delta})}{\widehat\pi_{t+\delta}(a_{t+\delta}\mid\bar H_{t+\delta})}
\mathbb E_0\!\left[
m_{t+\delta+1}^{g_{\bar a},\bar a}(\bar H_{t+\delta+1})
-
m_{t+\delta}^{g_{\bar a},\bar a}(\bar H_{t+\delta})
\,\middle|\,
\bar H_{t+\delta},A_{t+\delta}=a_{t+\delta}
\right]
=0.
\end{align}
Since $\widehat R_{t:t+\delta-1}^{\bar a}$ is
$\bar H_{t+\delta}$-measurable,
\begin{equation}
\mathcal L_{\mathrm{OrthoGen}}^{\bar a}
(g_{\bar a};(\widehat\pi,\xi))
=
\mathbb E_0\!\left[
m_t^{g_{\bar a},\bar a}(\bar H_t)
\right]
=
\mathcal L_{\bar a}(g_{\bar a}).
\end{equation}

Subtracting the two objectives gives
\begin{align}
&\mathcal L_{\mathrm{OrthoGen}}^{\bar a}
(g_{\bar a};\widehat\eta)
-
\mathcal L_{\mathrm{OrthoGen}}^{\bar a}
(g_{\bar a};(\widehat\pi,\xi)) =
\sum_{\delta=0}^{\tau-1}
\mathbb E_0\!\left[
\widehat R_{t:t+\delta-1}^{\bar a}
(1-\widehat r_{t+\delta}^{\bar a})
\left\{
\widehat m_{t+\delta}^{g_{\bar a},\bar a}(\bar H_{t+\delta})
-
m_{t+\delta}^{g_{\bar a},\bar a}(\bar H_{t+\delta})
\right\}
\right].
\end{align}
The factor multiplying $1-\widehat r_{t+\delta}^{\bar a}$ is
$\bar H_{t+\delta}$-measurable,
and
\begin{equation}
\mathbb E_0\!\left[
1-\widehat r_{t+\delta}^{\bar a}
\,\middle|\,
\bar H_{t+\delta}
\right]
=
1-
\frac{\pi_{t+\delta}(a_{t+\delta}\mid\bar H_{t+\delta})}{\widehat\pi_{t+\delta}(a_{t+\delta}\mid\bar H_{t+\delta})}
=
\frac{\widehat\pi_{t+\delta}(a_{t+\delta}\mid\bar H_{t+\delta})-\pi_{t+\delta}(a_{t+\delta}\mid\bar H_{t+\delta})}
{\widehat\pi_{t+\delta}(a_{t+\delta}\mid\bar H_{t+\delta})}.
\end{equation}
Conditioning each summand on $\bar H_{t+\delta}$ proves
Eq.~\ref{eq:exact-longitudinal-dr-remainder}. No property of the score beyond
integrability is used.
\end{proof}

\begin{corollary}[Exact stagewise double robustness]
\label{cor:stagewise-double-robustness}
For every fixed $g_{\bar a}\in\cG$,
$\mathcal L_{\mathrm{OrthoGen}}^{\bar a}(g_{\bar a};\widehat\eta)
=
\mathcal L_{\bar a}(g_{\bar a})$ if, at every $\delta=0,\ldots,\tau-1$, either
\begin{equation}
\widehat\pi_{t+\delta}(a_{t+\delta}\mid\bar H_{t+\delta})
=
\pi_{t+\delta}(a_{t+\delta}\mid\bar H_{t+\delta})
\quad\text{or}\quad
\widehat\xi_{t+\delta}^{\bar a}(\cdot\mid\bar H_{t+\delta})
=
\xi_{t+\delta}^{\bar a}(\cdot\mid\bar H_{t+\delta})
\qquad \mathbb P_0\text{-almost surely}.
\end{equation}
The choice may vary across stages. More generally, for a fixed
$g_{\bar a}$, equality of the corresponding score responses
$\widehat m_{t+\delta}^{g_{\bar a},\bar a}=m_{t+\delta}^{g_{\bar a},\bar a}$ is sufficient
even when the full continuation densities differ.
\end{corollary}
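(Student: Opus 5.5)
The corollary follows directly from Lemma~\ref{lem:exact-longitudinal-dr-remainder}. The plan is to show that every summand of the exact remainder in Eq.~\ref{eq:exact-longitudinal-dr-remainder} vanishes under the stated stagewise condition. Fix $g_{\bar a}\in\cG$ and the assumptions of the lemma, including $\widehat\pi_{t+\delta}(a_{t+\delta}\mid\bar H_{t+\delta})>0$ almost surely and integrability. Then $\mathcal L_{\mathrm{OrthoGen}}^{\bar a}(g_{\bar a};\widehat\eta)-\mathcal L_{\bar a}(g_{\bar a})$ is a finite sum over $\delta=0,\ldots,\tau-1$. The $\delta$-th summand is the expectation of a product of three factors:
\begin{itemize}
\item the cumulative weight $\widehat R_{t:t+\delta-1}^{\bar a}$;
\item the relative propensity error $(\widehat\pi_{t+\delta}-\pi_{t+\delta})/\widehat\pi_{t+\delta}$, evaluated at $a_{t+\delta}$;
\item the score-response error $\widehat m_{t+\delta}^{g_{\bar a},\bar a}-m_{t+\delta}^{g_{\bar a},\bar a}$.
\end{itemize}
It therefore suffices to show that, at each $\delta$, the product of the second and third factors is zero $\mathbb P_0$-almost surely.

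\textbf{Propensity correct at stage $\delta$.} If $\widehat\pi_{t+\delta}(a_{t+\delta}\mid\bar H_{t+\delta})=\pi_{t+\delta}(a_{t+\delta}\mid\bar H_{t+\delta})$ almost surely, the second factor is identically zero almost surely. The summand is then the expectation of an almost-surely zero, integrable random variable, so it equals zero.

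\textbf{Continuation density correct at stage $\delta$.} If $\widehat\xi_{t+\delta}^{\bar a}(\cdot\mid\bar H_{t+\delta})=\xi_{t+\delta}^{\bar a}(\cdot\mid\bar H_{t+\delta})$ almost surely, Eq.~\ref{eq:ra-score-response} gives
\begin{equation*}
\widehat m_{t+\delta}^{g_{\bar a},\bar a}(\bar H_{t+\delta})=\int_{\mathcal Y}\ell_{g_{\bar a}}(y,\bar H_t)\,\widehat\xi_{t+\delta}^{\bar a}(y\mid\bar H_{t+\delta})\dd y=m_{t+\delta}^{g_{\bar a},\bar a}(\bar H_{t+\delta})
\end{equation*}
almost surely. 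For this we need that equality of the densities on a full-measure set of histories carries through the integral, which holds because the integrand is integrable. The third factor then vanishes almost surely.

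The more general sufficient condition is handled by the same argument: the remainder depends on $\widehat\xi_{t+\delta}^{\bar a}$ only through $\widehat m_{t+\delta}^{g_{\bar a},\bar a}$, so $\widehat m_{t+\delta}^{g_{\bar a},\bar a}=m_{t+\delta}^{g_{\bar a},\bar a}$ suffices even when the densities themselves differ.

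The choice of which nuisance is correct may vary with $\delta$. This causes no difficulty because the lemma's remainder is exactly additive across stages, with no cross-stage products. The only place needing care is this additivity: the cumulative weight $\widehat R_{t:t+\delta-1}^{\bar a}$ involves possibly misspecified propensities at earlier stages. It enters only as a multiplier that is finite almost surely, since $\widehat\pi>0$, so it cannot reintroduce bias once one of the other two factors vanishes. Summing the zero summands gives $\mathcal L_{\mathrm{OrthoGen}}^{\bar a}(g_{\bar a};\widehat\eta)=\mathcal L_{\bar a}(g_{\bar a})$.
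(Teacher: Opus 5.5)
Your proposal is correct and follows the same route as the paper. The paper also reads the corollary off the exact remainder in Lemma~\ref{lem:exact-longitudinal-dr-remainder}: at each stage either the propensity-error factor or the score-response error vanishes, and equality of continuation densities implies equality of their score integrals. Your additional remarks on stagewise additivity and on the almost-sure finiteness of $\widehat R_{t:t+\delta-1}^{\bar a}$ spell out what the paper leaves implicit.
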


\begin{proof}
Immediate from Eq.~\ref{eq:exact-longitudinal-dr-remainder}; equality of the
continuation densities implies equality of their score integrals.
\end{proof}

Thus, Corollary~\ref{cor:stagewise-double-robustness} is an exact, rather
than only rate, double-robustness statement for the population objective.

Let $g_{\bar a}^{\star}$ retain its definition as an oracle projection, let
$\widehat g_{\bar a}\in\cG$ denote the fitted second-stage target generator obtained using the estimated nuisances, and write
$\|\widehat g_{\bar a}-g_{\bar a}^{\star}\|_{\cG}$ for the relevant
model-dependent discrepancy. For measurable $f(y,\bar H_{t+\delta})$ at a
relative treatment step $\delta$, define
\begin{equation}
\|f\|_{4,2}
:=
\left[
\mathbb E_0\!\left\{
\left(
\int_{\mathcal Y}
f(y,\bar H_{t+\delta})^2\dd y
\right)^2
\right\}
\right]^{1/4}.
\end{equation}

\begin{assumption}[Target-model regularity]
\label{ass:target-model-regularity}
For the selected pair
$(g_{\bar a}^{\star},\widehat g_{\bar a})$, the following conditions hold.
\begin{enumerate}
\renewcommand{\theenumi}{(\roman{enumi})}
\renewcommand{\labelenumi}{\theenumi}
\item\label{ass:target-model-path}
There is an admissible path $s\mapsto g_{\bar a,s}$, $s\in[0,1]$, with
$g_{\bar a,0}=g_{\bar a}^{\star}$ and
$g_{\bar a,1}=\widehat g_{\bar a}$. The map
$s\mapsto\mathcal L_{\mathrm{OrthoGen}}^{\bar a}
(g_{\bar a,s};\widehat\eta)$ is twice continuously differentiable, while
$s\mapsto\mathcal L_{\bar a}(g_{\bar a,s})$ is right differentiable at zero;
differentiation may be interchanged with expectation and integration.
First- and second-order derivative notation with bracketed direction
$\widehat g_{\bar a}-g_{\bar a}^{\star}$ is induced by this path.

\item\label{ass:target-model-directional-optimality}
Moving from the oracle projection toward the fitted target cannot increase the population objective to first order:
\begin{equation}
D_g
\mathcal L_{\bar a}(g_{\bar a}^{\star})
[\widehat g_{\bar a}-g_{\bar a}^{\star}]
\leq 0.
\label{eq:qo-directional-optimality}
\end{equation}

\item\label{ass:target-model-curvature}
There is a constant $\alpha_{\cG}>0$ such that, at the intermediate model
$\bar g_{\bar a}=g_{\bar a,\bar s}$ furnished by the Taylor expansion,
\begin{equation}
-
D_g^2
\mathcal L_{\mathrm{OrthoGen}}^{\bar a}
(\bar g_{\bar a};\widehat\eta)
\left[
\widehat g_{\bar a}-g_{\bar a}^{\star},
\widehat g_{\bar a}-g_{\bar a}^{\star}
\right]
\geq
\alpha_{\cG}
\|\widehat g_{\bar a}-g_{\bar a}^{\star}\|_{\cG}^2.
\label{eq:qo-target-curvature}
\end{equation}

\item\label{ass:target-model-score-derivative}
There is a finite constant $B_{\cG}$ such that
\begin{equation}
\left\|
\left[
\int_{\mathcal Y}
\left|
D_g
\ell_{g_{\bar a}^{\star}}(y,\bar H_t)
[\widehat g_{\bar a}-g_{\bar a}^{\star}]
\right|^2\dd y
\right]^{1/2}
\right\|_{L^4(\mathbb P_0)}
\leq
B_{\cG}
\|\widehat g_{\bar a}-g_{\bar a}^{\star}\|_{\cG}.
\label{eq:qo-score-derivative-regularity}
\end{equation}
\end{enumerate}
\end{assumption}

\textbf{Interpretation of Assumption~\ref{ass:target-model-regularity}.} In our work, we slightly generalize the target model regularity assumptions, initially suggested by \citet{melnychuk2025gdr} for the cross-sectional setting. Still, our conditions~\ref{ass:target-model-path}--\ref{ass:target-model-score-derivative} are mild and can be realistically satisfied by a broad class of probabilistic models (including normalizing flows and diffusion models).
\begin{itemize}[leftmargin=0.5cm,itemsep=-0.45mm]
    \item First, our condition~\ref{ass:target-model-path} requires smoothness only along the selected path connecting $g_{\bar a}^{\star}$ and $\widehat g_{\bar a}$; and such pathwise differentiability and interchange of differentiation with integration are standard for smoothly parameterized generative objectives. 
    \item Further, condition~\ref{ass:target-model-directional-optimality} is automatic whenever the path remains locally in $\cG$, by the optimality of the oracle projection.
    \item  Then, in condition~\ref{ass:target-model-curvature}, curvature is required only in the realized direction $\widehat g_{\bar a}-g_{\bar a}^{\star}$ and only at the intermediate model arising in the one-dimensional Taylor expansion.\footnote{Closely related model-dependent curvature conditions for normalizing-flow and diffusion-based doubly robust generative learners have also been formulated and shown to hold asymptotically under regularity and nuisance-consistency conditions \citep{melnychuk2025gdr}.} This requirement is also compatible with standard generative models:
    \begin{itemize}[leftmargin=0.5cm,itemsep=-0.55mm]
        \item For normalizing flows, Eq.~\ref{eq:nf-risk-kl-decomposition} shows that population maximum likelihood is equivalent, up to an additive constant, to minimizing a forward Kullback--Leibler criterion \citep{papamakarios2021normalizing}, making local quadratic curvature natural under regularity and local identification.
        \item For diffusion models, the ELBO terms reduce to weighted noise-prediction errors, while Eq.~\ref{eq:dm-denoising-loss} gives the usual simplified squared-error objective \citep{ho2020denoising}; the criterion is therefore directly quadratic in the predicted-noise function.
    \end{itemize} 
    Thus, condition~\ref{ass:target-model-curvature} concerns curvature in the induced function-space discrepancy and does not require global convexity of the neural-network parameterization.
    \item Finally, condition~\ref{ass:target-model-score-derivative} is a standard local continuity and moment condition: It requires directional changes in the generative score to be controlled by the chosen model discrepancy and to possess a suitable fourth-moment envelope. 
\end{itemize}


To obtain the propensity-error form in Theorem~\ref{thm:quasi-oracle}, we
also require estimated positivity: for some fixed
$\varepsilon\in(0,1]$ and every $\delta=0,\ldots,\tau-1$,
\begin{equation}
\widehat\pi_{t+\delta}(a_{t+\delta}\mid\bar H_{t+\delta})\geq\varepsilon
\qquad \mathbb P_0\text{-almost surely}.
\label{eq:qo-estimated-positivity}
\end{equation}

The estimated-positivity condition in Eq.~\ref{eq:qo-estimated-positivity} is the usual stability requirement for inverse-propensity weighting and can be enforced in practice by clipping estimated propensities away from zero.

In the following, whenever constants depending on the horizon are absorbed into $\lesssim$,
$\tau$ is treated as fixed.

\quasioracletheorem*

\begin{proof}[Proof of Theorem~\ref{thm:quasi-oracle}]
Recall the conditional-population optimization error from the main text,
written in the notation of Eq.~\ref{eq:orth-population-objective} as
\begin{equation}
\mathrm{Opt}_{\mathcal G}^{\bar a}
:=
\sup_{g_{\bar a}\in\mathcal G}
\mathcal L_{\mathrm{OrthoGen}}^{\bar a}
(g_{\bar a};\widehat\eta)
-
\mathcal L_{\mathrm{OrthoGen}}^{\bar a}
(\widehat g_{\bar a};\widehat\eta).
\label{eq:qo-optimization-error}
\end{equation}
Write
\begin{equation}
W_{t+\delta}^{\bar a}
:=
\widehat R_{t:t+\delta-1}^{\bar a}
\frac{
\widehat\pi_{t+\delta}(a_{t+\delta}\mid\bar H_{t+\delta})-\pi_{t+\delta}(a_{t+\delta}\mid\bar H_{t+\delta})
}{
\widehat\pi_{t+\delta}(a_{t+\delta}\mid\bar H_{t+\delta})
}.
\end{equation}
A second-order Taylor expansion along the selected path gives, for some
intermediate $\bar g_{\bar a}$,
\begin{align}
&\mathcal L_{\mathrm{OrthoGen}}^{\bar a}
(\widehat g_{\bar a};\widehat\eta)
\nonumber =
\mathcal L_{\mathrm{OrthoGen}}^{\bar a}
(g_{\bar a}^{\star};\widehat\eta)
+
D_g\mathcal L_{\mathrm{OrthoGen}}^{\bar a}
(g_{\bar a}^{\star};\widehat\eta)
[\widehat g_{\bar a}-g_{\bar a}^{\star}]
\nonumber\\
&\qquad+
\frac{1}{2}
D_g^2\mathcal L_{\mathrm{OrthoGen}}^{\bar a}
(\bar g_{\bar a};\widehat\eta)
\left[
\widehat g_{\bar a}-g_{\bar a}^{\star},
\widehat g_{\bar a}-g_{\bar a}^{\star}
\right].
\label{eq:qo-taylor-expansion}
\end{align}
Condition~\ref{ass:target-model-curvature} and
$g_{\bar a}^{\star}\in\cG$ imply
\begin{align}
\frac{\alpha_{\cG}}{2}
\|\widehat g_{\bar a}-g_{\bar a}^{\star}\|_{\cG}^2
&\leq
\mathrm{Opt}_{\mathcal G}^{\bar a}
+
D_g\mathcal L_{\mathrm{OrthoGen}}^{\bar a}
(g_{\bar a}^{\star};\widehat\eta)
[\widehat g_{\bar a}-g_{\bar a}^{\star}].
\label{eq:qo-before-gradient-control}
\end{align}
Here the first term is bounded by
$\mathrm{Opt}_{\mathcal G}^{\bar a}$ because
$g_{\bar a}^{\star}\in\mathcal G$. To isolate the nuisance perturbation,
add and subtract the true-nuisance population objective:
\begin{equation}
\mathcal L_{\mathrm{OrthoGen}}^{\bar a}(g_{\bar a};\widehat\eta)
=
\mathcal L_{\mathrm{OrthoGen}}^{\bar a}(g_{\bar a};\eta)
+
\left\{
\mathcal L_{\mathrm{OrthoGen}}^{\bar a}(g_{\bar a};\widehat\eta)
-
\mathcal L_{\mathrm{OrthoGen}}^{\bar a}(g_{\bar a};\eta)
\right\}.
\end{equation}
Lemma~\ref{lem:exact-longitudinal-dr-remainder} gives
$\mathcal L_{\mathrm{OrthoGen}}^{\bar a}(g_{\bar a};\eta)
=\mathcal L_{\bar a}(g_{\bar a})$. Hence,
condition~\ref{ass:target-model-directional-optimality} yields
\begin{align}
&D_g\mathcal L_{\mathrm{OrthoGen}}^{\bar a}
(g_{\bar a}^{\star};\widehat\eta)
[\widehat g_{\bar a}-g_{\bar a}^{\star}]
\leq
\left|
D_g\mathcal L_{\mathrm{OrthoGen}}^{\bar a}
(g_{\bar a}^{\star};\widehat\eta)
[\widehat g_{\bar a}-g_{\bar a}^{\star}]
\right.\left.
-
D_g\mathcal L_{\mathrm{OrthoGen}}^{\bar a}
(g_{\bar a}^{\star};\eta)
[\widehat g_{\bar a}-g_{\bar a}^{\star}]
\right|.
\label{eq:qo-gradient-comparison}
\end{align}

Differentiating Eq.~\ref{eq:exact-longitudinal-dr-remainder} in the same
direction yields
\begin{align}
&D_g
\mathcal L_{\mathrm{OrthoGen}}^{\bar a}
(g_{\bar a}^{\star};\widehat\eta)
[\widehat g_{\bar a}-g_{\bar a}^{\star}]
-
D_g
\mathcal L_{\mathrm{OrthoGen}}^{\bar a}
(g_{\bar a}^{\star};\eta)
[\widehat g_{\bar a}-g_{\bar a}^{\star}]
\nonumber\\
&\quad=
\sum_{\delta=0}^{\tau-1}
\mathbb E_0\!\left[
W_{t+\delta}^{\bar a}
\int_{\mathcal Y}
D_g\ell_{g_{\bar a}^{\star}}(y,\bar H_t)
[\widehat g_{\bar a}-g_{\bar a}^{\star}]
\left\{
\widehat\xi_{t+\delta}^{\bar a}(y\mid\bar H_{t+\delta})
-
\xi_{t+\delta}^{\bar a}(y\mid\bar H_{t+\delta})
\right\}\dd y
\right].
\label{eq:qo-gradient-remainder}
\end{align}
By Cauchy--Schwarz in $y$, H\"{o}lder's inequality with exponents $(2,4,4)$,
and condition~\ref{ass:target-model-score-derivative},
\begin{align}
&\left|
\mathbb E_0\!\left[
W_{t+\delta}^{\bar a}
\int_{\mathcal Y}
D_g\ell_{g_{\bar a}^{\star}}(y,\bar H_t)
[\widehat g_{\bar a}-g_{\bar a}^{\star}]
\left\{
\widehat\xi_{t+\delta}^{\bar a}(y\mid\bar H_{t+\delta})
-
\xi_{t+\delta}^{\bar a}(y\mid\bar H_{t+\delta})
\right\}\dd y
\right]
\right|
\\
&\quad\leq
\|W_{t+\delta}^{\bar a}\|_{L^2(\mathbb P_0)}
\left\|
\left[
\int_{\mathcal Y}
\left|
D_g\ell_{g_{\bar a}^{\star}}(y,\bar H_t)
[\widehat g_{\bar a}-g_{\bar a}^{\star}]
\right|^2\dd y
\right]^{1/2}
\right\|_{L^4(\mathbb P_0)}
\left\|
\widehat\xi_{t+\delta}^{\bar a}(\cdot\mid\bar H_{t+\delta})
-
\xi_{t+\delta}^{\bar a}(\cdot\mid\bar H_{t+\delta})
\right\|_{4,2}
\\
&\quad\leq
B_{\cG}
\|\widehat g_{\bar a}-g_{\bar a}^{\star}\|_{\cG}
\|W_{t+\delta}^{\bar a}\|_{L^2(\mathbb P_0)}
\left\|
\widehat\xi_{t+\delta}^{\bar a}(\cdot\mid\bar H_{t+\delta})
-
\xi_{t+\delta}^{\bar a}(\cdot\mid\bar H_{t+\delta})
\right\|_{4,2}.
\end{align}
Combining this bound with Eqs.~\ref{eq:qo-before-gradient-control} and
\ref{eq:qo-gradient-comparison} gives
\begin{align}
\frac{\alpha_{\cG}}{2}
\|\widehat g_{\bar a}-g_{\bar a}^{\star}\|_{\cG}^2
&\leq
\mathrm{Opt}_{\mathcal G}^{\bar a}
\nonumber\\
&\quad+
B_{\cG}
\|\widehat g_{\bar a}-g_{\bar a}^{\star}\|_{\cG}
\left[
\sum_{\delta=0}^{\tau-1}
\|W_{t+\delta}^{\bar a}\|_{L^2(\mathbb P_0)}
\left\|
\widehat\xi_{t+\delta}^{\bar a}(\cdot\mid\bar H_{t+\delta})
-
\xi_{t+\delta}^{\bar a}(\cdot\mid\bar H_{t+\delta})
\right\|_{4,2}
\right].
\label{eq:qo-gradient-bound}
\end{align}
Young's inequality yields
\begin{align}
&B_{\cG}
\|\widehat g_{\bar a}-g_{\bar a}^{\star}\|_{\cG}
\left[
\sum_{\delta=0}^{\tau-1}
\|W_{t+\delta}^{\bar a}\|_{L^2(\mathbb P_0)}
\left\|
\widehat\xi_{t+\delta}^{\bar a}(\cdot\mid\bar H_{t+\delta})
-
\xi_{t+\delta}^{\bar a}(\cdot\mid\bar H_{t+\delta})
\right\|_{4,2}
\right]
\\
&\quad\leq
\frac{\alpha_{\cG}}{4}
\|\widehat g_{\bar a}-g_{\bar a}^{\star}\|_{\cG}^2
+
\frac{B_{\cG}^2}{\alpha_{\cG}}
\left[
\sum_{\delta=0}^{\tau-1}
\|W_{t+\delta}^{\bar a}\|_{L^2(\mathbb P_0)}
\left\|
\widehat\xi_{t+\delta}^{\bar a}(\cdot\mid\bar H_{t+\delta})
-
\xi_{t+\delta}^{\bar a}(\cdot\mid\bar H_{t+\delta})
\right\|_{4,2}
\right]^2.
\end{align}
Rearranging yields the explicit intermediate bound
\begin{equation}
\begin{aligned}
\|\widehat g_{\bar a}-g_{\bar a}^{\star}\|_{\mathcal G}^2
&\leq
\frac{4}{\alpha_{\mathcal G}}
\mathrm{Opt}_{\mathcal G}^{\bar a} +
\frac{4B_{\mathcal G}^2}{\alpha_{\mathcal G}^2}
\left[
\sum_{\delta=0}^{\tau-1}
\|W_{t+\delta}^{\bar a}\|_{L^2(\mathbb P_0)}
\left\|
\widehat\xi_{t+\delta}^{\bar a}(\cdot\mid\bar H_{t+\delta})
-
\xi_{t+\delta}^{\bar a}(\cdot\mid\bar H_{t+\delta})
\right\|_{4,2}
\right]^2.
\end{aligned}
\label{eq:generic-quasi-oracle-bound}
\end{equation}

It remains to control the stagewise weight error. Under
Eq.~\ref{eq:qo-estimated-positivity}, each estimated one-step weight is at
most $\varepsilon^{-1}$, and therefore
\begin{equation}
0\leq \widehat R_{t:t+\delta-1}^{\bar a}
\leq \varepsilon^{-\delta},
\qquad
\|\widehat R_{t:t+\delta-1}^{\bar a}\|_{L^4(\mathbb P_0)}
\leq \varepsilon^{-\delta}.
\end{equation}
Thus, H\"{o}lder's inequality gives
\begin{align}
\|W_{t+\delta}^{\bar a}\|_{L^2(\mathbb P_0)}
&\leq
\varepsilon^{-1}
\|\widehat R_{t:t+\delta-1}^{\bar a}\|_{L^4(\mathbb P_0)}
\|\widehat\pi_{t+\delta}(a_{t+\delta}\mid\bar H_{t+\delta})-\pi_{t+\delta}(a_{t+\delta}\mid\bar H_{t+\delta})\|_{L^4(\mathbb P_0)}
\\
&\quad\leq
\varepsilon^{-(\delta+1)}
\|\widehat\pi_{t+\delta}(a_{t+\delta}\mid\bar H_{t+\delta})-\pi_{t+\delta}(a_{t+\delta}\mid\bar H_{t+\delta})\|_{L^4(\mathbb P_0)}.
\end{align}
In particular,
\begin{equation}
\|W_{t+\delta}^{\bar a}\|_{L^2(\mathbb P_0)}
\leq
C_{\delta}
\|\widehat\pi_{t+\delta}-\pi_{t+\delta}\|_{L^4(\mathbb P_0)},
\qquad
C_{\delta}:=\varepsilon^{-(\delta+1)},
\label{eq:qo-stagewise-weight-error}
\end{equation}
where the propensity functions in the norm are evaluated at
$(a_{t+\delta},\bar H_{t+\delta})$. Substitution into
Eq.~\ref{eq:generic-quasi-oracle-bound} gives
\begin{equation}
\begin{aligned}
\|\widehat g_{\bar a}-g_{\bar a}^{\star}\|_{\mathcal G}^2
&\leq
\frac{4}{\alpha_{\mathcal G}}
\mathrm{Opt}_{\mathcal G}^{\bar a} +
\frac{4B_{\mathcal G}^2}{\alpha_{\mathcal G}^2}
\left[
\sum_{\delta=0}^{\tau-1}
C_{\delta}
\|\widehat\pi_{t+\delta}-\pi_{t+\delta}\|_{L^4(\mathbb P_0)}
\|\widehat\xi_{t+\delta}^{\bar a}-\xi_{t+\delta}^{\bar a}\|_{4,2}
\right]^2.
\end{aligned}
\label{eq:generic-quasi-oracle-positivity-bound}
\end{equation}
The additional cumulative-weight condition used in the previous formulation
is therefore implied by estimated positivity and is not needed separately.

Finally, fixed $\tau$ gives
$\max_{0\leq\delta\leq\tau-1}C_{\delta}\leq\varepsilon^{-\tau}$. Absorbing
$\alpha_{\mathcal G}^{-1}$, $B_{\mathcal G}$,
$\varepsilon^{-1}$, and fixed-horizon factors into $\lesssim$ yields
\begin{equation}
\|\widehat g_{\bar a}-g_{\bar a}^{\star}\|_{\mathcal G}^2
\lesssim
\mathrm{Opt}_{\mathcal G}^{\bar a}
+
\left[
\sum_{\delta=0}^{\tau-1}
\|\widehat\pi_{t+\delta}-\pi_{t+\delta}\|_{L^4(\mathbb P_0)}
\|\widehat\xi_{t+\delta}^{\bar a}-\xi_{t+\delta}^{\bar a}\|_{4,2}
\right]^2,
\end{equation}
which is Eq.~\ref{eq:quasi_oracle_main}. True propensity positivity remains
part of the causal identification assumptions, but only estimated positivity
is used in the stagewise weight bound above.
\end{proof}

\paragraph{Rate interpretation.}
Suppose the preceding conditions hold with probability tending to one, with
$\alpha_{\mathcal G}$ bounded away from zero, $B_{\mathcal G}$ bounded
above, fixed $\varepsilon>0$, and fixed $\tau$. For every
$\delta=0,\ldots,\tau-1$, let
\begin{align}
\|\widehat\pi_{t+\delta}-\pi_{t+\delta}\|_{L^4(\mathbb P_0)}
&=O_{\mathbb P}(\varepsilon_{\pi,t+\delta,n}),
\\
\|\widehat\xi_{t+\delta}^{\bar a}-\xi_{t+\delta}^{\bar a}\|_{4,2}
&=O_{\mathbb P}(\varepsilon_{\xi,t+\delta,n}).
\end{align}
Matching the main-text convention, $\rho_n$ is a rate for the
target-generator norm, so the corresponding squared-scale condition is
$\mathrm{Opt}_{\mathcal G}^{\bar a}=O_{\mathbb P}(\rho_n^2)$.
Theorem~\ref{thm:quasi-oracle} then gives
\begin{equation}
\|\widehat g_{\bar a}-g_{\bar a}^{\star}\|_{\mathcal G}^2
=
O_{\mathbb P}\!\left(
\rho_n^2
+
\left[
\sum_{\delta=0}^{\tau-1}\varepsilon_{\pi,t+\delta,n}\varepsilon_{\xi,t+\delta,n}
\right]^2
\right).
\label{eq:generic-rate-double-robustness}
\end{equation}
Equivalently, the target-generator norm is
$O_{\mathbb P}(\rho_n+
\sum_{\delta=0}^{\tau-1}\varepsilon_{\pi,t+\delta,n}\varepsilon_{\xi,t+\delta,n})$.

\emph{Exact double robustness} is supplied by
Lemma~\ref{lem:exact-longitudinal-dr-remainder} and
Corollary~\ref{cor:stagewise-double-robustness}: at each stage, correctness
of either nuisance component makes the corresponding product remainder
vanish. \emph{Rate double robustness} follows because nuisance effects enter
through products, so a slower rate for one nuisance can be offset by a faster
rate for the other. Finally, if
\begin{equation}
\sum_{\delta=0}^{\tau-1}\varepsilon_{\pi,t+\delta,n}\varepsilon_{\xi,t+\delta,n}
=o(\rho_n),
\label{eq:generic-rate-negligibility}
\end{equation}
then nuisance estimation is asymptotically negligible at the target-learning
scale: the norm and squared norm are respectively
$O_{\mathbb P}(\rho_n)$ and $O_{\mathbb P}(\rho_n^2)$. This is the stated
\emph{quasi-oracle efficiency}, since the learner attains the rate of the
corresponding oracle learner with known nuisances.

\section{Dataset Details}
\label{app:dataset-details}

\subsection{Fully synthetic dataset}
\label{app:fully-synthetic-dataset}

\textbf{Overview.}
We use a fully synthetic benchmark to evaluate joint distributional
learning when the outcome is multivariate and the covariate process itself
responds to treatment. Each trajectory contains a scalar time-varying state
$X_t$, a binary treatment $A_t$, and a three-dimensional continuous outcome
$Y_t=(Y_{t1},Y_{t2},Y_{t3})^\top$. The data are
generated directly in the timing convention of Sec.~\ref{sec:problem-setting},
$\bar H_t\rightarrow A_t\rightarrow (X_{t+1},Y_{t+1})$. We simulate trajectories of length 20 and use disjoint
outer partitions of 3,000 training, 1,000 validation, and 1,000 test patients.
As in the semi-synthetic experiment, the training partition is divided evenly
between nuisance and target-model fitting. Each outcome coordinate is
standardized using its mean and standard deviation in the outer training
partition, and this transformation is kept fixed for validation, test, and
oracle samples. The equations below describe the raw DGP scale; the
coordinate-wise outcome transformation is applied only after complete
trajectories have been generated.

On the raw DGP scale, the baseline variables are initialized independently as
\begin{equation}
X_0\sim\mathcal N(0,0.8^2),
\qquad
Y_0\sim\mathcal N_3(0,D_0R_0D_0),
\qquad X_0\perp Y_0,
\label{eq:fully-synthetic-initial}
\end{equation}
where $D_0=\operatorname{diag}(0.60,0.66,0.57)$ and
\begin{equation}
R_0=
\begin{pmatrix}
1 & 0.32 & 0.20\\
0.32 & 1 & 0.28\\
0.20 & 0.28 & 1
\end{pmatrix}.
\end{equation}
This Gaussian law for $Y_0$ initializes the recursion only; the conditional
Gaussian/asymmetric-Laplace mixture in
Eq.~\ref{eq:fully-synthetic-outcome-mixture} governs each post-baseline
outcome $Y_{t+1}$ for $t=0,\ldots,18$. The complete numerical
configuration---including all coefficients, scales, clipping constants, and
correlation matrices---is provided in the code. We
describe the structure of the data-generating process below.

\textbf{State and treatment processes.}
The next state follows
\begin{equation}
X_{t+1}
=\alpha_X X_t+\beta_Ys^\top Y_t+\beta_A(A_t-0.5)
+\epsilon^X_{t+1},
\qquad
\epsilon^X_{t+1}\sim\mathcal N(0,\sigma_X^2),
\label{eq:fully-synthetic-state}
\end{equation}
where $s$ is a fixed weighted summary of the three outcomes. For treatment
assignment, $g_X$ and $g_Y$ denote clipped and scaled versions of the current
state and outcomes. The observational propensity is
\begin{equation}
P(A_t=1\mid\bar H_t)
=\operatorname{sigmoid}\!\left
\{\gamma_0+\gamma_Xg_X(X_t)+\gamma_Yr^\top g_Y(Y_t)
+\gamma_A(A_{t-1}-0.5)\right\},
\label{eq:fully-synthetic-treatment}
\end{equation}
where $r$ combines all three outcome coordinates and $A_{-1}=0.5$ is used at
the first decision. Clipping bounds the treatment logits and avoids exact-zero
or exact-one propensities. Since the current state and outcomes affect
treatment, while treatment subsequently affects both the next state and next
outcome, the process has explicit treatment--confounder feedback.

\textbf{Joint outcome law.}
Conditional on the current history and treatment, each outcome coordinate has
a Gaussian/asymmetric-Laplace mixture marginal,
\begin{equation}
Y_{t+1,j}\mid\bar H_t,A_t
\sim
w_{t,j}\mathcal N(\mu^G_{t,j},\sigma_{G,t,j}^2)
+(1-w_{t,j})\operatorname{ALD}(\mu^L_{t,j},b_{t,j},q_{t,j}),
\label{eq:fully-synthetic-outcome-mixture}
\end{equation}
where $\mu^L_{t,j}$ is the $q_{t,j}$ quantile of the asymmetric-Laplace
component. The Gaussian location contains a stable, near-diagonal vector
autoregression together with coordinate-specific sinusoidal state terms and
centered treatment terms. The asymmetric-Laplace location equals the Gaussian
location plus a coordinate-specific hyperbolic-tangent shift in the state.
Both component scales, the mixture probability $w_{t,j}$, and the asymmetry
parameter $q_{t,j}$ vary with the clipped state, the corresponding current
outcome coordinate, and the current treatment. The three coordinates are not
conditionally independent. Cross-coordinate dependence uses the same
configured correlation matrix for the multivariate Gaussian innovation and
for Gaussian-copula latent variables underlying each of the two exponential
vectors that form the asymmetric-Laplace draw and the mixture indicators.
This preserves the specified marginal mixtures while inducing contemporaneous
dependence across outcomes. All numerical parameters are given in the code.

\textbf{Interventional ground truth.}
For a requested regime $\barat$, the oracle fixes the factual history through
$t$ and then recursively simulates both future states and future outcomes. We impose the alternating sequence specified in the main
evaluation section; for example, $\barat=(1)$ when $\tau=1$,
$\barat=(1,0)$ when $\tau=2$, and $\barat=(1,0,1,0,1)$ when $\tau=5$.
For each horizon we set $t=19-\tau$, so all methods target the outcome at the
final state time. Evaluation uses 300 draws from the fitted model and 300
oracle draws for each test trajectory. We compare the resulting
three-dimensional empirical distributions using the squared 2-Wasserstein
transport cost $W_2^2$, averaged across test trajectories; the optimal
transport is computed on complete outcome vectors.

\subsection{Semi-synthetic MIMIC-III dataset}
\label{app:semisynthetic-mimic}

\textbf{Cohort and covariates.}
Our semi-synthetic benchmark follows the construction of
\citet{hess2026overlapweighted}: it combines real covariate trajectories from
MIMIC-III \citep{johnson2016mimic}, represented as hourly records from the
MIMIC-Extract pipeline \citep{wang2020mimic}, with simulated treatments and
outcomes. We draw a training
cohort of 1,500 patients for each stage of our sample-split procedure.
Specifically, from patients with at least 20 hourly records, we sample 5,000
without replacement and retain the first 20 hours of each trajectory. The
resulting patient-level partitions contain 3,000 training (1500 for each stage), 1,000 validation,
and 1,000 test trajectories.

The time-varying covariates are heart rate, red blood cell count, sodium, mean
blood pressure, systemic vascular resistance, glucose, urine chloride,
Glasgow Coma Scale total, hematocrit, set positive end-expiratory pressure,
and respiratory rate. Age and gender are included as static covariates. To
match the released data pipeline, the selected hourly vital and laboratory
channels are forward-filled and then backward-filled before patients are
sampled. We standardize the selected continuous channels, standardize age,
and one-hot encode gender. Static features are repeated over time and
concatenated with the eleven dynamic measurements when evaluating the
data-generating mechanism. After simulation, covariates and generated outcomes are
standardized using means and standard deviations from the outer training
partition; the same transformation is then applied to validation and test
patients. The whole-frame filling and selected-cohort preprocessing are kept
to reproduce the reference benchmark; all model fitting takes place only
after this construction step.

\textbf{Treatment and outcome mechanism.} Let $\widetilde X_t\in\mathbb R^d$ contain the eleven
measurements and the repeated encoded static covariates at hour $t$. We define
\begin{equation}
q_t=\frac{1}{d}\sum_{j=1}^d\sin(\widetilde X_{tj}),
\qquad
c_t=\frac{1}{d}\sum_{j=1}^d\cos(\widetilde X_{tj}).
\label{eq:semisynthetic-covariate-scores}
\end{equation}
The initial binary treatment $B_0$ and scalar outcome $\widetilde Y_0$ are
drawn as
\begin{equation}
B_0\sim\operatorname{Bernoulli}\!\left\{\operatorname{sigmoid}(q_0)\right\},
\qquad
\widetilde Y_0=2(B_0-1/2)+\epsilon_0,
\qquad
\epsilon_0\sim\mathcal N(0,0.1^2).
\label{eq:semisynthetic-initial-state}
\end{equation}
For $t=1,\ldots,19$, factual treatments are sampled according to
\begin{align}
\eta_t
&=
\sin(\widetilde Y_{t-1})+q_{t-1}-B_{t-1}+\nu_t,
& \nu_t&\sim\mathcal N(0,0.2^2),
\nonumber\\
B_t
&\sim\operatorname{Bernoulli}\!\left\{\operatorname{sigmoid}(\eta_t)\right\}.
\label{eq:semisynthetic-treatment}
\end{align}
Writing $D_t=2(B_t-1/2)\in\{-1,1\}$, the outcome then evolves as
\begin{equation}
\widetilde Y_t
=
\widetilde Y_{t-1}
+D_t\exp\!\left\{
\frac{D_t c_t}{t}\sin(\widetilde Y_{t-1})
\right\}
+\epsilon_t,
\qquad
\epsilon_t\sim\mathcal N(0,0.1^2).
\label{eq:semisynthetic-outcome}
\end{equation}
The innovations are drawn independently. Thus, previous outcomes affect later
treatment assignment, while treatment, current covariates, and the previous
outcome jointly determine the next outcome. 

\textbf{Timing, splitting, and ground truth.}
The reference generator assigns $B_t$ and $\widetilde Y_t$ at the same raw
hour. We align it with our pre-interventional history convention by setting
$A_t=B_{t+1}$ and retaining $B_0$ in the baseline history. For a horizon
$\tau$, we set $t=19-\tau$ and intervene with the all-treated sequence
$a_{t+\delta}=1$ for $\delta=0,\ldots,\tau-1$. Hence, the regimes for
$\tau=1,\ldots,5$ are $(1)$, $(1,1)$, $(1,1,1)$, $(1,1,1,1)$, and
$(1,1,1,1,1)$, respectively, and the terminal outcome is always
$\widetilde Y_{19}$.

The 3,000 training patients are split once, by patient, into 1,500 patients
for nuisance estimation and 1,500 for fitting the target generator;
validation and test patients remain separate. For evaluation, the oracle starts from each test
patient's realized outcome at the prediction origin, keeps that patient's
covariate path fixed, imposes the all-treated treatment sequence, and
resamples the subsequent outcome innovations. The reported squared
2-Wasserstein costs compare 100 samples from each fitted model with 100 such
oracle draws per test patient.

\subsection{Real-world MIMIC-III dataset}
\label{app:realworld-mimic}

\textbf{Cohort and covariates.}
Our real-world benchmark uses the same MIMIC-III cohort construction and
preprocessing pipeline as the semi-synthetic experiment, but keeps the factual
treatments and factual outcomes observed in the data. Concretely, we use hourly
records from MIMIC-III \citep{johnson2016mimic} extracted with the
MIMIC-Extract pipeline \citep{wang2020mimic}, retain patients with at least 20
hourly records, and keep the first 20 hours of each selected trajectory. The
dataset is split at the patient level into 3,000 training, 1,000 validation,
and 1,000 test trajectories, with the training partition further divided into
1,500 patients for nuisance estimation and 1,500 for fitting the final target
generator.

The covariates are exactly those used in the semi-synthetic benchmark: eleven
time-varying clinical measurements together with static age and gender.
Specifically, the dynamic channels are heart rate, red blood cell count,
sodium, mean blood pressure, systemic vascular resistance, glucose, urine
chloride, Glasgow Coma Scale total, hematocrit, set positive end-expiratory
pressure, and respiratory rate. As above, selected hourly channels are
forward-filled and backward-filled before patient sampling; continuous
covariates and outcomes are standardized using statistics from the outer
training partition, and gender is one-hot encoded.

\textbf{Treatment, outcome, and evaluation setup.}
Unlike the semi-synthetic benchmark, no simulated treatment or outcome model is
introduced here: the observed treatment path and observed outcome trajectory are
taken as the real-world data source. Specifically, the observed treatment $A_t$
is the binary mechanical-ventilation indicator (\texttt{vent}) at raw hourly
index $t+1$, whereas the observed outcome $Y_t$ is diastolic blood pressure at
raw hourly index $t$; this one-step reindexing aligns the data with our convention
that $A_t$ is applied after the history through time $t$. For evaluation horizons
$\tau\in\{1,3,5\}$, we set
$t=9-\tau$ and predict the distribution of the factual terminal outcome under
the all-treated sequence $a_{t+\delta}=1$ for $\delta=0,\ldots,\tau-1$. This
choice keeps the terminal evaluation time fixed at $Y_{t+\tau}=Y_9$ (hour~9)
across horizons. Thus, the
intervention sequences are $(1)$ for $\tau=1$, $(1,1,1)$ for $\tau=3$, and
$(1,1,1,1,1)$ for $\tau=5$. Because counterfactual ground truth is unavailable
in the real-world setting, the squared Wasserstein cost used for the synthetic
experiments is not available here: it requires paired empirical samples from a
known oracle distribution over complete trajectories, which we do not have.
Instead, we evaluate factual prediction quality directly against the observed
outcome with the continuous ranked probability score (CRPS), which is a proper
scoring rule for one-dimensional predictive distributions and only requires the
observed factual outcome. For this evaluation, predictions are made under each
target treatment sequence, and CRPS is computed against the observed factual
outcome only for held-out patients whose observed factual treatment sequence
exactly matches that target sequence. The CRPS results are reported in
Table~\ref{tab:crps_tau_real}.

\section{Implementation Details}
\label{app:implementation-details}

\subsection{Transformer history encoder}
\label{app:transformer-history-encoder}

\textbf{Architecture.}
All propensity, continuation, and second-stage target models use the same
Transformer specification to encode the observed pre-interventional history. We
follow the compact encoder-only design of \citet{frauen2025modelagnostic},
which was also adopted by the overlap-weighted learner of
\citet{hess2026overlapweighted}. At decision time $t$, each history token
concatenates the contemporaneous covariates and outcome with the preceding
treatment as $[X,V,Y,A_{\mathrm{prev}}]$, where the static covariates $V$ are repeated along the
time axis and $A_{-1}$ is represented by a zero sentinel. A linear input map
projects each token to the Transformer dimension, after which we add a fixed
sinusoidal positional encoding. A causal attention mask prevents look-ahead,
while a separate padding mask accommodates histories of different lengths.
The output at the last valid time point is passed through a linear layer and a
ReLU activation to obtain the context vector supplied to the propensity head
or conditional generator. The encoder is trained jointly with the head to
which it belongs; there is no separate representation-pretraining step. The
propensity nuisance is one time-shared model evaluated at each decision point,
whereas the stage-specific continuation models and the final target learners
have separate fitted parameters but use the same encoder specification.

\textbf{Hyperparameters.}
We fixed the complete Transformer profile in Table~\ref{tab:transformer-hparams}
before running the experiments. All encoder-specific values were inherited
from the longitudinal Transformer configurations of
\citet{frauen2025modelagnostic} and \citet{hess2026overlapweighted}. We kept
them fixed across datasets, treatment horizons, generative learners, and the
normalizing-flow and diffusion backends; none was tuned for the experiments
in this paper. This choice deliberately separates the comparison of causal
objectives from a search over history-encoder architectures.

\begin{table}[t]
\centering
\caption{Fixed hyperparameters of the Transformer history encoder. None of
these values was tuned specifically for our experiments.}
\label{tab:transformer-hparams}
\small
\setlength{\tabcolsep}{6pt}
\renewcommand{\arraystretch}{1.08}
\begin{tabularx}{\textwidth}{@{}l c X@{}}
\toprule
\textbf{Hyperparameter} & \textbf{Value} & \textbf{Role} \\
\midrule
Transformer blocks & $1$ & Causal encoder block \\
Model dimension $d_{\mathrm{model}}$ & $30$ & Token and attention representation width \\
Attention heads & $3$ & Head width $d_{\mathrm{model}}/3=10$ \\
Feed-forward dimension & $20$ & Hidden width inside the encoder block \\
Dropout probability & $0.1$ & Applied to the input embedding and encoder block \\
Readout dimension & $20$ & Dimension of the context passed to the downstream head \\
Activation & ReLU & Encoder feed-forward and readout nonlinearity \\
Normalization & Post-norm & Layer normalization after residual updates \\
Positional encoding & Fixed sinusoidal & Defined up to the padded history length \\
\bottomrule
\end{tabularx}
\end{table}

\subsection{Generative models}
\label{app:generative-model-implementation}

We implement the framework with conditional normalizing flows and diffusion
models. Their mathematical definitions and population objectives are given
separately in Appendix~\ref{app:generative-model-instantiations}; this subsection
records only the architectures, optimization choices, and hyperparameter
selection used in the experiments. Every generator receives the
20-dimensional context produced by the history encoder above. A continuation
model at relative step $\delta$ is conditioned on $\bar H_{t+\delta}$, whereas a target generator is
conditioned on $\bar H_t$. The intervention is fixed for each fitted model and
is therefore not included again in the conditioning input. All outcome
coordinates are represented on the standardized scale described in
Appendix~\ref{app:dataset-details}; any additional affine normalization internal
to a generator is inverted before returning samples on that evaluation scale.

\subsubsection{Normalizing-flow architecture}
Our implementation uses a standard Gaussian base, followed by a trainable
coordinate-wise affine map and one conditional monotone quadratic-spline
transform. For the scalar MIMIC-III outcome, a one-hidden-layer ELU network
maps the history context to the spline parameters. For the three-dimensional
synthetic outcome, the base is multivariate Gaussian and the final transform
is a conditional autoregressive spline whose one-hidden-layer autoregressive
network produces the parameters for all three coordinates. The latter is a
joint flow and hence models
contemporaneous dependence between outcome coordinates. The spline acts on
the standardized outcome scale. Its symmetric bound was fixed at $3$, whereas
the number of bins was tuned as described below. Optional Gaussian
perturbations of the context and outcome are used only as training
regularizers; validation NLLs and generated samples always use the
unperturbed model.

\subsubsection{Diffusion-model architecture}
We use a linear $\beta$ schedule. The diffusion step is represented by a fixed
sinusoidal embedding and concatenated with the noisy outcome. The denoiser is
a two-layer, per-observation MLP: a one-hidden-layer ELU hypernetwork maps the
history context to all weights and biases of this MLP, whose hidden width is
reported below and whose output has the same dimension as the outcome. Thus,
the complete denoising map adapts to the
patient history. Training samples a diffusion step uniformly, adds Gaussian
noise, and minimizes Eq.~\ref{eq:dm-denoising-loss}, averaged over outcome
coordinates. After fitting, samples are obtained from standard Gaussian noise
by ordinary ancestral DDPM sampling through all $S$ reverse transitions.
Because this model does not provide a tractable exact conditional density, we
use its denoising objective for validation.

\subsubsection{Training within the longitudinal learners}
The history encoder and its generative head are optimized jointly with Adam;
there is no separate encoder pretraining. Training pools all eligible rolling
origins, with continuation models indexed by relative regime stage and each
input restricted to the corresponding absolute-time history. At the terminal
step of generative recursive g-computation, the model is fitted to the observed terminal outcome among
factual rows whose last treatment agrees with the regime. Earlier continuation
models are then fitted backward to detached pseudo-outcomes drawn from the
already fitted, frozen successor model; these pseudo-outcomes are redrawn once
per epoch. We use 16 Monte Carlo draws for NFs and 64 for DMs. The IPTW, RA,
and \OrthoGen target generators are subsequently trained with their respective
objectives on the target-model split. For numerical stability, the cumulative inverse-propensity ratios used by IPTW and \textsc{OrthoGen} are clipped. Their validation objective is checked
after a 10-epoch warm-up, with patience 20 and minimum improvement $10^{-3}$;
the parameters with the best validation objective are restored. For RA and
\OrthoGen, an exponential moving average with decay $0.995$ is updated after each
optimizer step and used for validation and final sampling.

\subsubsection{Hyperparameter selection}
We tuned each generative family separately on each dataset by isolating the
terminal nuisance. Specifically, each candidate fits only the conditional law
of $Y_{t+\tau}$ given $\bar H_{t+\tau-1}$ on factual rows with
$A_{t+\tau-1}=a_{t+\tau-1}$. This is direct generative fitting of the outcome conditional on the
history available at the last treatment step: it does not require backward
pseudo-outcomes and does not fit a propensity model or a second-stage target
learner. Within the nuisance-estimation split, patients were divided into
80\% tuning-training and 20\% validation subsets. The same five replicate
datasets and patient splits were reused for every candidate, and candidates
were ranked by their mean validation generative loss. For NFs, this criterion
was the clean conditional NLL (the joint NLL for the three-dimensional
outcome). For DMs, it was the fixed-seed denoising loss in
Eq.~\ref{eq:dm-denoising-loss}. Neither test outcomes nor counterfactual oracle samples
entered model selection.

Table~\ref{tab:generative-search-space} gives one consolidated search space.
When the dataset-specific grids differed, we report the broader set containing
all candidate values considered across the datasets. Only parameters that were
actually varied are included. The NF spline bound was fixed at $3$. For
DMs, we fixed the sinusoidal step-embedding width at $16$, used a linear
schedule with $(\beta_1,\beta_S)=(10^{-4},0.02)$, a batch size of $64$, and a
maximum of $100$ epochs. Selection was nevertheless performed independently
for the semi-synthetic MIMIC-III, real-world MIMIC-III, and fully synthetic
datasets.

\begin{table}[t]
\centering
\caption{Consolidated hyperparameter search space for the terminal nuisance.
Only parameters that were varied are shown. $\sigma_C$ and $\sigma_Y$ denote
training-only Gaussian noise applied to the history context and standardized
outcome, respectively.}
\label{tab:generative-search-space}
\scriptsize
\setlength{\tabcolsep}{4pt}
\renewcommand{\arraystretch}{1.06}
\begin{tabularx}{\textwidth}{@{}llX@{}}
\toprule
\textbf{Family} & \textbf{Hyperparameter} & \textbf{Values considered} \\
\midrule
NF & Conditioner width & $\{20,32,64\}$ \\
   & Spline bins & $\{5,10,15,20\}$ \\
   & Learning rate & $\{3\!\times\!10^{-4},10^{-3}\}$ \\
   & Batch size & $\{32,64\}$ \\
   & Weight decay & $\{0,10^{-4}\}$ \\
   & Context noise $\sigma_C$ & $\{0,0.01,0.05\}$ \\
   & Outcome noise $\sigma_Y$ & $\{0,0.01,0.05\}$ \\
   & Maximum epochs & $\{30,50,100\}$ \\
\midrule
DM & Diffusion steps $S$ & $\{50,100,150\}$ \\
   & Denoiser hidden width & $\{10,20\}$ \\
   & Context noise $\sigma_C$ & $\{0,0.01\}$ \\
   & Learning rate & $\{3\!\times\!10^{-4},10^{-3},5\!\times\!10^{-3}\}$ \\
   & Weight decay & $\{0,10^{-4}\}$ \\
\bottomrule
\end{tabularx}
\end{table}

The selected configurations are summarized in
Table~\ref{tab:generative-selected-hparams}. The selected generator
architecture was used for all continuation and target models of the
corresponding backend and dataset. The listed optimization values are those of
the terminal nuisance.

\begin{table}[t]
\centering
\caption{Selected generative-model hyperparameters for each dataset.}
\label{tab:generative-selected-hparams}
\scriptsize
\setlength{\tabcolsep}{4pt}
\renewcommand{\arraystretch}{1.08}
\begin{tabularx}{\textwidth}{@{}llXX@{}}
\toprule
\textbf{Dataset} & \textbf{Family} & \textbf{Selected architecture} & \textbf{Selected optimization and regularization} \\
\midrule
Semi-synthetic MIMIC-III
& NF
& Width $20$; $10$ bins
& Learning rate $10^{-3}$; batch $64$; weight decay $10^{-4}$; $100$ epochs; $(\sigma_C,\sigma_Y)=(0.01,0)$ \\
& DM
& $S=100$; denoiser width $20$
& Learning rate $3\!\times\!10^{-4}$; weight decay $10^{-4}$; $\sigma_C=0.01$ \\
\midrule
Real-world MIMIC-III
& NF
& Width $20$; $10$ bins
& Learning rate $10^{-3}$; batch $64$; weight decay $10^{-4}$; $100$ epochs; $(\sigma_C,\sigma_Y)=(0.01,0)$ \\
& DM
& $S=100$; denoiser width $20$
& Learning rate $3\!\times\!10^{-4}$; weight decay $10^{-4}$; $\sigma_C=0.01$ \\
\midrule
Fully synthetic
& NF
& Width $20$; $10$ bins
& Learning rate $10^{-3}$; batch $64$; weight decay $10^{-4}$; $30$ epochs; $(\sigma_C,\sigma_Y)=(0,0)$ \\
& DM
& $S=150$; denoiser width $20$
& Learning rate $3\!\times\!10^{-4}$; weight decay $10^{-4}$; $\sigma_C=0$ \\
\bottomrule
\end{tabularx}
\end{table}

\section{Computational runtime}
\label{app:runtime}

We report wall-clock runtimes to provide a representative measure
of the computational cost of our experiments. Table~\ref{tab:runtime} reports
the average end-to-end runtime of one replicate for the intermediate treatment
horizon $\tau=3$, using the same configurations as in the main experiments.
Each replicate includes data preprocessing, nuisance estimation, fitting of all
four learners (PI, IPTW, RA, and \OrthoGen), and evaluation. The reported
values are averages over five replicates and are rounded to the nearest minute.
All experiments use $n_{\mathrm{train}}=3000$ and were run on an Amazon
SageMaker \texttt{ml.g4dn.xlarge} instance equipped with one NVIDIA T4 GPU
(16\,GB GPU memory), 4 vCPUs, and 16\,GiB of system memory.

\begin{table}[t]
\centering
\caption{Average end-to-end wall-clock runtime per replicate for a
representative treatment horizon $\tau=3$. Each replicate includes nuisance
estimation, fitting of PI, IPTW, RA, and \OrthoGen, and evaluation.}
\label{tab:runtime}
\small
\setlength{\tabcolsep}{7pt}
\renewcommand{\arraystretch}{1.08}
\begin{tabular}{lcc}
\toprule
\textbf{Dataset} & \textbf{NFs} & \textbf{DMs} \\
\midrule
Fully synthetic          & $56$ min  & $122$ min \\
Semi-synthetic MIMIC-III & $110$ min & $176$ min \\
Real-world MIMIC-III     & $27$ min  & $63$ min \\
\bottomrule
\end{tabular}
\end{table}

The computational requirements of the individual learners differ according to
their adjustment strategy. PI requires fitting the sequence of $\tau$
continuation models, whereas IPTW requires one time-shared propensity model
and one target generator. RA adds a target generator on top of the continuation
sequence, while \OrthoGen combines the time-shared propensity model and
continuation sequence with a second-stage target generator. In our
implementation, nuisance estimates are shared across the corresponding
learners; hence, we report the end-to-end runtime of the complete experiment. As expected, the DM instantiations are computationally more demanding
than the NF instantiations due to their iterative denoising procedure.

\end{document}